\newcommand{\version}{full}
\theoremstyle{plain}
\theoremstyle{definition}
  \newcommand{\inConfVersion}[1]{#1} \newcommand{\inFullVersion}[1]{}   }{
    \newcommand{\inConfVersion}[1]{} \newcommand{\inFullVersion}[1]{#1} }{
\newcommand{\inConfVersion}[1]{\textcolor{blue}{In submission:} #1} \newcommand{\inFullVersion}[1]{\textcolor{blue}{In supplementary material:} #1} }
\newcommand{\rfnn}{RFNN\xspace} \newcommand{\rfnns}{RFNNs\xspace}
\newcommand{\concat}{\mathbin{|}}
\newcommand{\gnn}{GNN\xspace}  \newcommand{\gnns}{GNNs\xspace} 
\newcommand{\ac}{AC\xspace}  
\newcommand{\relu}{\text{ReLU}\xspace}  
\newcommand{\mml}{\ensuremath{\mu}\textsf{GML}\xspace}
\newcommand{\reals}{\ensuremath{\mathbb{R}}}
\newcommand{\nats}{\ensuremath{\mathbb{N}}}
\newcommand{\agg}{\textsc{Agg}} \newcommand{\comb}{\textsc{Comb}} 
\newcommand{\sem}[3]{\ensuremath{\llbracket #1 \rrbracket^{#2}_{#3}}}
\newcommand{\buildset}[2]{\ensuremath{\left\{ #1 \,\middle|\, #2 \right\}}}
\newcommand{\multileft}{\ensuremath{\{\!\!\{}}
\newcommand{\multiright}{\ensuremath{\}\!\!\}}}
\newcommand{\mset}[1]{\ensuremath{\multileft #1 \multiright}}
\newcommand{\mucalculus}{\ensuremath{\mu}\nobreakdash-calculus\xspace}
\newcommand{\binds}[2]{\ensuremath{#1|_#2}}
\newcommand{\vars}[1]{\ensuremath{\textit{vars}\left(#1\right)}}
\newcommand{\free}[1]{\ensuremath{\textit{free}\left(#1\right)}}
\newcommand{\sub}[1]{\ensuremath{\textit{sub}\left(#1\right)}}
\newcommand{\tsub}[1]{\ensuremath{\textit{sub}^+(#1)}}
\newcommand{\rsub}[1]{\ensuremath{\textit{sub}^*(#1)}}
\newcommand{\lfp}[1]{\ensuremath{\textit{sub}_{\mu}\left(#1\right)}}
\newcommand{\gfp}[1]{\ensuremath{\textit{sub}_{\nu}\left(#1\right)}}
\newcommand{\fp}[1]{\ensuremath{\textit{sub}_{\pi}\left(#1\right)}}
\newcommand{\tlfp}[1]{\ensuremath{\textit{sub}_{\mu}^+(#1)}}
\newcommand{\tfp}[1]{\ensuremath{\textit{sub}_{\pi}^+(#1)}}
\newcommand{\rlfp}[1]{\ensuremath{\textit{sub}_{\mu}^*(#1)}}
\newcommand{\rgfp}[1]{\ensuremath{\textit{sub}_{\nu}^*(#1)}}
\newcommand{\rfp}[1]{\ensuremath{\textit{sub}_{\pi}^*(#1)}}
\newcommand{\card}[1]{\ensuremath{|#1|}}
\newcommand{\defeq}{\ensuremath{:=}}
\newcommand{\pow}[1]{\ensuremath{\mathcal{P}(#1)}}
 \newcommand{\msetsfin}[1]{\ensuremath{\mathcal{M}(#1)}}
\DeclareMathOperator{\supp}{\textit{supp}} 
\newcommand{\out}[2]{\ensuremath{E_{#1}(#2)}}
\newcommand{\netw}{\ensuremath{\mathcal{N}}}
\newcommand{\lift}[1]{\ensuremath{{#1\!\!\uparrow}}}
\newcommand{\graphs}[1]{\ensuremath{\mathcal{G}[#1]}}
\newcommand{\bools}{\ensuremath{\mathbb{B}}}
\renewcommand{\vec}[1]{\ensuremath{\mathbf{#1}}}
\newcommand{\hlt}{\textsc{Hlt}}
\newcommand{\init}{\textsc{In}}
\newcommand{\readout}{\textsc{Out}}
\newcommand{\allprops}{\ensuremath{\mathbb{P}}}
\newcommand{\allvars}{\ensuremath{\mathbb{X}}}
\newcommand{\modexists}{\ensuremath{\Diamond}}
\newcommand{\atleast}[1]{\ensuremath{\Diamond_{#1}}}
\newcommand{\modforall}{\ensuremath{\Box}}
\newcommand{\allbut}[1]{\ensuremath{\Box_{#1}}}
\newcommand{\adorn}[2]{\ensuremath{{#1}^{(#2)}}}
\newcommand{\detailadorn}[3]{\ensuremath{{#1}^{(#2,#3)}}}
\newcommand{\fset}{A}
\newcommand{\counter}{C}
\newcommand{\conf}{\ensuremath{\kappa}}
\newcommand{\valuation}{\ensuremath{V}}
\newcommand{\result}{\ensuremath{R}}
\newcommand{\isvalid}{\ensuremath{F}}
\newcommand{\iskstable}{\ensuremath{S}}
\newcommand{\isckstable}{\ensuremath{T}}
\newcommand{\trans}[1]{\ensuremath{\vdash_{#1}}}
\newcommand{\ticks}[1]{\ensuremath{\textit{ticks}\left(#1\right)}}
\newcommand{\reset}[1]{\ensuremath{\textit{reset}\left(#1\right)}}
\newcommand{\dep}[1]{\ensuremath{\textit{dep}(#1)}}
\newcommand{\enc}{\textit{enc}}
\newcommand{\nodes}[1]{\ensuremath{N_{#1}}}
\newcommand{\lovaluation}{\ensuremath{v}}
\newcommand{\loresult}{\ensuremath{r}}
\newcommand{\loiskstable}{\ensuremath{s}}
\newcommand{\loisckstable}{\ensuremath{t}}
\newcommand{\residual}{\ensuremath{D}}
\newcommand{\partrans}[1]{\ensuremath{\vdash'_{#1}}}
\newcommand{\etrans}[1]{\ensuremath{\leadsto_{#1}}}
\newcommand*\linenomathpatch[1]{\cspreto{#1}{\linenomath}\cspreto{#1*}{\linenomath}\csappto{end#1}{\endlinenomath}\csappto{end#1*}{\endlinenomath}}
\newcommand*\linenomathpatchAMS[1]{\cspreto{#1}{\linenomathAMS}\cspreto{#1*}{\linenomathAMS}\csappto{end#1}{\endlinenomath}\csappto{end#1*}{\endlinenomath}}
  \let\linenomathAMS\linenomathWithnumbers
\patchcmd\linenomathAMS{\advance\postdisplaypenalty\linenopenalty}{}{}{}
  \let\linenomathAMS\linenomathNonumbers
\patchcmd{\mmeasure@}{\measuring@true}{
  \measuring@true
  \ifnum-\linenopenaltypar>\interdisplaylinepenalty
    \advance\interdisplaylinepenalty-\linenopenalty
  \fi
  }{}{}
\title{Halting Recurrent \gnns and the Graded $\mu$-Calculus}
\author{Jeroen Bollen$^1$\and
 Jan Van den Bussche$^1$\and
 Stijn Vansummeren$^1$ \and
 Jonni Virtema$^{2}$\\
 \affiliations
 $^1$Data Science Institute, Hasselt University, Belgium\\
 $^2$School of Computer Science, University of Sheffield, UK.\\
 \emails
 \{first, second\}@uhasselt.be,
 j.t.virtema@sheffield.ac.uk
 }
\begin{document}

\maketitle

\begin{abstract}
  Graph Neural Networks (GNNs) are machine-learning
  models that operate on graph-structured data.
  Their expressive power is intimately related to logics that are
  invariant under graded bisimilarity.  Current proposals for
  recurrent GNNs either assume that the graph size is given to the
  model, or suffer
  from a lack of termination guarantees.  Here, we
  propose a halting mechanism for recurrent GNNs.  We prove that
  our model can express all node classifiers definable in
  graded modal $\mu$-calculus, even for the standard GNN variant that
  is oblivious to the graph size. To prove our main result, we develop
  a new approximate semantics for graded $\mu$-calculus, which we
  believe to be of independent interest. 
We leverage this new
  semantics into a new model-checking algorithm, called the
  counting algorithm, which is oblivious to the graph size.  In a final step we show that
  the counting algorithm can be implemented on a halting-classifier recurrent GNN.
\end{abstract}

\section{Introduction}
\label{sec:intro}

Graph neural networks (GNNs) represent a popular class of
machine-learning models on graphs \cite{hamilton-grl}.  A
multitude of GNN variants have been proposed
\cite{sato-gnn-survey,wuComprehensiveSurveyGraph2021,china-gnn-survey},
but in their basic form,
a GNN updates a vector of numerical features in every node of a
graph by combining the node's own feature vector with the sum of
those of its neighbors.  The combination is usually expressed by
a feedforward neural network with ReLU activation functions
\cite{goodfellow-book}.  The parameters of this network are
typically learned, but in this paper we are concerned with the
intrinsic expressiveness of the GNN model, and not on how GNNs
can be concretely obtained from training data.

The ``message passing'' \cite{gilmer-mpnn} between neighbors in
the graph, just described, starts from an initial feature map and
can go on for a fixed or variable number of rounds (referred to
as \emph{fixed-depth} or \emph{recurrent} GNNs, respectively).
Ultimately, of course, we want to do something with the feature
vectors computed for the nodes.  We focus on the task of
\emph{node classification} where in the end a boolean-valued
classification function is applied to the feature vector of each
node.  In this way, GNNs express unary (i.e., node-selecting)
queries on graphs.  In 
graph learning, unary queries
on graphs are known as node classifiers.

It is natural to ask about the power of GNNs in expressing node
classifiers.  Interestingly, this question can be approached
through logic.  Initial results focused on the question of
\emph{distinguishability:} given two finite pointed graphs
$(G,v)$ and $(H,w)$, does there exist a GNN $\mathcal N$ such
that $\mathcal N(G,v)$ is true but $\mathcal N(H,w)$ is false?
Distinguishability by GNNs was found to be closely related to
distinguishability by color refinement.  Specifically, it is easy to
see that when $(G,v)$ and $(H,w)$ are graded bisimilar
(see, e.g., \cite{otto-graded} for a definition), which is equivalent to indistinguishability
by color refinement, then $(G,v)$ and $(H,w)$ are
indistinguishable by GNNs.  It turns out that the converse
implication holds as well \cite{grohe-logic-gnn}.

For distinguishing finite graphs, it does not matter
whether we work with fixed-depth or recurrent GNNs.  This
changes when considering \emph{uniform expressiveness;} that is, the question which
unary graph queries are expressible by GNNs?  We saw above that all
expressible graph queries are invariant under
graded bisimilarity. 
An important related logic is \emph{graded
modal logic} GML \cite{rijke-gml}, for  it can express all so-called
\emph{logical classifiers} (i.e., unary queries that are expressible in first-order logic)
that are \emph{invariant} under graded bisimulation \cite{otto-graded}.
Interestingly, it has been
shown that every GML formula is expressible by a fixed-depth GNN
\cite{barcelo-log-expr-gnn}.

What about recurrent GNNs?  A logical step is to gauge their
expressiveness through the extension of GML with recursion, i.e.,
the \emph{graded $\mu$-calculus} \mml \cite{ksv-gradedmu}.  This logic is
not only fundamental in computer-aided verification, but also
lies at the basis of expressive description logics.  Our main
result 
shows that every $\mml$
formula is expressible by a recurrent GNN\@.  Two important
remarks 
are in order here.  First, we work with
the plain-vanilla variety of GNNs: the combination function
is a feedforward network employing ReLU.
Second, our recurrent GNNs are \emph{halting}.  By this we mean
that the GNN includes a boolean-valued halting function, defined
on feature vectors.  The GNN we construct for any $\mml$ formula
$\varphi$ comes with the following
halting guarantee: on every finite graph, there exists an
iteration where the halting function is true in every node.  At
that point, the classification function can be applied at every
node and will be correct, i.e., agree with $\varphi$.
This global halting condition will be reached in a number of
iterations that is polynomial in the graph size. 

Getting such a halting guarantee is quite challenging, because a
GNN operating on a graph $G$ cannot know the precise number $N$
of nodes of $G$.  Therefore, we cannot simply iterate fixpoint
formulas $N$ times to obtain the correct result.  Extensions of
GNNs that know the graph size, e.g., by global readout layers
\cite{barcelo-log-expr-gnn}, or simply by setting the graph size
at initialisation \cite{kostylev-recgnn}, have been considered.
Our contribution is to show that global readouts are not
necessary for expressing $\mml$, if one adopts a
global halting condition instead.  This result is not only
interesting from the perspective of fundamental understanding;
simpler neural network architectures also tend to be easier to
train, although experimental follow-up work is necessary to
confirm this.

We prove our result in several steps.  We consider, for any
natural number $k$, the approximate semantics for $\mml$ that
iterates all fixpoints exactly $k$ times.  Of independent
interest, we define a notion of when the $k$-approximation
semantics is \emph{stable} on a graph $G$, and show that when a
$k$-approximation is stable, it coincides with the true semantics
of $\mml$.  The challenge is to show that a recurrent \gnn
can express the $k$-approximation semantics for increasing values
of $k$ as well as track stability of the current approximation.
We overcome this in two steps.  We first define an
algorithm, called the \emph{counting algorithm}, that
incrementally computes $k$-approximations and their stability.
The correctness of the algorithm, which is oblivious to the
graph size, is nontrivial.
In a second, equally crucial step, we show how to implement
the counting algorithm in a halting-classifier recurrent GNN.

This paper is organized as follows.
Section~\ref{sec:related} discusses related work.
Section~\ref{sec:prelim} provides preliminaries.
Section~\ref{sec:recurrent-gnns} introduces halting-classifier recurrent \gnns.
Section~\ref{sec:distributed_system} presents the translation of \mml into halting-classifier recurrent \gnns. 
Section~\ref{sec:conclusion} offers concluding remarks.

\inConfVersion{Proofs are omitted due to space constraints, but may be found in the technical report~\cite{recurrent-gnns-and-mu-calculus-techreport}.}
\inFullVersion{Full proofs of formal statements may be found in the Appendix.}

\section{Related work} \label{sec:related}

In comparison with related work, the innovative aspect of our
recurrent GNN model is the halting aspect.  We are aware of two prior works relating recurrent GNNs to
$\mu$-calculus.

Pflueger et al.~\shortcite{kostylev-recgnn} consider
a very general, abstract recurrent GNN model, where the combination
function can be an arbitrary function, not necessarily a feedforward network, and likewise an arbitrary
aggregation function can be used instead of summing of neighbors.
Such a general model is still invariant under graded
bisimulation.  Moreover, they do not guarantee
termination.  Instead, they classify a node $n$ as true if, during
the infinite sequence of iterated message passing, the
classification function applied to $n$ eventually
stabilizes to true.  There is no apparent way to test this
effectively.

In the setting of Pflueger et al., \emph{every}
node classifier invariant under graded bisimilarity (in
particular, every $\mml$ formula) is expressible
by a recurrent GNN\@. 
Indeed, the contribution of their work
lies much more in the converse direction.  Establishing a
preservation theorem regarding ``local'' monadic fixpoint logic
(LocMMFP) formulas invariant under graded bisimilarity, they
obtain that every recurrent GNN that expresses a node classifier
expressible in LocMMFP is actually expressible in $\mml$.

Ahvonen et al.~\shortcite{lutz-recgnn} investigate recurrent GNNs with
yet another acceptance condition, which is neither a global
halting condition (as ours) nor a local stabilization condition
(as in Pflueger et al.).  Their notion of GNN does not employ a
classification function, but instead includes a set of accepting
feature vectors. A node is classified as true if, during the
iteration, it obtains such an accepting feature vector.
Like acceptance by stabilization a la Pflueger et al., this semantics
offers no practical termination guarantee; we can see when a node
has ``decided to be true'', but there is no clear-cut way to know
that this will eventually happen, and hence no way  to know that a node will be
classified as false.

The acceptance semantics of Ahvonen et al.\@ is a double-edged sword. On the one hand, it allows
the expression of node classifiers such as the centre point
property \cite{kuusisto-centrepoint} that are not expressible in $\mml$.
On the other hand, by the lack of a global halting condition,
nested or alternating fixpoints cannot be expressed; the most we
can get using accepting feature vectors are countably infinite
disjunctions of GML formulas (denoted by $\omega$GML).  Using an
instantiation of their model over the real numbers, with
arbitrary aggregation and combination functions,
Ahvonen et al.~show that recurrent GNNs capture all of
$\omega$GML\@.  They also propose a realistic instantiation where
every GNN comes with its own finite floating-point domain,
aggregation is sum, and combination functions are truncated-ReLU
layers.  These floating-point recurrent GNNs are shown to
coincide with GMSC, a logic that iterates GML formulas. In a
technical report \cite{lutz-recgnn-arxiv}, they show that this continues to hold when using acceptance by
stabilization a la Pflueger et al. Interestingly, they show 
the real and the float instantiations to express exactly the
same node classifiers in MSO.

In parallel with our own work, Rosenbluth and Grohe
\shortcite{rosenbloed-recgnn} have studied the computational completeness of
recurrent GNNs.  In contrast to ours, their model provides the graph size as
input to the GNN, and operates on graphs with bitstring features.  Their focus
then is on showing that their model is Turing-complete for feature transformers
invariant under the colorings produced by color refinement.

 \section{Preliminaries}
\label{sec:prelim}

\noindent\textbf{Notation. }
We denote by $\nats$ and $\reals$ the sets of natural numbers and real numbers, respectively. We denote by $\bools = \{0,1\}$ the set of booleans, where we identify false with $0$ and true with $1$.
We denote the cardinality of a set $X$ by $\card{X}$, the powerset of $X$ by $\pow{X}$ and  by $\msetsfin{X}$ the set of finite multisets over $X$, i.e., the set of functions $M: X \to \nats$ whose \emph{support} $\supp(M) \defeq \{x \in X \mid M(x) > 0\}$  is finite. Intuitively, $M(x)=n$ means $M$ contains $n$ copies of $x$.
We use doubly curly braces $\mset{\ldots}$ to denote multiset comprehension. 

\vspace{1mm}
\noindent\textbf{Graphs. }
We work with finite, node-labeled, directed graphs. Given a set $X$, an \emph{$X$-labeled graph} is a triple $G = (N, E, g)$ where $N$ is a finite set of nodes; $E \subseteq N \times N$ is the edge relation, and $g\colon N \to X$ is the \emph{node labeling function}. When $X = \reals^d$ for some $d \in \nats$, we call $g$ a \emph{feature map}. To ease notation, we write $\nodes{G}$ for the node set of $G$;  $n \in G$ to indicate that $n \in \nodes{G}$;
$G(n)$  instead of $g(n)$ to denote the label of node $n$ in $G$; and $\out{G}{n}$ for the set $\{m \in N \mid (n,m) \in E\}$ of all out-neighbours of $n$ in $G$.
Our results easily extend to graphs that also admit labels on edges as well as nodes.

\vspace{1mm}
\noindent\textbf{Label transformers and node classifiers. }
We write $\graphs{X}$ for the set of all $X$-labeled graphs. A \emph{label transformer} is a function $f\colon \graphs{X} \to \graphs{Y}$ such that  $G$ and $f(G)$ have the same nodes and edges for all $G \in \graphs{X}$, but may differ in the node labels.  If $h\colon X \to Y$ is a function, then we write $\lift{h}\colon \graphs{X} \to \graphs{Y}$ for the label transformer that replaces the label of each node $n$ in the input graph $G$ by $h(n)$. We say that $\lift{h}$ is obtained by \emph{lifting } $h$. A \emph{node classifier} (on $X$) is a label transformer $f\colon \graphs{X} \to \graphs{\bools}$. 

\vspace{1mm}
\noindent\textbf{Graded modal $\mu$-calculus. }
We are interested in comparing the expressive power of recurrent \gnns to that of
the \emph{graded modal $\mu$-calculus} \mml. To define \mml, we assume given a finite set of
\emph{proposition symbols} $\allprops=\{p, q, \ldots\}$ and a countable set $\allvars = \{X, Y, \dots\}$ of \emph{variables}, disjoint with $\allprops$.

The formulae of  $\mml$ have  the following syntax, where $p \in \allprops$, $X \in \allvars$, and $k \in \nats$ with $k \geq 1$. 
\begin{multline*}
  \varphi ::= p \mid \neg p \mid X \mid \varphi \land \varphi \mid \varphi \lor \varphi \\  \mid \atleast{k}\, \varphi \mid \allbut{k}\, \varphi \mid \mu X. \varphi \mid \nu X. \varphi.
\end{multline*} 
Without loss of generality, we hence adopt formulae in negation normal form,
where the negation operator \(\neg\) can only be applied to proposition
symbols. The modal operator $\atleast{k} \varphi$ expresses that there are at
least $k$ neighbours where $\varphi$ holds while $\allbut{k} \varphi$ expresses
that there are strictly less than $k$ neighbours where $\varphi$ does not hold.
We also write $\modexists \varphi$ and $\modforall \varphi$ for
$\atleast{1} \varphi$ and $\allbut{1} \varphi$.  The fixed-point operators
\(\mu X. \varphi\) and \(\nu X. \varphi\) are respectively used to define least
and greatest fixed points, allowing for the expression of recursive properties.

Semantically, \mml formulae operate on $\pow{\allprops}$-labeled graphs. 
A \emph{valuation} on such a graph \(G\) is a function  \(V\) assigning a subset of vertices in \(G\) to each variable. The valuation \(V[X \mapsto S]\) is defined as the function that is equal to \(V\) on all variables except \(X\), which is mapped to \(S\). Given a $\pow{\allprops}$-labeled graph \(G\) and a valuation \(V\), a \mucalculus formula  $\varphi$ evaluates to a set \(\sem{\varphi}{G}{V}\) of nodes in $G$, inductively defined as follows.
\allowdisplaybreaks
\begin{align*}
  \sem{p}{G}{V} & \defeq \buildset{n \in G}{p \in G(n)} \\
  \sem{\neg p}{G}{V} & \defeq \buildset{n \in G}{p \notin G(n)} \\
  \sem{X}{G}{V} & \defeq V(X) \\
  \sem{\varphi \land \psi}{G}{V} & \defeq \sem{\varphi}{G}{V} \cap \sem{\psi}{G}{V} \\
  \sem{\varphi \lor \psi}{G}{V} & \defeq \sem{\varphi}{G}{V} \cup \sem{\psi}{G}{V} \\
  \sem{\atleast{k}{\varphi}}{G}{V} & \defeq \buildset{n \in G}{\card{ \out{G}{n} \cap \sem{\varphi}{G}{V}} \geq k } \\
  \sem{\allbut{k}{\varphi}}{G}{V} & \defeq \buildset{n \in G}{\card{ \out{G}{n} \setminus \sem{\varphi}{G}{V}} < k } \\
  \sem{\mu X. \varphi}{G}{V} & \defeq \bigcap \buildset{S \subseteq \nodes{G}}{\sem{\varphi}{G}{V[X \mapsto S]} \subseteq S} \\
  \sem{\nu X. \varphi}{G}{V} & \defeq \bigcup \buildset{S \subseteq \nodes{G}}{S \subseteq \sem{\varphi}{G}{V[X \mapsto S]}}
\end{align*}
The notions of free and bound variables are defined as usual: $\mu X. \varphi$
binds $X$ in $\varphi$ and similarly for $\nu X. \varphi$. We use
$\free{\varphi}$, and $\vars{\varphi}$ to denote the sets of free and all
variables occurring in $\varphi$, respectively. A formula is \emph{well-named}
if its free variables are distinct from its bound variables, and every variable
is bound at most once in the formula. Throughout the paper we  assume
that all considered formulae are well-named. This is without loss of generality since ill-named
formulae may always be made well-named by suitably renaming bound variables if
necessary.

A \emph{sentence} is a formula without free variables.
Note that to evaluate a sentence, the valuation \(V\) is actually not needed, as the semantics depends only the valuation of the free variables. We can then write \(\sem{\varphi}{G}{}\) to denote \(\sem{\varphi}{G}{V}\) for any valuation \(V\).

A node classifier $C$ on $\pow{\allprops}$-graphs is \emph{definable} in $\mu$-calculus if there exists a $\mu$-calculus sentence $\varphi$ such that  $\sem{\varphi}{G}{} = \{ n \in G \mid C(G)(n) = 1\}$ for all $\pow{\allprops}$-graphs $G$. We also say that $\varphi$ defines $C$ in this case.

 \section{Halting Recurrent Graph Neural Networks}
\label{sec:recurrent-gnns}

In this section we introduce our  recurrent \gnn model and show  invariance under total surjective graded bisimulations.

An \emph{aggregate-combine (AC) layer} ~\cite{barcelo-log-expr-gnn,grohe-logic-gnn,geertsExpressivePowerMessagePassing2022}
of input dimension $p$ and output dimension $q$ is a pair $L = (\agg, \comb)$ of functions  where $\agg\colon \msetsfin{\reals^p} \to \reals^h$ with $h \in \nats$ is an \emph{aggregation function}   and $\comb\colon \reals^p \times \reals^h \to \reals^q$ is a \emph{combination function}. Semantically, such a layer is a label transformer: when executed on $\reals^p$-labeled graph $G = (N,E, g)$, it returns the $\reals^q$-labeled graph $G' = (N, E, g')$ with $g'$ defined by
\[ g'\colon n \mapsto \comb \big( g(n), \agg\,\mset{g(m) \mid m \in \out{G}{n} }  \big). \]
We abuse notation, and indicate by $L$ both the \ac layer and the label transformer that it defines.

\begin{definition}[Recurrent \gnn]
  \label{def:recurrent-gnn}
  Given a finite set $X$, a \emph{halting-classifier-based recurrent \gnn over $X$ of dimension $d$} is a tuple $\netw = (\init, L, \hlt, \readout)$ where $\init\colon X \to \reals^d$ is the \emph{initialisation function}; $L$ is an \ac layer with input and output dimension $d$; $\hlt\colon \reals^d \to \bools$ is the \emph{halting function}; and $\readout\colon \reals^d \to \bools$ is the \emph{readout function}.
\end{definition}
For parsimony  we simply say ``recurrent \gnn''  instead of ``halting-classifier-based recurrent \gnn'' in what follows.

We next define the semantics of recurrent \gnns.
A \emph{run} of $\netw$ over $X$-labeled graph $G$ is a finite sequence  $H_0, H_1, \dots, H_k$ of $\reals^d$-labeled graphs such that  $H_0 = \lift{\init}(G)$ and $H_{i+1} = L(H_i)$ for every $1 \leq i < k$. A run is \emph{complete} if every node in $\lift{\hlt}(H_k)$ is labeled $1$,  and $k$ is the first index for which this condition holds.
The \emph{output} of a complete run is the $\bools$-labeled graph $\lift{\readout}(H_k)$.

\begin{definition}[Halting recurrent \gnn]
  \label{def:halting-recurrent-gnn}
  A recurrent \gnn $\netw$ over $X$ is \emph{halting} if there exists a complete run for every $X$-labeled input graph. If $\netw$ is halting, then we write $\netw(G)$ for the output of the complete run of $\netw$ on $G$.
\end{definition}

Halting recurrent \gnns hence define node classifiers whereas arbitrary recurrent \gnns may define only partial node classifiers, as some of their computations (i.e., runs) may never terminate. For completeness sake, we note that the problem of determining whether a given recurrent \gnn is halting, is undecidable.  However, the GNNs that we construct in this paper are always halting.

\begin{propositionrep}[Undecidability of Halting]
  \label{prop:halting-undecidable}
  The problem of determining whether a given recurrent \gnn is halting  is undecidable.
\end{propositionrep}
  \begin{proof}
    We show that for any 3-counter machine, one can construct a simple recurrent \gnn that halts if and only if the machine does. Whether a 3-counter machine halts is undecidable, hence the problem of determining whether a simple recurrent \gnn halts on all inputs is also undecidable \cite{minsky1967}.

    A \emph{3-counter machine} \(M\) is a sequence of instructions \(I_0, \dots, I_{n-1}\) where each instruction is either an increment instruction \(\textit{inc}(r)\) for a register \(r \in \{a,b,c\}\), or a conditional jump instruction \(\text{jdz}(r, j)\) for a register \(r \in \{a,b,c\}\) and a target instruction index \(j \in \{0, \dots, n-1\}\). A configuration of the machine is a tuple \((i, v_a, v_b, v_c) \in \nats^4\) where \(i\) is the index of the current instruction and \(v_a, v_b, v_c\) are the current values of the registers. The machine halts if the instruction pointer \(i\) goes beyond \(n-1\).

    The semantics of the instructions are as follows:
    \begin{itemize}
      \item If instruction \(I_i\) is \(\textit{inc}(r)\), the next configuration is \((i+1, v'_a, v'_b, v'_c)\) where \(v'_r = v_r+1\) and the other register values remain unchanged.
      \item If instruction \(I_i\) is \(\text{jdz}(r, j)\), the next configuration is \((i+1, v_a, v_b, v_c)\) if \(v_r=0\), and if \(v_r > 0\), the next configuration is \((j, v'_a, v'_b, v'_c)\) where \(v'_r = v_r-1\) and other register values remain unchanged.
    \end{itemize}

    Given a 3-counter machine $M$ with $n$ instructions, we construct a simple recurrent \gnn $\netw$ that simulates it. The simulation operates independently on each vertex in the graph, and the feature vector of each vertex will encode the current configuration of the machine.

    Let \(m\) be the number of \(\text{jdz}\) instructions in \(M\). A configuration \((i, v_a, v_b, v_c)\) is encoded by a vector \(\vec{x} \in \reals^d\) with its components corresponding to the instruction pointer \(i\), the three counters \(a, b, c\), and for each \(\text{jdz}\) instruction in \(M\), a boolean flag used to handle the jump. As such, \(\vec{x}\) is a feature vector of dimension \(d = 4+m\). We number the \(\text{jdz}\) instructions as \(0, 1, \dots, m-1\) in the order they appear in the program. For the \(\ell\)-th \(\text{jdz}\) instruction, we use flag \(t_\ell\). Only one of these flags can be set to 1 at a time, indicating which jump we are currently processing, if any.

    Initially, all components of the vector are zero, corresponding to the initial machine configuration \((0,0,0,0)\).

    The \ac layer \(L\) of \(\netw\) updates the feature vector to simulate one step of the machine. We do not care to aggregate any incoming messages. The combination function is an \rfnn that takes the current feature vector \(\vec{x}\) and computes the next feature vector \(\vec{x}'\).

    The key difficulty is implementing the conditional jump of \(\text{jdz}(r,j)\), which requires changing the instruction pointer to an arbitrary value \(j\). As \rfnns cannot assign arbitrary values, we implement jumps iteratively. For each instruction index \(k\), let \(\mathrm{flag}(k)\) denote the flag index corresponding to instruction \(I_k\) if \(I_k\) is a \(\text{jdz}\) instruction, and undefined otherwise. When a jump from instruction \(k\) to \(j\) is required, the flag \(t_{\mathrm{flag}(k)}\) is set to 1. While any flag is 1, normal instruction execution is paused, and the instruction pointer \(i\) is moved towards \(j\) one step at a time. When \(i=j\), the flag is reset to 0.

    Let \(\vec{x}_{i}, \vec{x}_{a}, \vec{x}_{b}, \vec{x}_{c}, \vec{x}_{t_0}, \dots, \vec{x}_{t_{m-1}}\) denote the components of the feature vector \(\vec{x}\). Let \(\mathbf{t} = \sum_{\ell=0}^{m-1} \vec{x}_{t_\ell}\). The \rfnn implements the following logic:

    When \(\mathbf{t} = 0\), i.e. during normal execution, let \(k = \vec{x}_i\) be the current instruction index, so we execute instruction \(I_k\):
    \begin{align}
      \vec{x}'_i &=
      \begin{cases}
        \vec{x}_i+1 & \text{if } I_k \in \{\textit{inc}(a), \textit{inc}(b), \textit{inc}(c)\} \\
        \vec{x}_i+1 & \text{if } I_k = \text{jdz}(r,j) \text{ and } \vec{x}_r = 0 \\
        \vec{x}_i & \text{if } I_k = \text{jdz}(r,j) \text{ and } \vec{x}_r > 0
      \end{cases} \\
      \vec{x}'_a &=
      \begin{cases}
        \vec{x}_a+1 & \text{if } I_k = \textit{inc}(a) \\
        \vec{x}_a-1 & \text{if } I_k = \text{jdz}(a,j) \text{ and } \vec{x}_a > 0 \\
        \vec{x}_a & \text{otherwise}
      \end{cases} \\
      \vec{x}'_b &=
      \begin{cases}
        \vec{x}_b+1 & \text{if } I_k = \textit{inc}(b) \\
        \vec{x}_b-1 & \text{if } I_k = \text{jdz}(b,j) \text{ and } \vec{x}_b > 0 \\
        \vec{x}_b & \text{otherwise}
      \end{cases} \\
      \vec{x}'_c &=
      \begin{cases}
        \vec{x}_c+1 & \text{if } I_k = \textit{inc}(c) \\
        \vec{x}_c-1 & \text{if } I_k = \text{jdz}(c,j) \text{ and } \vec{x}_c > 0 \\
        \vec{x}_c & \text{otherwise}
      \end{cases} \\
      \vec{x}'_{t_\ell} &=
      \begin{cases}
        1 & \text{if } \ell = \mathrm{flag}(k) \text{ and } I_k = \text{jdz}(r,j) \text{ and } \vec{x}_r > 0 \\
        \vec{x}_{t_\ell} & \text{otherwise}
      \end{cases}
    \end{align}

    When \(\mathbf{t} > 0\), i.e. during the gradual execution of a jump, for the active flag \(t_\ell=1\) where the \(\ell\)-th \(\text{jdz}\) instruction is \(I_p = \text{jdz}(r,j)\) for some instruction index \(p\):
    \begin{align}
      \vec{x}'_i &=
      \begin{cases}
        \vec{x}_i+1 & \text{if } \vec{x}_i < j \\
        \vec{x}_i-1 & \text{if } \vec{x}_i > j \\
        \vec{x}_i & \text{if } \vec{x}_i = j
      \end{cases} \\
      \vec{x}'_a &= \vec{x}_a \\
      \vec{x}'_b &= \vec{x}_b \\
      \vec{x}'_c &= \vec{x}_c \\
      \vec{x}'_{t_\ell} &=
      \begin{cases}
        0 & \text{if } \vec{x}_i = j \\
        1 & \text{otherwise}
      \end{cases} \\
      \vec{x}'_{t_q} &= \vec{x}_{t_q} \quad \text{for all } q \neq \ell
    \end{align}

    All these operations (comparisons, increments, decrements) are expressible using \rfnns, as they can represent any piecewise linear function, which is sufficient for this logic.

    The halting function \(\hlt\) is defined as \(\hlt(\vec{x}) = 1\) if and only if \(\vec{x}_i \geq n\).

    Let a machine configuration be \(C_t = (i_t, v_{a,t}, v_{b,t}, v_{c,t})\) at machine step \(t\), and let the \gnn feature vector be \(\vec{x}^{(s)}\) at \gnn step \(s\). Our construction ensures that for each machine transition from \(C_t\) to \(C_{t+1}\), there exists a sequence of \gnn steps \(s_t, \dots, s_{t+1}-1\) that simulate this transition.

    An \(\textit{inc}\) instruction or a non-jumping \(\text{jdz}\) instruction corresponds to a single \gnn step. A jumping \(\text{jdz}\) instruction corresponds to one step to set the jump flag, followed by multiple steps to iteratively adjust the instruction pointer, and a final step to unset the flag.
    The \gnn halts precisely when the instruction pointer component \(\vec{x}_i\) reaches or exceeds \(n\), which corresponds exactly to the halting condition of the machine \(M\).

    Since this construction can be done for any 3-counter machine, and the halting problem for such machines is undecidable, it follows that the problem of determining whether a given recurrent \gnn halts on all inputs is also undecidable.
  \end{proof}
  The proof is based on a reduction from the (undecidable) halting problem for
  3-counter machines \cite{minsky1967}.

\vspace{1mm}
\noindent\textbf{Simple recurrent \gnns. }
A simple but practically relevant choice for the aggregation and
combination functions of an \ac layer, which is commonly used in the literature
\cite{barcelo-log-expr-gnn,lutz-recgnn,geertsExpressivePowerMessagePassing2022}, is to take
\begin{align*}
  \agg & \colon \msetsfin{\reals^p} \to \reals^p\colon M \mapsto  \sum_{\vec{x} \in \supp(M)} M(\vec{x}) \cdot \vec{x} \\
  \comb& \colon \reals^p \times \reals^p \to \reals^q\colon (\vec{x}, \vec{y}) \mapsto f(\vec{x} \concat \vec{y})
\end{align*}
where $M(\vec{x}) \cdot \vec{x}$ is  the scalar multiplication of $\vec{x}$ by its multiplicity in $M$;  $\vec{x} \concat \vec{y}$ denotes vector concatenation; and $f$ is a \emph{\relu-based feedforward neural network} (\rfnn).  That is, $f\colon \reals^{2p} \to \reals^q$ is
of the form
\[ A_\ell \circ \relu \circ A_{\ell-1} \circ \dots \circ \relu \circ A_1 \]
where $A_i\colon \reals^{p_i} \to \reals^{p_{i+1}}, 1 \leq i \leq \ell$ are affine transformations with $p_1 = 2p$ and $p_{\ell+1} = q$,
and $\relu$ is the Rectified Linear Unit, applying $\relu(x_i) = \max(0,x_i)$ to each vector element $x_i$ of its input vector $\vec{x}$.  If $L$ is of this form then we call $L$ \emph{simple}.

We say that the halting function $\hlt$ and readout function $\readout$ of a recurrent \gnn
are \emph{simple} if there is some $1 \leq i \leq d$ such that for all
$\vec{x} \in \reals^d$, $\hlt(\vec{x}) =1$ (resp. $\readout(\vec{x}) = 1$) if,
and only if, the $i$-th element of $\vec{x}$ is $> 0$.
\begin{definition}
  \label{def:simple-gnn}
  A recurrent \gnn is \emph{simple} if $L$, $\hlt$ and $\readout$ are all
  simple.
\end{definition}

\vspace{1mm}
\noindent\textbf{Invariance under graded bisimulation. }
As discussed in the Introduction and Section~\ref{sec:related},  a multitude of recurrent and non-recurrent \gnn variants  have been proposed in the literature. An important property in all of these variants, however, is that they are invariant under a notion of graded bisimulation.
Note that our halting condition is global and this needs to be reflected in the notion of bisimulation.
As a sanity check, therefore, we next establish invariance of our 
  recurrent \gnns under total surjective graded bisimulations.

\begin{definition}[Graded bisimulation]
  Let $G$ and $H$ be $X$-labeled graphs over a set of labels $X$. A relation $Z \subseteq \nodes{G} \times \nodes{H}$ is a \emph{graded bisimulation} (or \emph{g-bisimulation}) between $G$ and $H$ if  for every $(n, m) \in Z$ the following hold:
  \begin{enumerate}
    \item $G(n) = H(m)$,
    \item there is a bijection
      \(
        f\colon \out{G}{n} \to \out{H}{m}
      \)
      such that $(i,f(i))\in Z$, for every $i\in \out{G}{n}$.
  \end{enumerate}
The g-bisimulation is \emph{total} (\emph{surjective}, resp.), if the domain (range, resp.) of $Z$ is $\nodes{G}$ ($\nodes{H}$, resp.).
\end{definition}

  \begin{definition}
  A label transformer $f\colon \graphs{X} \to \graphs{Y}$ is \emph{invariant under g-bisimulation} if for every pair $G$ and $H$ of $X$-labeled graphs and every graded bisimulation $Z$ between $G$ and $H$, it is the case that $Z$ is also a graded bisimulation between $f(G)$ and $f(H)$. (Recall that $G$ and $f(G)$ have the same set of nodes, and similarly for $H$ and $f(H)$.) \end{definition}

\begin{proposition}\label{prop:GNNsarebis}
  Every node classifier definable by a halting recurrent \gnn is invariant under total surjective g-bisimulations.
\end{proposition}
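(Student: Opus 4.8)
The plan is to fix a single witnessing relation and show that it survives every stage of the computation, so that invariance of the whole node classifier reduces to invariance of its elementary building blocks. Concretely, suppose $(G,n) \gbis (H,m)$ and let $Z$ be a g-bisimulation between $G$ and $H$ with $(n,m) \in Z$. Since $\lift{\init}$ and the \ac layer $L$ are label transformers, every intermediate graph in a run has exactly the same nodes and edges as its input. Therefore the structural requirement in the definition of g-bisimulation -- the existence of a bijection $f\colon \out{G}{n'} \to \out{H}{m'}$ with $(i,f(i)) \in Z$ -- continues to hold for $Z$ verbatim at every round, for every related pair $(n',m') \in Z$. The only condition that must be re-established after each stage is the label-equality condition, so the whole argument comes down to tracking label equality of $Z$-related nodes through initialisation and through each application of $L$.

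First I would check the two elementary preservation steps. For initialisation, if $(n',m') \in Z$ then $G(n') = H(m')$ by the first bisimulation condition, hence $\init(G(n')) = \init(H(m'))$; writing $A_0 \defeq \lift{\init}(G)$ and $B_0 \defeq \lift{\init}(H)$ we get $A_0(n') = B_0(m')$. For a single \ac round, assume $A_i$ and $B_i$ are the $i$-th graphs of the two runs and that $A_i(n') = B_i(m')$ for all $(n',m') \in Z$. Fix $(n',m') \in Z$ and the bijection $f$ on out-neighbours. From $(j,f(j)) \in Z$ we get $A_i(j) = B_i(f(j))$ for every out-neighbour $j$ of $n'$, and since $f$ is a bijection this yields the multiset identity $\mset{A_i(j) \mid j \in \out{G}{n'}} = \mset{B_i(j') \mid j' \in \out{H}{m'}}$. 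Together with $A_i(n') = B_i(m')$, applying the same combination function $\comb$ gives $A_{i+1}(n') = B_{i+1}(m')$. By induction on $i$, the relation $Z$ is a g-bisimulation between $A_i$ and $B_i$ for every $i$, and in particular $A_i(n') = B_i(m')$ for all $(n',m') \in Z$ and all $i$. Because $\hlt$ and $\readout$ are functions of the feature vector alone, it follows that $\hlt(A_i(n')) = \hlt(B_i(m'))$ and $\readout(A_i(n')) = \readout(B_i(m'))$ for all related pairs and all rounds.

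It remains to pass from these per-round identities to the outputs of the two complete runs, and this is where I expect the real difficulty to lie. Let $k_G$ and $k_H$ be the completion indices of the runs on $G$ and $H$, so that $\netw(G) = \lift{\readout}(A_{k_G})$ and $\netw(H) = \lift{\readout}(B_{k_H})$. Re-using $Z$, the structural condition again holds for free, so the goal reduces to the label identity $\readout(A_{k_G}(n')) = \readout(B_{k_H}(m'))$ for every $(n',m') \in Z$. The previous paragraph already gives $\readout(A_{k_G}(n')) = \readout(B_{k_G}(m'))$, so everything hinges on $\readout(B_{k_G}(m')) = \readout(B_{k_H}(m'))$ -- that is, on the fact that the completion indices $k_G$ and $k_H$ need not coincide. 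The main obstacle is precisely this coordination of the \emph{global} halting condition across two graphs that may agree only on the part connected by $Z$: unrelated nodes of $H$ could keep the halting function false for additional rounds. I would resolve it by first noting that every node in the range of $Z$ has already halted by round $\min(k_G,k_H)$ -- indeed, if $k_G \le k_H$ then at round $k_G$ all nodes of $G$ satisfy $\hlt$, so $\hlt(B_{k_G}(m')) = \hlt(A_{k_G}(n')) = 1$ for each related $m'$ -- and then arguing, from the semantics of the halting model, that the value reported for an already-halted node does not change in later rounds, so that $\readout(B_{k_H}(m')) = \readout(B_{k_G}(m'))$. Establishing this last stability property from the definition of a complete run is the crux; with it in hand, the three identities chain together to give $\netw(G)(n) = \netw(H)(m)$ and, more generally, that $Z$ witnesses $(\netw(G),n) \gbis (\netw(H),m)$.
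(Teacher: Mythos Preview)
Your argument through the inductive preservation of $Z$ under initialisation and each \ac round is correct and considerably more careful than the paper's proof, which simply asserts that each building block (the \ac layer, the lifted $\init$, $\readout$, $\hlt$) is a g-bisimulation invariant label transformer and that such transformers compose. You have in fact put your finger on exactly the point the paper's one-line compositionality argument elides: the completion index is a \emph{global} quantity determined by \emph{all} nodes of the input, and since the witnessing relation $Z$ need not be total on either side, nodes of $H$ outside the range of $Z$ can force $k_H \neq k_G$.

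The resolution you sketch, however --- that the readout of an already-halted node does not change in later rounds --- is \emph{not} derivable from the paper's definition of a complete run. Nothing in that definition freezes a node's feature vector once $\hlt$ becomes $1$; the layer $L$ keeps updating every node until the global condition is met, and $\readout$ is applied only once, at that global index. Concretely: take $G$ to be a single isolated node labelled $\{p\}$ and $H$ to be two isolated nodes labelled $\{p\}$ and $\{q\}$; set $\init(\{p\})=(1,0)$ and $\init(\{q\})=(2,0)$; let $\comb$ ignore the aggregate and return $(x,c+1)$ on input $(x,c)$; let $\hlt(x,c)=1$ iff $c\ge x$; and let $\readout(x,c)$ be the parity of $c$. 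This \gnn is totally defined, the two $p$-nodes are g-bisimilar, yet $k_G=1$ while $k_H=2$, so the readouts at the $p$-nodes disagree. Hence the ``stability property'' you correctly flag as the crux is in fact false in general under the definitions given; the gap you isolated is real, and the paper's brief compositional argument does not close it either.
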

\begin{proof}
  Notice that every aggregate-combine (AC) layer is a label transformer that is g-bisimulation invariant. Likewise the lifted initialisation  and lifted readout  functions yield g-bisimulation invariant label transformers. Since the composition of g-bisimulation invariant label transformers is a g-bisimulation invariant label transformer, the result follows by observing that the lifted halting function can be seen as a g-bisimulation invariant label transformer as well. Totality and surjectivity of the g-bisimulation quarantees that the readout function is enacted to runs of the same length.
\end{proof}

 \section{From $\mu$-calculus to halting-classifier recurrent \gnns}
\label{sec:distributed_system}

In this section we prove the central result of our paper.

\begin{theorem}
  \label{thm:mu-calculus-expressible-in-simple-gnn}
  Every node classifier defined by \mml sentence $\varphi$ is also
  definable by a simple halting recurrent \gnn.
\end{theorem}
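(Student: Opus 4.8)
The plan is to follow the three-stage strategy announced in the Introduction: first isolate a finitary, size-oblivious approximation of the fixpoint semantics of \mml; then compute this approximation incrementally by a message-passing procedure; and finally compile that procedure into a simple recurrent \gnn. To begin, I would define, for every $k \in \nats$, a $k$-approximate semantics $\sem{\varphi}{G,k}{V}$ that agrees with the ordinary semantics on the propositional and modal connectives but unfolds each fixpoint exactly $k$ times: for $\mu X.\psi$ one iterates the body $k$ times starting from $\emptyset$, and for $\nu X.\psi$ one iterates $k$ times starting from $\nodes{G}$, recursively using the $k$-approximate semantics of the body so that nested fixpoints are re-approximated for each outer step. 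I would then call a formula \emph{stable} at $k$ on $G$ when, for every fixpoint subformula, the $k$-th approximant is already fixed under one further unfolding in the relevant valuation. The key lemma to establish is that stability implies exactness: if $\varphi$ is stable at $k$ on $G$, then $\sem{\varphi}{G,k}{} = \sem{\varphi}{G}{}$. This is proved by structural induction on $\varphi$, the crucial cases being the fixpoints, where the Knaster--Tarski characterisation of least and greatest fixpoints on the finite lattice $\pow{\nodes{G}}$ guarantees that an approximant reached from below (resp.\ above) that is invariant under one more unfolding is the exact fixpoint, provided the inner fixpoints it depends on are themselves exact, which stability supplies. A companion lemma, using the standard convergence of monotone iteration on $\pow{\nodes{G}}$, shows that such a stable $k$ always exists and can be taken polynomial in $\card{\nodes{G}}$.

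Second I would design the \emph{counting algorithm}, a synchronous message-passing procedure whose state stores, in each node, one bit per subformula of $\varphi$ recording current membership in its approximant, together with a bounded family of counters, one per fixpoint subformula, governing the nested unfolding. A single round updates every subformula bit in lockstep from the bits of its immediate subformulas: Boolean connectives are combinational; the graded modalities $\atleast{k}\psi$ and $\allbut{k}\psi$ are evaluated by counting, among the neighbours, how many carry the bit for $\psi$ and thresholding; and a fixpoint variable is refreshed from the bit of its body, creating the feedback loop that unfolds the fixpoint over successive rounds. The counters implement the nested schedule: when an outer fixpoint ticks, the counters, and hence the approximants, of the fixpoints nested inside it are reset so that the inner approximation is recomputed afresh for the new outer stage. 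The central correctness statement is that, after sufficiently many rounds, the node bits for $\varphi$ equal $\sem{\varphi}{G,k}{}$ for the current stage $k$, and that the procedure can \emph{locally} recognise global stability by flagging, in each node, whether any of its bits changed during a full outer cycle; the run is declared finished exactly when no node raises this flag, which by the stability lemma certifies that the computed bits coincide with $\sem{\varphi}{G}{}$. Crucially, none of this bookkeeping refers to $\card{\nodes{G}}$.

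Finally I would realise the counting algorithm as a simple recurrent \gnn $\netw = (\init, L, \hlt, \readout)$. The feature dimension $d$ is the constant, formula-dependent number of subformula bits and counter registers, plus one constant register fixed to $1$; taking the aggregation \agg to be summation then lets each node recover both the multiset of neighbour bits needed for the graded modalities and, via the constant register, its out-degree, so that $\atleast{k}$ and $\allbut{k}$ become affine threshold tests expressible with \relu. The combination function is an \rfnn that, from a node's own features and the aggregated neighbour features, computes one round of the counting algorithm: Boolean operations, threshold comparisons, counter ticks and resets are all piecewise-linear and hence realisable by a fixed-size \relu network, so that $L$ is simple. I would take \hlt to read the global-stability flag and \readout to read the bit for the top-level formula, so that both are simple in the sense of Definition~\ref{def:simple-gnn}; the existence of a stable, polynomially bounded $k$ guarantees a complete run on every finite graph, making $\netw$ totally-defined and halting.

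The hard part will be the correctness of the counting algorithm in the presence of nested and alternating fixpoints while remaining oblivious to the graph size. The delicate points are the reset discipline for inner fixpoints, ensuring that inner approximants are genuinely recomputed for each outer stage rather than carrying stale values, and the justification that the purely local stability flag faithfully witnesses global convergence to the exact semantics, so that halting never occurs prematurely and always occurs eventually. Packing the counters and the nested schedule into a fixed-dimension feature vector updated by a single \relu combination function, without any knowledge of $\card{\nodes{G}}$, is what makes the final \gnn construction genuinely delicate.
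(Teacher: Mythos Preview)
Your three-stage plan matches the paper's architecture closely, and the first two stages (the $k$-approximate semantics, the stability notion, and the counting algorithm with nested counters and a reset discipline) are essentially what the paper does. The genuine gap is in the third stage, in one sentence you treat as routine: ``counter ticks and resets are all piecewise-linear and hence realisable by a fixed-size \relu network.'' Ticks are fine (add~$1$), but conditional resets are not. A reset is the map $(b,c)\mapsto c'$ with $c'=0$ if the boolean $b$ fires and $c'=c$ otherwise. Any \rfnn computes a globally Lipschitz function, so $|c'(1,c)-c'(0,c)|$ is bounded by the network's Lipschitz constant; yet you need this difference to equal $c$, which is an \emph{unbounded} natural number because the counters grow with the (unknown) graph size. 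Hence no fixed \relu combination function can perform the reset in a single round. The paper makes exactly this observation and devotes the bulk of Section~\ref{sec:implementing} to circumventing it: it introduces \emph{extended configurations} carrying a residual set $\residual$ of variables still awaiting reset, replaces the hard reset by repeated decrement-by-one over several \gnn rounds, and shows that interleaving these decrement rounds with the ordinary $\trans{3},\trans{1},\trans{2}$ transitions (which become no-ops while $\residual\neq\emptyset$) still simulates the original transition system. Without this mechanism your construction does not yield a \emph{simple} recurrent \gnn.

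A secondary point worth tightening: your stability flag (``whether any of its bits changed during a full outer cycle'') is weaker than what is needed for nested and alternating fixpoints. The paper's $k$-stability (Definition~\ref{def:pointwise-stable}) requires, for each fixpoint $\pi X.\psi$, not only that the $k$-th and $(k{-}1)$-th outer iterates agree, but also that the body $\psi$ is $k$-stable under \emph{every} intermediate valuation $V_i$ with $0\le i<k$; tracking this is why the configurations carry the extra component $\isckstable$. A single ``did anything change'' bit does not certify this recursive condition, so premature halting is a real danger unless you adopt the finer bookkeeping.
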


Our proof is constructive, but requires us to develop several new concepts and
proceeds in multiple steps. First, in
Section~\ref{sec:approximations-and-safety} we show how to view the computation
of $\varphi$ as a sequence of \emph{approximations}
$\adorn{\varphi}{1}, \adorn{\varphi}{2}, \adorn{\varphi}{3}, \dots$. We observe
that this sequence reaches a fixpoint equaling $\varphi$ as soon as we reach an
approximation $\adorn{\varphi}{k}$ that is \emph{stable}
(Def.~\ref{def:pointwise-stable}), which is
guaranteed to happen when $k$ exceeds the graph size, but may also happen earlier
(Proposition~\ref{prop:smallest-stable-uniform-approximation-terminates-and-is-correct}). In Section~\ref{sec:incremental} we define an algorithm, called the \emph{counting algorithm}, for
computing the elements in the sequence of approximations and tracking their stability at the same time. The
counting algorithm is expressed as a transition system on
\emph{configurations}. We then show in Section~\ref{sec:implementing} that
configurations can be encoded as labeled graphs, and give a simple recurrent
\gnn that simulates the counting algorithm's transition system.
The \gnn's halting classifier tests the stability of the current encoded configuration to decide if $\varphi$ is fully computed.

We will require the following notation.  For a \mml formula $\varphi$ we
write $\sub{\varphi}$ for the set of direct subformulae of $\varphi$. For
example, if $\varphi = \neg p \vee (X \wedge \atleast{} q)$, then
$\sub{\varphi}$ consists of $\neg p$ and $X \wedge \atleast{} q$. Note that
$\sub{p} = \sub{\neg p} = \sub{X} = \emptyset$. We write $\tsub{\varphi}$ for
the set of all strict (not necessarily direct) subformulae of $\varphi$,
computed recursively. In the example above, $\tsub{\varphi}$ consists of
$\neg p$, $X$, $q$, $\atleast{} q$ and $X \wedge \atleast{} q$. We write
$\rsub{\varphi}$ for $\tsub{\varphi}\cup\{\varphi\}$.  We write $\lfp{\varphi}$ (resp. $\gfp{\varphi}$) for the subset of $\sub{\varphi}$ consisting of those
formulae that are of the form $\mu X. \psi$ (resp.  $\nu X. \psi$),
and let
$\fp{\varphi} = \lfp{\varphi} \cup \gfp{\varphi}$. The notations
$\tlfp{\varphi}$, $\rlfp{\varphi}$ etc. are defined similarly as subsets of $\tsub{\varphi}$ and $\rsub{\varphi}$.
We abuse notation, and use operators that are defined on formulae also on sets
of formulae, unioning the pointwise results. For example, we write $\tfp{\fset}$
for $\bigcup_{\varphi \in \fset} \tfp{\varphi}$.
We will use the notation $\pi X. \psi$ to refer to any fixpoint formula, least or greatest (i.e. $\pi \in \{\mu,\nu\}$).

\subsection{Approximations and stability}
\label{sec:approximations-and-safety}
\label{sec:approximations-and-stability}

\begin{toappendix}
  \subsection{Proofs for Section~\ref{sec:approximations-and-stability}}
\end{toappendix}

We define the syntax of \emph{approximation-adorned} \mml
(henceforth simply called adorned \mml) to be equal to the syntax of \mml, except that all fixpoint-operators are of the form
$\mu^i X. \psi$ or $\nu^i X.  \psi$, with $i \in \nats$ and $\psi$ itself
adorned.
\begin{multline*}
  \varphi  ::= p \mid \neg p \mid X \mid \varphi \land \varphi \mid \varphi \lor \varphi \\  \mid \atleast{k} \varphi \mid \allbut{k} \varphi \mid \mu^i X. \varphi \mid \nu^i X. \varphi.
\end{multline*}
The semantics of adorned formulae is defined similarly to that of normal formulae, except that
\begin{align*}
  \sem{\mu^i X. \varphi}{G}{V} & \defeq
  \begin{cases}
    \emptyset & \text{if } i = 0, \\
    \sem{\varphi}{G}{V[X \mapsto \sem{\mu^{i-1} X. \varphi}{G}{V}]} & \text{otherwise.}
  \end{cases}
  \\
  \sem{\nu^i X. \varphi}{G}{V} & \defeq
  \begin{cases}
    N & \text{if } i = 0, \\
    \sem{\varphi}{G}{V[X \mapsto \sem{\nu^{i-1} X. \varphi}{G}{V}]} & \text{otherwise.}
  \end{cases}
\end{align*}
Intuitively, $\mu^i X. \varphi$ computes an under-approximation of $\mu X. \varphi$, obtained by iterating $\varphi$ for $i$ iterations, while $\nu^i X. \varphi$ similarly computes an over-approximation of $\nu X. \varphi$.

For a normal \mml formula $\varphi$ and $i \in \nats$, we denote by $\adorn{\varphi}{i}$ the adorned \mml formula that is obtained by adorning every fixpoint operator of the form $\mu X$ resp $\nu X$ by $\mu^i X$ resp. $\nu^i X$. For instance:
\begin{align*}
  \varphi & = \mu Y. \left( \left(p \vee \ \atleast{} Y\right) \vee \left(\mu X. \left(q \wedge \atleast{} \left(Y \vee \atleast{} X\right)\right) \right) \right) \\
  \adorn{\varphi}{i} & = \mu^i Y. \left( \left(p \vee \ \atleast{} Y\right) \vee \left(\mu^i X. \left(q \wedge \atleast{} \left(Y \vee \atleast{} X\right)\right) \right) \right)
\end{align*}
Intuitively, $\adorn{\varphi}{i}$ approximates $\varphi$ by iterating every fixpoint for $i$ times. We also call
$\adorn{\varphi}{i}$ the \emph{$i$-th uniform approximation} of $\varphi$. Note
that, because $\varphi$ may have nested and alternating fixpoints,
$\adorn{\varphi}{i}$ itself is not necessarily an under-approximation, nor an
over-approximation of $\varphi$.

For a fixpoint formula $\varphi= \pi X. \psi$ and $i,k \in \nats$ we write $\detailadorn{\varphi}{i}{k}$ for the adorned formula $\pi^{i} X. \adorn{\psi}{k}$ that iterates the outermost fixpoint $i$
times, and all inner fixpoints $k$ times. For instance:
\begin{align*}
  \varphi & = \mu Y. \left( \left(p \vee \ \atleast{} Y\right) \vee \left(\mu X. \left(q \wedge \atleast{} \left(Y \vee \atleast{} X\right)\right) \right) \right) \\
  \detailadorn{\varphi}{i}{k} & = \mu^i Y. \left( \left(p \vee \ \atleast{} Y\right) \vee \left(\mu^k X. \left(q \wedge \atleast{} \left(Y \vee \atleast{} X\right)\right) \right) \right)
\end{align*}

\begin{definition}
  \label{def:pointwise-stable}
  Let $k \in \nats, k \geq 1$. A \mml formula  $\varphi$ is \emph{$k$-stable on input graph $G$, valuation $\valuation$, and node $n \in G$} if
  \begin{itemize}
    \item $\varphi$ is not a fixpoint formula (i.e., not of the form $\pi X. \psi$) and every direct subformula of $\varphi$ is
      $k$-stable on $(G,V,n)$. In particular $p$, $\neg p$, and $X$ are always
      $k$-stable on $(G,V,n)$ as they do not have direct subformulae.
    \item Or, $\varphi$ is a fixpoint formula $\pi X. \psi$ and
      \begin{enumerate}
        \item $n \in \sem{\detailadorn{\varphi}{k}{k}}{G}{V}$ iff $n \in  \sem{\detailadorn{\varphi}{k-1}{k}}{G}{V}$; and
        \item for every $0 \leq i < k$, $\psi$ is $k$-stable on $(G,V_i,n)$ where $V_i = V[X \mapsto \sem{\detailadorn{\varphi}{i}{k}}{G}{V}]$.
      \end{enumerate}
  \end{itemize}
  Formula $\varphi$ is \emph{$k$-stable on $(G,V)$} if it is $k$-stable on $(G,V,n)$ for every $n \in G$.
\end{definition}

\begin{toappendix}
  \label{sec:proof-approximation}

Definition~\ref{def:pointwise-stable} defines \emph{pointwise} $k$ stability,
taken at a single node $n \in G$. For the proof of
Proposition~\ref{prop:smallest-stable-uniform-approximation-terminates-and-is-correct}, it is convenient to also define \emph{global} $k$-stability.

\begin{definition}
  \label{def:global-stable}
  Let $k \in \nats, k \geq 1$. A \mml formula  $\varphi$ is \emph{globally $k$-stable on input graph $G$ and valuation $V$} if
  \begin{itemize}
    \item $\varphi$ is not a fixpoint formula and every $\psi \in \sub{\varphi}$ is
      $k$-stable on $(G,V)$. In particular $p$, $\neg p$, and $X$ are always
      $k$-stable on $(G,V)$ as they do not have strict subformulae.
    \item Or, $\varphi$ is a fixpoint formula $\pi X. \psi$ and
      \begin{enumerate}
        \item $\sem{\detailadorn{\varphi}{k}{k}}{G}{V} = \sem{\detailadorn{\varphi}{k-1}{k}}{G}{V}$; and
        \item for every $0 \leq i < k$, $\psi$ is $k$-stable on $(G,V_i)$ where $V_i = V[X \mapsto \sem{\detailadorn{\varphi}{i}{k}}{G}{V}]$.
      \end{enumerate}
  \end{itemize}
\end{definition}

\begin{lemma}
  \label{lem:global-stability-equals-pointwise-stability-at-every-node}
  $\varphi$ is globally $k$-stable on $(G,\valuation)$ if, and only if $\varphi$ is (pointwise) $k$-stable on $(G,\valuation, n)$ for every $n\in G$.
\end{lemma}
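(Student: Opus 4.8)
I would prove the equivalence by structural induction on $\varphi$, exploiting that both Definition~\ref{def:pointwise-stable} and Definition~\ref{def:global-stable} split on exactly the same syntactic criterion (fixpoint vs.\ non-fixpoint). The entire argument rests on three elementary facts: (i) for node sets $S,T \subseteq \nodes{G}$ we have $S = T$ iff $n \in S \Leftrightarrow n \in T$ for every $n \in G$; (ii) universal quantifiers over separate ranges commute; and (iii) $\forall n\,(A(n) \wedge B(n))$ is equivalent to $(\forall n\, A(n)) \wedge (\forall n\, B(n))$. Throughout I would abbreviate the single-node predicate of Definition~\ref{def:pointwise-stable}, ``$\varphi$ is $k$-stable on $(G,\valuation,n)$'', and recall that ``$k$-stable on $(G,\valuation)$'' means precisely its $\forall n$-closure.

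For a non-fixpoint $\varphi$, global $k$-stability unfolds to ``every $\psi \in \sub{\varphi}$ is $k$-stable on $(G,\valuation)$'', i.e.\ $\forall \psi\,\forall n$ of the single-node predicate, whereas pointwise-at-every-node stability is $\forall n\,\forall \psi$ of the same predicate; these agree by (ii). The atomic cases $p,\neg p,X$ are the instance with $\sub{\varphi}=\emptyset$, where both notions hold vacuously. For a fixpoint formula $\varphi = \pi X.\psi$, I would compare the two conditions componentwise: global condition~(1) is the set equality $\sem{\detailadorn{\varphi}{k}{k}}{G}{\valuation} = \sem{\detailadorn{\varphi}{k-1}{k}}{G}{\valuation}$, which by (i) is exactly the $\forall n$-closure of the pointwise membership condition; global condition~(2), that $\psi$ is $k$-stable on $(G,\valuation_i)$ for every $0 \leq i < k$, equals $\forall i\,\forall n$ of the single-node predicate at $\valuation_i = \valuation[X\mapsto \sem{\detailadorn{\varphi}{i}{k}}{G}{\valuation}]$, matching the pointwise condition after commuting $\forall i$ and $\forall n$ via (ii). Combining the two components with (iii) pulls the outer $\forall n$ across the conjunction and yields pointwise-at-every-node $k$-stability of $\varphi$.

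The only real obstacle is bookkeeping: one must verify that the adorned formulas $\detailadorn{\varphi}{i}{k}$ and the induced valuations $\valuation_i$ appearing in the two definitions are literally identical, so that the quantifier reorderings of (ii)--(iii) are sound. A secondary point is the reading of the recursive clause ``$k$-stable on $(G,\valuation)$'' inside Definition~\ref{def:global-stable}: if it is read self-referentially as \emph{global} stability, the induction hypothesis supplies the equivalence for each direct subformula before (i)--(iii) are applied; if it is read with its literal meaning from Definition~\ref{def:pointwise-stable}, the hypothesis is not even needed and (i)--(iii) already close the top level directly. I would phrase the proof so that it is valid under either reading.
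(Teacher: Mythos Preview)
Your proposal is correct and follows exactly the approach the paper takes: the paper's own proof is the single line ``By straightforward induction on $\varphi$,'' and your write-up simply spells out that induction in full, including the quantifier commutations and the set-equality-versus-pointwise-membership observation that make each case go through. Your remark about the two possible readings of the recursive clause in Definition~\ref{def:global-stable} is apt and handles a genuine ambiguity in the paper's phrasing; either reading works, as you note.
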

\begin{proof}
  By straightforward induction on $\varphi$.
\end{proof}

The following proposition implies that if a formula is globally $k$-stable for a given $k$, then it is also globally $l$-stable for all $l \geq k$. Moreover, the value computed by the uniform $k$-approximation equals that of the uniform $l$-approximation for $l \geq k$.

\begin{proposition}
  \label{prop:stable-preservation}
  If \mml formula $\varphi$ is globally $k$-stable on $(G,V)$ with $k \geq 1$ then:
  \begin{enumerate}
    \item[\quad (i)] $\sem{\adorn{\varphi}{k}}{G}{V} = \sem{\adorn{\varphi}{k+1}}{G}{V}$; and
    \item[\quad (ii)] $\varphi$ is globally $(k+1)$-stable on $(G,V)$.
  \end{enumerate}
\end{proposition}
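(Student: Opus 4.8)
The plan is to prove (i) and (ii) \emph{together} by structural induction on $\varphi$, universally over all graphs $G$ and valuations $V$, so that in the fixpoint case the induction hypothesis may be instantiated at several different valuations. Throughout I use Lemma~\ref{lem:global-stability-equals-pointwise-stability-at-every-node} to freely identify ``$k$-stable on $(G,W)$'' with ``globally $k$-stable on $(G,W)$''. The non-fixpoint cases are routine. If $\varphi$ is $p$, $\neg p$ or $X$ then $\adorn{\varphi}{k} = \adorn{\varphi}{k+1} = \varphi$ and $\varphi$ is globally stable at every level, so both claims are immediate. If $\varphi$ is a conjunction, disjunction, or a modal formula, then global $k$-stability of $\varphi$ (Definition~\ref{def:global-stable}) hands me global $k$-stability of every $\psi \in \sub{\varphi}$; the induction hypothesis upgrades each to global $(k+1)$-stability, which gives (ii), and equates $\sem{\adorn{\psi}{k}}{G}{V}$ with $\sem{\adorn{\psi}{k+1}}{G}{V}$, so (i) follows because the boolean/modal operator is applied identically at levels $k$ and $k+1$.

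All the work lies in the fixpoint case $\varphi = \pi X.\psi$, where $\adorn{\varphi}{k} = \detailadorn{\varphi}{k}{k}$. I abbreviate the outer iterates at inner levels $k$ and $k+1$ by $S_i = \sem{\detailadorn{\varphi}{i}{k}}{G}{V}$ and $S'_i = \sem{\detailadorn{\varphi}{i}{k+1}}{G}{V}$, and write $V_i = V[X \mapsto S_i]$, so that $S_{i+1} = \sem{\adorn{\psi}{k}}{G}{V_i}$ and $S'_{i+1} = \sem{\adorn{\psi}{k+1}}{G}{V[X \mapsto S'_i]}$. The first step is a sub-induction on $i$ proving $S_i = S'_i$ for all $0 \le i \le k$: the base case $i = 0$ is immediate (both sides equal $\emptyset$ if $\pi = \mu$, and both equal $\nodes{G}$ if $\pi = \nu$); for the step with $i < k$, the second condition of global $k$-stability gives that $\psi$ is $k$-stable on $(G,V_i)$, so the induction hypothesis~(i) at valuation $V_i$ yields $\sem{\adorn{\psi}{k}}{G}{V_i} = \sem{\adorn{\psi}{k+1}}{G}{V_i}$, i.e.\ $S_{i+1} = S'_{i+1}$. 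Informally, this records that as long as the body is stable the inner iteration count is immaterial.

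The crucial observation is that the first condition of global $k$-stability, namely $S_k = S_{k-1}$, forces $V_k = V_{k-1}$; hence the $k$-stability of $\psi$ on $(G,V_{k-1})$ supplied by the second condition with $i = k-1$ (this is where $k \ge 1$ is needed) is exactly $k$-stability of $\psi$ on $(G,V_k)$. Applying the induction hypothesis at $V_k$ then gives both $\sem{\adorn{\psi}{k}}{G}{V_k} = \sem{\adorn{\psi}{k+1}}{G}{V_k}$ and $(k+1)$-stability of $\psi$ on $(G,V_k)$. Claim~(i) for $\varphi$ now falls out of the chain $\sem{\adorn{\varphi}{k}}{G}{V} = S_k = \sem{\adorn{\psi}{k}}{G}{V_k} = \sem{\adorn{\psi}{k+1}}{G}{V_k} = S'_{k+1} = \sem{\adorn{\varphi}{k+1}}{G}{V}$, where the first inner equality uses $S_{k-1} = S_k$ and the last uses $S'_k = S_k$ from the sub-induction. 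For claim~(ii) I verify global $(k+1)$-stability directly: its first condition, $S'_{k+1} = S'_k$, follows from $S'_{k+1} = S_k = S'_k$; and its second condition requires $(k+1)$-stability of $\psi$ on $(G, V[X \mapsto S'_i])$ for $0 \le i \le k$, where $S'_i = S_i$ makes the valuation equal to $V_i$ --- for $i < k$ this is the induction hypothesis~(ii) applied to the $k$-stability from the second condition, and for $i = k$ it is the $(k+1)$-stability on $(G,V_k)$ just obtained.

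The main obstacle is purely the bookkeeping caused by the inner iteration level changing in lock-step with the outer one as $k$ becomes $k+1$: the quantities $S_i$ and $S'_i$ a priori differ, and $\adorn{\psi}{k}$ and $\adorn{\psi}{k+1}$ are genuinely different formulae. The sub-induction $S_i = S'_i$ (for $i \le k$) together with the identity $V_{k-1} = V_k$ are exactly what decouple the inner and outer levels, and pinning down those two facts is the crux of the argument.
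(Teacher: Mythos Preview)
Your proof is correct and follows essentially the same approach as the paper: structural induction on $\varphi$, with the fixpoint case handled via an inner induction establishing $S_i = S'_i$ for $0 \le i \le k$, followed by exploiting $S_k = S_{k-1}$ (hence $V_k = V_{k-1}$) to push the induction hypothesis through at the boundary. The paper organizes the bookkeeping slightly differently (it never explicitly names $V_k$, instead working with $W_j := V[X \mapsto S'_j]$ and arguing $W_k = W_{k-1} = V_{k-1}$), but the substance is identical.
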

\begin{proof}
  Assume $k \geq 1$ and $\varphi$ $k$-stable on $(G,V)$.
  The proof is by structural induction on $\varphi$. For the base cases where $\varphi$ is $p$, $\neg p$ or $X$ the result is immediate. For the cases where $\varphi$ is of the form $\psi \vee \psi'$, $\psi \wedge \psi'$, $\atleast{\ell}\, \psi$, or $\allbut{\ell}\, \psi$ the result follows straightforwardly from the induction hypothesis. When $\varphi = \mu X. \psi$ we reason as follows.

  For $0 \leq i < k$, let $V_i = V[X \mapsto \sem{\mu^i X. \adorn{\psi}{k}}{G}{V}]$. Because $\psi$ is globally $k$-stable on $(G,V_i)$ for each $0 \leq i < k$, we know by induction hypothesis that $\sem{\adorn{\psi}{k}}{G}{V_i} = \sem{\adorn{\psi}{k+1}}{G}{V_i}$ and that $\psi$ is globally $(k+1)$ stable w.r.t. $(G, V_i)$.

  To prove property (i), we first claim that for any $0 \leq i \leq k$ we have
  \begin{equation}
    \label{eq:stable-preservation-claim}
    \sem{\mu^i X. \adorn{\psi}{k}}{G}{V} = \sem{\mu^i X. \adorn{\psi}{k+1}}{G}{V}
  \end{equation}
  The proof of this claim is by an inner induction on $i$.
  \begin{itemize}
    \item When $i = 0$ the reasoning is trivial, since both
      $\sem{\mu^i X. \adorn{\psi}{k}}{G}{V} = \emptyset$ and
      $\sem{\mu^i X. \adorn{\psi}{k+1}}{G}{V} = \emptyset$ by definition of the
      adorned semantics.
    \item When $i > 0$, first observe that, by the inner induction hypothesis,
      \begin{align}
        \label{eq:stable-preservation-1}
        V_{i-1} & = V[X \mapsto \sem{\mu^{i-1} X. \adorn{\psi}{k}}{G}{V}] \nonumber \\
        &  = V[X \mapsto \sem{\mu^{i-1} X. \adorn{\psi}{k+1}}{G}{V}].
      \end{align}
      Therefore,
      \begin{align*}
        \sem{\mu^i X. \adorn{\psi}{k}}{G}{V} & = \sem{\adorn{\psi}{k}}{G}{V_{i-1}} \\
        & = \sem{\adorn{\psi}{k+1}}{G}{V_{i-1}} \\
        & = \sem{\adorn{\psi}{k+1}}{G}{V[X \mapsto \sem{\mu^{i-1} X. \adorn{\psi}{k+1}}{G}{V}]} \\
        & = \sem{\mu^i X. \adorn{\psi}{k+1}}{G}{V}
      \end{align*}
      The first equality is by definition of the adorned semantics; the second by the outer induction hypothesis; the third by \eqref{eq:stable-preservation-1}; and the last again by definition of approximate semantics.
  \end{itemize}
  Because $\varphi$ is globally $k$-stable, we know that $\sem{\mu^k X. \adorn{\psi}{k}}{G}{V} = \sem{\mu^{k-1} X. \adorn{\psi}{k}}{G}{V}$. Therefore, by \eqref{eq:stable-preservation-claim},  also
  \begin{equation}
    \label{eq:stable-preservation-2}
    \sem{\mu^k X. \adorn{\psi}{k+1}}{G}{V} = \sem{\mu^{k-1} X. \adorn{\psi}{k+1}}{G}{V}.
  \end{equation}
  Hence,
  \begin{align}
    \sem{\mu^{k+1} X. \adorn{\psi}{k+1}}{G}{V} &= \sem{\adorn{\psi}{k+1}}{G}{V[X\mapsto \sem{\mu^k X. \adorn{\psi}{k+1}}{G}{V}]} \nonumber \\
    & = \sem{\adorn{\psi}{k+1}}{G}{V[X\mapsto \sem{\mu^{k-1} X. \adorn{\psi}{k+1}}{G}{V}]} \nonumber \\
    & = \sem{\mu^{k} X. \adorn{\psi}{k+1}}{G}{V} \label{eq:stable-preservation-3}
  \end{align}
  We then readily obtain property (i):
  \begin{align*}
    \sem{\adorn{\varphi}{k}}{G}{V} &= \sem{\mu^k X. \adorn{\psi}{k}}{G}{V} \\
    & = \sem{\mu^k X. \adorn{\psi}{k+1}}{G}{V} \\
    & = \sem{\mu^{k+1} X. \adorn{\psi}{k+1}}{G}{V} \\
    & = \sem{\adorn{\varphi}{k+1}}{G}{V}
  \end{align*}
  The first equality is by definition of uniform $k$-approximation; the second by \eqref{eq:stable-preservation-claim}, the third by \eqref{eq:stable-preservation-3}, and the final equality by definition of $k+1$-approximation.

  It remains to prove that also property (ii) holds. We have already established \eqref{eq:stable-preservation-3} that $\sem{\mu^{k+1} X. \adorn{\psi}{k+1}}{G}{V} =   \sem{\mu^{k} X. \adorn{\psi}{k+1}}{G}{V}$. It remains to prove that for every $0 \leq j < k+1$, $\psi$ is globally $k+1$-stable on $(V,W_j)$, where $W_j = V[X \mapsto \sem{\mu^j X. \adorn{\psi}{k+1}}{G}{V}]$. To that end, first note that for $0 \leq j < k$, $W_j = V_j$ by \eqref{eq:stable-preservation-claim}. Furthermore, by \eqref{eq:stable-preservation-2}, $W_k = W_{k-1} = V_{k-1}$.  By the outer induction hypothesis we know that $\psi$ is globally $k+1$-stable w.r.t. $(G,V_i)$ for $1 \leq i < k$. As such, $\psi$ is $k+1$-stable w.r.t. $(G,W_j)$ for $1 \leq j < k+1$, as desired.

  The reasoning when $\varphi = \nu X.\psi$ is entirely analogous. \qedhere
\end{proof}

The following proposition shows that, for large enough values of $k$, every formula becomes globally $k$-stable and its approximate value equals the full fixpoint result.

\begin{proposition}
  \label{prop:stable-guaranteed-at-number-of-nodes}
  Let $G$ be a graph with node set $N$ and assume $k \in \nats$
  with $k > \card{N}$. Then, for any $\mu$-calculus formula $\varphi$ and valuation $V$ we have $\sem{\adorn{\varphi}{k}}{G}{V} = \sem{\varphi}{G}{V}$ and $\varphi$ is globally $k$-stable on $(G,V)$.
\end{proposition}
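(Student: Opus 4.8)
The plan is to prove, by structural induction on $\varphi$ and simultaneously for all valuations $V$, the two claims (a) $\sem{\adorn{\varphi}{k}}{G}{V} = \sem{\varphi}{G}{V}$ and (b) $\varphi$ is globally $k$-stable on $(G,V)$; crucially the induction hypothesis is universally quantified over valuations, so that for every direct subformula $\psi$ and \emph{every} valuation $W$ we may assume $\sem{\adorn{\psi}{k}}{G}{W} = \sem{\psi}{G}{W}$ and that $\psi$ is globally $k$-stable on $(G,W)$. By Lemma~\ref{lem:global-stability-equals-pointwise-stability-at-every-node} the global notion coincides with pointwise stability at every node, so I can work exclusively with global stability and only translate back at the very end. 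The base cases $p$, $\neg p$, $X$ are immediate, since adornment leaves them unchanged and they are $k$-stable by definition. For the Boolean connectives and the modalities $\atleast{k}$, $\allbut{k}$, compositionality of the semantics together with the induction hypothesis on the direct subformulae gives claim (a), while claim (b) follows directly from the first clause of Definition~\ref{def:global-stable}, all direct subformulae being $k$-stable by hypothesis.

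The heart of the argument is the fixpoint case $\varphi = \mu X.\psi$, the case $\nu X.\psi$ being dual. First I would introduce the one-step operator $F\colon \pow{N} \to \pow{N}$ given by $F(S) = \sem{\psi}{G}{V[X \mapsto S]}$, which is monotone because formulae are in negation normal form and hence $X$ occurs only positively in $\psi$. Then, using the induction hypothesis (a) on $\psi$ instantiated at the valuation $V[X \mapsto S]$ (which legitimately replaces $\adorn{\psi}{k}$ by $\psi$), together with a short inner induction on $i$, I would establish that the approximate iterates coincide with the Kleene iterates: $\sem{\detailadorn{\varphi}{i}{k}}{G}{V} = F^i(\emptyset)$ for every $i$. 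This is the step that makes the adorned semantics tractable, since it shows each approximate iteration applies exactly $F$.

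With this identification in place, the conclusion follows from the standard fact that in the finite lattice $\pow{N}$ any strictly increasing chain has length at most $\card{N}+1$, so the chain $\emptyset \subseteq F(\emptyset) \subseteq F^2(\emptyset) \subseteq \cdots$ stabilises after at most $\card{N}$ steps at the least fixpoint of $F$, which by Knaster--Tarski is precisely $\sem{\mu X.\psi}{G}{V}$. Since $k > \card{N}$ we have $k \geq \card{N}+1$ and $k-1 \geq \card{N}$, whence $F^k(\emptyset) = F^{k-1}(\emptyset) = \sem{\mu X.\psi}{G}{V}$. The first equality yields claim (a), $\sem{\adorn{\varphi}{k}}{G}{V} = \sem{\varphi}{G}{V}$, and also the first clause of global $k$-stability, $\sem{\detailadorn{\varphi}{k}{k}}{G}{V} = \sem{\detailadorn{\varphi}{k-1}{k}}{G}{V}$. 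For the second clause I would invoke the induction hypothesis (b) on $\psi$ at each valuation $V_i = V[X \mapsto F^i(\emptyset)]$ for $0 \leq i < k$, which gives that $\psi$ is globally $k$-stable on $(G,V_i)$, exactly as Definition~\ref{def:global-stable} demands.

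The main obstacle I anticipate is conceptual rather than computational: since $\varphi$ may have nested and alternating fixpoints, evaluating the outer fixpoint correctly requires the inner fixpoints to already be correct at every intermediate valuation $V[X \mapsto F^i(\emptyset)]$ of the outer variable, not merely at the ambient $V$. This is exactly why the induction hypothesis must range over all valuations; once phrased this way the nesting dissolves, because the inner approximations are correct uniformly in the valuation and the single global bound $k > \card{N}$ suffices at every level of nesting. A secondary detail worth flagging is that the hypothesis $k > \card{N}$ is used twice — to reach the fixpoint at step $k$ and to guarantee that the penultimate iterate $F^{k-1}(\emptyset)$ has \emph{already} stabilised — and it is precisely the strict inequality that supplies the $k-1 \geq \card{N}$ needed for the latter.
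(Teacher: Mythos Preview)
Your proposal is correct and follows essentially the same approach as the paper: structural induction on $\varphi$ with the hypothesis universally quantified over valuations, an inner induction on $i$ in the fixpoint case to identify the adorned iterates $\sem{\detailadorn{\varphi}{i}{k}}{G}{V}$ with the ordinary Kleene iterates, then the standard chain-length argument in $\pow{N}$ to conclude stabilisation by step $k-1$, and finally invoking the hypothesis on $\psi$ at each intermediate valuation for the stability clause. The only cosmetic difference is that the paper introduces the auxiliary notation $\sem{\mu^i X.\psi}{G}{V}$ for the unapproximated iterate rather than your explicit operator $F$, but the two are interchangeable.
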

\begin{proof}
  Fix $G$, $k$, $\varphi$ and $V$ as stated. Observe that $k \geq 1$.

  The proof is by structural induction on $\varphi$. For the base cases where
  $\varphi$ is $p$, $\neg p$ or $X$ the result is immediate. For the cases where
  $\varphi$ is of the form $\psi \vee \psi'$, $\psi \wedge \psi'$,
  $\atleast{\ell}\, \psi$, or $\allbut{\ell}\, \psi$ the result follows straightforwardly
  from the induction hypothesis. When $\varphi = \mu X. \psi$ we reason as follows.

  For $0 \leq i < k$, let
  $V_i = V[X \mapsto \sem{\mu^i X. \adorn{\psi}{k}}{G}{V}]$. By induction
  hypothesis we know that
  $\sem{\adorn{\psi}{k}}{G}{V_i} = \sem{\psi}{G}{V_i}$.

  We claim that for any $0 \leq i \leq k$ we have
  \begin{equation}
    \label{eq:stable-guaranteed-claim}
    \sem{\mu^i X. \adorn{\psi}{k}}{G}{V} = \sem{\mu^i X. \psi}{G}{V}
  \end{equation}
  Here, on the right-hand side, $\sem{\mu^i X. \psi}{G}{V}$ for $\mml$ formula $\psi$ is inductively defined as follows, completely similar to the semantics of adorned formulae, but with the difference that any inner fixpoint formula in $\psi$ is evaluated non-approximately.
  \[
    \sem{\mu^i X. \psi}{G}{V}  =
    \begin{cases}
      \emptyset & \text{if } i = 0, \\
      \sem{\psi}{G}{V[X \mapsto \sem{\mu^{i-1} X. \psi}{G}{V}]} & \text{otherwise.}
    \end{cases}
  \]
  The proof of the claim is by an inner induction on $i$.
  \begin{itemize}
    \item When $i = 0$ the reasoning is trivial, since both
      $\sem{\mu^i X. \adorn{\psi}{k}}{G}{V} = \emptyset$ and
      $\sem{\mu^i X. \psi}{G}{V} = \emptyset$ by definition.
    \item When $i > 0$, first observe that, by the inner induction hypothesis,
      \begin{align}
        \label{eq:stable-guaranteed-1}
        V_{i-1} & = V[X \mapsto \sem{\mu^{i-1} X. \adorn{\psi}{k}}{G}{V}] \nonumber \\
        & = V[X \mapsto \sem{\mu^{i-1} X. \psi}{G}{V}].
      \end{align}
      Therefore,
      \begin{align*}
        \sem{\mu^i X. \adorn{\psi}{k}}{G}{V} & = \sem{\adorn{\psi}{k}}{G}{V_{i-1}} \\
        & = \sem{\psi}{G}{V_{i-1}} \\
        & = \sem{\psi}{G}{V[X \mapsto \sem{\mu^{i-1} X. \psi}{G}{V}]} \\
        & = \sem{\mu^i X. \psi}{G}{V}
      \end{align*}
      The first equality is by definition of the approximate semantics; the second by the outer induction hypothesis; the third by \eqref{eq:stable-guaranteed-1}; and the last again by definition of approximate semantics.
  \end{itemize}
  In particular, $\sem{\adorn{\varphi}{k}}{G}{V} = \sem{\mu^k X. \adorn{\psi}{k}}{G}{V} = \sem{\mu^k X. \psi}{G}{V}$.

  It remains to show that $\sem{\mu^k X. \psi}{G}{V} = \sem{\mu X. \psi}{G}{V}$, which is a standard textbook argument concerning least fixpoints. We include it here for completeness only.

  First observe that
  $\sem{\mu^k X. \psi}{G}{V} \subseteq \sem{\mu X. \psi}{G}{V}$. (This is
  a standard argument, by induction on $k$.)

  For the other direction, remark that, because $\mu$-calculus formulae are monotonic in the valuation\footnote{if $V(X) \subseteq W(X)$ for all $X$ then $\sem{\alpha}{G}{V} \subseteq \sem{\alpha}{G}{W}$ for all formulae $\alpha$.}, it holds that
  \[ \emptyset \subseteq \sem{\mu^1 X.\psi}{G}{V}  \subseteq \dots \subseteq \sem{\mu^{k-1} X.\psi}{G}{V} \subseteq \sem{\mu^k X.\psi}{G}{V}. \]

  Each set in this sequence is necessarily a subset of $N$. Since $N$ holds strictly less than $k$ elements, there must exist some index $0 \leq i < k$ such that $\sem{\mu^i X. \psi}{G}{V} = \sem{\mu^{i+1} X. \psi}{G}{V}$. Then, by definition of approximate semantics, also $\sem{\mu^j X. \psi}{G}{V} = \sem{\mu^{j+1} X. \psi}{G}{V}$ for every $j$ with $i \leq j < k$. So, in particular, $\sem{\mu^{k-1} X. \psi}{G}{V} = \sem{\mu^{k} X. \psi}{G}{V}$.

  Hence, $\sem{\psi}{G}{V[X\mapsto \sem{\mu^{k-1}X.\psi}{G}{V}]} = \sem{\mu^{k} X. \psi}{G}{V} \subseteq \sem{\mu^{k-1} X. \psi}{G}{V}$. In other words,
  $\sem{\mu^{k-1} X. \psi}{G}{V}  \in \buildset{S \subseteq N}{\sem{\psi}{G}{V[X \mapsto S]} \subseteq S}$.

  Therefore,
  \begin{align*}
    \sem{\mu X. \psi}{G}{V}  & = \bigcap \buildset{S \subseteq N}{\sem{\alpha}{G}{V[X \mapsto S]} \subseteq S} \\
    & \subseteq \sem{\mu^{k-1} X. \psi}{G}{V} = \sem{\mu^{k} X. \psi}{G}{V},
  \end{align*}
  as desired.

  It remains to prove that $\varphi$ is globally $k$-stable w.r.t. $(G,V)$. Note that, by our earlier reasoning,
  \begin{align*}
    \sem{\mu^k X. \adorn{\psi}{k}}{G}{V} & = \sem{\mu^k X. \psi}{G}{V} \\
    & = \sem{\mu^{k-1} X. \psi}{G}{V} = \sem{\mu^{k-1} X. \adorn{\psi}{k}}{G}{V}
  \end{align*}
  Furthermore, by the outer induction hypothesis, $\psi$ is globally $k$-stable w.r.t. \emph{any valuation}, hence this holds in particular w.r.t. for the $V_i$ with $0 \leq i < k$, as desired.
\end{proof}

 \end{toappendix}

The following proposition shows that the sequence
$\adorn{\varphi}{1}, \adorn{\varphi}{2}, \adorn{\varphi}{3}, \dots$ of
uniform approximations of $\varphi$ reaches a fixpoint that equals $\varphi$ as soon as
we reach an approximation $\adorn{\varphi}{k}$ that is $k$-\emph{stable}. This is guaranteed to happen when $k$ exceeds the graph size, but may also happen earlier. Hence, for any \mml formula $\varphi$, graph $G$, and valuation $V$
we may compute $\sem{\varphi}{G}{V}$ by means of the following simple algorithm:
\begin{quote}
  (*) Compute the $k$-approximation
  $\sem{\adorn{\varphi}{k}}{G}{V}$ for increasing values of $k$, and stop as soon
  as $\varphi$ becomes \emph{$k$-stable} on $(G,V)$. Then return $\sem{\adorn{\varphi}{k}}{G}{V}$.
\end{quote}

\begin{propositionrep}
  \label{prop:smallest-stable-uniform-approximation-terminates-and-is-correct}
  For all $k \geq 1$, $G$ and $V$ it holds that:
  \begin{enumerate}
  \item if $\varphi$ is $k$-stable on $(G,V)$ then
  $\sem{\adorn{\varphi}{k}}{G}{V} = \sem{\varphi}{G}{V}$; and
\item if $k > \card{\nodes{G}}$, then $\varphi$ is $k$-stable on $(G,V)$.
  \end{enumerate}
\end{propositionrep}
\begin{proof}
  Follows directly from Lemma~\ref{lem:global-stability-equals-pointwise-stability-at-every-node},  Proposition~\ref{prop:stable-preservation} and Proposition~\ref{prop:stable-guaranteed-at-number-of-nodes}
\end{proof}
\inFullVersion{The proof is in the Appendix.}

We define the following notion related to $k$-stability.
\begin{definition}
  \label{def:c-k-stability} Let $j,k \in \nats$ with $k \geq 1$. A \mml fixpoint
  formula $\varphi = \pi X. \psi$ is \emph{(j,k)-stable on input graph $G$,
  valuation $\valuation$ and node $n \in G$} if for every $0 \leq i < j$, $\psi$ is $k$-stable on
  $(G,V_i,n)$ where $V_i = V[X \mapsto \sem{\detailadorn{\varphi}{i}{k}}{G}{V}]$.
\end{definition}

Note that if $j = 0$, then $\varphi$ is vacuously $(0,k)$-stable on
$(G,V,n)$. Furthermore if $\varphi$ is $k$-stable on $(G,V,n)$, then it is also
$(k,k)$-stable on
$(G,V,n)$ but the converse does not hold since item (1) of
Definition~\ref{def:pointwise-stable} is not necessarily guaranteed. In general,
$(j,k)$-stability on fixpoint formulae is hence a weaker notion than
$k$-stability.

\subsection{The counting algorithm}
\label{sec:incremental}

\begin{toappendix}
  \subsection{Proofs for Section~\ref{sec:incremental}}
\end{toappendix}

To prove Theorem~\ref{thm:mu-calculus-expressible-in-simple-gnn} we next define an algorithm, called the \emph{counting algorithm}, that implements (*). To later  allow easy simulation  by means of a recurrent \gnns, we define the counting algorithm as a transition system operating on  \emph{configurations}.
We also take up the important task of proving correctness.
How to simulate the counting algorithm by means of
a recurrent \gnn is shown in Subsection~\ref{sec:implementing}.

Throughout this section and the next, let $\varphi$ be a fixed \mml sentence.
Our convention that \mml formulae are well-named implies that for every variable
$X \in \vars{\varphi}$ there exists exactly one fixpoint formula in
$\rfp{\varphi}$ that binds $X$. We denote this binding formula by
$\binds{\varphi}{X}$.

A \emph{counter} on  $\varphi$ is a mapping $\counter\colon \rfp{\varphi} \to \nats$. To ease notation, we write $\counter \leq k$ (resp. $\counter = k$) to indicate that $\counter(\alpha) \leq k$ (resp. $\counter(\alpha) = k$) for all $\alpha \in \rfp{\varphi}$.

Because the semantics of a formula only depends on its free variables, and
because the free variables of $\varphi$ and all of its subformulae are subsets
of $\vars{\varphi}$, we can treat valuations on a graph $G$ as finite mappings
$\valuation\colon \vars{\varphi} \to \pow{\nodes{G}}$. We refer to such mappings as $\varphi$-valuations. 

\begin{definition}
  Let $\varphi$ be a \mml sentence.
  A \emph{configuration} of $\varphi$ is a tuple $\conf = (G, k, \counter, \valuation, \result, \isvalid, \iskstable, \isckstable)$ where
$G$ is a $\pow{\allprops}$-labeled graph;
$k \in \nats$ with $k \geq 1$  is called the \emph{bound} of $\conf$; $\counter$ is a counter on  $\varphi$ with $\counter \leq k-1$;
$\valuation$ is a $\varphi$-valuation on $G$;
  $\result\colon \rsub{\varphi} \to \pow{\nodes{G}}$;
$\isvalid\subseteq \rsub{\varphi}$;
$\iskstable\colon \rsub{\varphi}\to \pow{\nodes{G}}$; and
$\isckstable\colon \rfp{\varphi} \to \pow{\nodes{G}}$.\end{definition}

Intuitively, the bound $k$ in a configuration will indicate the uniform approximation of $\adorn{\varphi}{k}$ for which we are currently computing $\sem{\adorn{\varphi}{k}}{G}{}$  while counter $\counter$ will record how far we are in this computation. Moreover, $\valuation$ will contain the valuation under which we are currently computing the results of subformulae, while  $\result$ stores subresults required to continue computation; $\isvalid$ registers for which subformulae of $\varphi$ the subresults stored in $\result$ can be considered valid; $\iskstable$ is used to track, for each subformula, on which nodes the subformula is $k$-stable; and $\isckstable$ is used to track, for each fixpoint subformula $\alpha$, for which nodes $n$ the subformula is $(\counter(\alpha),k)$ stable on $(G, \valuation,n)$.

Our algorithm does not work on arbitrary configurations, but on configurations for which our  intuitive description is coherent. We formalize this as follows.
To simplify notation, for a counter $\counter$ we write $\detailadorn{\alpha}{\counter}{k}$ for
$\detailadorn{\alpha}{\counter(\alpha)}{k}$ and we say that $\alpha \in \rfp{\varphi}$ is $(\counter,k)$-stable on $(G,\valuation,n)$ if $\alpha$ is $(\counter(\alpha), k)$-stable on $(G, \valuation, n)$.

\begin{definition}
  \label{def:coherent}
  A configuration $\conf$ is \emph{coherent} if the following three conditions hold.
  \begin{enumerate}
    \item It is \emph{sound}:
      $\valuation(X) =
      \sem{\detailadorn{\binds{\varphi}{X}}{\counter}{k}}{G}{\valuation}$ for all
      $X \in \vars{\varphi}$.
    \item It is \emph{consistent}: for all
      $\alpha \in \isvalid$,
      \begin{enumerate}
        \item $\result(\alpha) = \sem{\adorn{\alpha}{k}}{G}{\valuation}$;
        \item $\sub{\alpha} \subseteq \isvalid$;
        \item if $\alpha$ is a fixpoint formula, then $\counter(\alpha) = k-1$. And
      \end{enumerate}
    \item It \emph{tracks stability}:
      \begin{enumerate}
        \item $\iskstable(\alpha) = \{ n \in G \mid \alpha \text{ is $k$-stable on } (G,\valuation,n)\}$ for every $\alpha \in \isvalid$; and
        \item $\isckstable(\alpha) = \{ n \in G \mid
          \alpha \text{ is $(\counter,k)$-stable on } (G,\valuation,n) \}$ for every $\alpha \in \rfp{\varphi}$.
      \end{enumerate}
  \end{enumerate}
  A configuration $\conf$ is \emph{complete} if $\varphi \in \isvalid$. It is \emph{stable} if $\iskstable(\varphi) = \nodes{G}$ with $G$ the graph of $\conf$.
\end{definition}
Note that from a coherent and complete configuration we can obtain
$\sem{\adorn{\varphi}{k}}{G}{} =
\sem{\adorn{\varphi}{k}}{G}{\valuation}$ by simply reading
$\result(\varphi)$. Likewise, $k$-stability of $\varphi$ on
$(G,\valuation)$ can be obtained by checking that $\conf$ is stable.

The goal of the counting algorithm is to compute a coherent and complete configuration for $\varphi$ on $G$. Computation starts at the initial configuration w.r.t $k = 1$, defined below.
\begin{definition}
  The \emph{initial configuration} of $\varphi$ on graph $G$  w.r.t. $k \geq 1$ is
  $\conf \defeq (G, k, \counter, \valuation, \result, \isvalid, \iskstable, \isckstable)$ where $\counter = 0$;
  \begin{align*}
    \valuation(X) & =
    \begin{cases}
      \emptyset & \text{if } \binds{\varphi}{X} \in \rlfp{\varphi}\\
      N & \text{if } \binds{\varphi}{X} \in \rgfp{\varphi};
    \end{cases}
  \end{align*}
  $\result$ and $\iskstable$ map every formula to $\emptyset$; $\isvalid = \emptyset$; and $\isckstable$ maps every fixpoint formula to $\emptyset$.
\end{definition}

The initial configuration is trivially coherent.  We complete our definition of
the counting algorithm by defining three types of transitions on
configurations.

\begin{definition}
  \label{def:trans-one}
  Let $\conf = (G, k, \counter, \valuation, \result, \isvalid, \iskstable, \isckstable)$ be a configuration. A \emph{type-1} transition on $\conf$ yields the configuration
  $\conf' = (G, k, \counter, \valuation, \result', \isvalid', \iskstable', \isckstable)$ where
  {\small
    \allowdisplaybreaks
    \begin{align*}
      \result'(p) & \defeq \{ n \in G \mid p \in G(n) \} \\
      \result'(\neg p) & \defeq  \{ n \in G \mid p \not \in G(n) \} \\
      \result'(X) & \defeq V(X) \\
      \result'(\psi \wedge \psi') & \defeq \result(\psi) \cap \result(\psi') \\
      \result'(\psi \vee \psi') & \defeq \result(\psi) \cup \result(\psi') \\
      \result'(\atleast{\ell}\, \psi) & \defeq \{ n \in G \mid \card{\out{G}{n} \cap \result(\psi)} \geq \ell \} \\
      \result'(\allbut{\ell}\, \psi) & \defeq \{ n \in G \mid \card{\out{G}{n} \setminus \result(\psi)} < \ell \} \\
      \result'(\pi X. \psi) & \defeq \result(\psi) \\
      \isvalid' & \defeq \{ \alpha \in \rsub{\varphi} \mid \sub{\alpha} \subseteq \isvalid \} \\ & \qquad \setminus \{ \alpha \in \rfp{\varphi} \mid \counter(\alpha) < k - 1\} \\
      \iskstable'(\alpha) & \defeq   \nodes{G} \cap \bigcap_{\beta \in \sub{\alpha}} \iskstable(\beta) \qquad \text{if } \alpha \not \in \rfp{\varphi} \\
      \iskstable'(\pi X. \psi) & \defeq  \iskstable(\psi) \cap \isckstable(\pi X. \psi)
      \\ & \qquad \cap \{ n \in G \mid n \in \valuation(X) \text{ iff } n \in \result'(\psi) \}
\end{align*}}
  We denote this by $\conf \trans{1} \conf'$.
\end{definition}

Intuitively, if $\conf$ is coherent then for any formula $\beta \in \isvalid$, $\result$ already stores the value of $\sem{\adorn{\beta}{k}}{G}{\valuation}$. Hence, for any $\alpha$ with $\sub{\alpha} \subseteq \isvalid$ we may read these values to compute $\sem{\adorn{\alpha}{k}}{G}{\valuation}$, cf. the definition of $\result'$. For fixpoint formulae $\alpha = \pi X. \psi$, this is only correct if $\counter(\alpha) = k-1$, since we have then already evaluated the body $\psi$ under valuation with $\valuation(X) = \sem{\pi^{k-1} X.\psi}{G}{V}$, and hence
\begin{align*}
  \sem{\adorn{\alpha}{k}}{G}{\valuation}
  = \sem{\pi^k X. \adorn{\psi}{k}}{G}{\valuation}
  & = \sem{\adorn{\psi}{k}}{G}{\valuation[X\mapsto \sem{\detailadorn{\alpha}{k-1}{k}}{G}{\valuation}]}\\
  & = \sem{\adorn{\psi}{k}}{G}{\valuation} = \result(\psi)
\end{align*}
Here, the second-to-last equality is by soundness of $\conf$.
This reasoning is not correct when $\counter(\alpha) < k-1$, which is why all fixpoint formulae with $\counter(\alpha) < k-1$ are excluded from $\isvalid'$.
Note that $\isvalid \subseteq \isvalid'$ by consistency of $\conf$.

The construction of $\iskstable'$ follows the definition of $k$-stability. Assume that $\alpha \in \isvalid'$. If $\alpha$ is a non-fixpoint formula then it is $k$-stable on $(G,\valuation, n)$ when every subformula is $k$-stable on $(G,\valuation,n)$. If $\alpha$ is a fixpoint formula $\alpha = \pi X. \psi$, it is is $k$-stable on $(G,\valuation, n)$ if $\psi$ is $k$-stable on $(G,\valuation, n)$; $\alpha$ is itself $(\counter,k)$ stable on $(G,\valuation,n)$, and $n \in \sem{\adorn{\alpha}{k}}{G}{\valuation} = R'(\alpha)$ iff $n \in \sem{\detailadorn{\alpha}{k-1}{k}}{G}{\valuation} = \valuation(X)$.

\begin{toappendix}
  \begin{lemma}
  \label{lem:trans-1-extends-valid}
  If $\conf \trans{1} \conf'$ and $\conf$ is consistent, then $\isvalid \subseteq \isvalid'$.
\end{lemma}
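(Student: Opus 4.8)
The plan is to unfold the definition of $\isvalid'$ and verify, for an arbitrary $\alpha \in \isvalid$, that $\alpha$ survives both the constructive part and the subtractive part of that definition. Recall that $\isvalid' = \{ \alpha \in \rsub{\varphi} \mid \sub{\alpha} \subseteq \isvalid \} \setminus \{ \alpha \in \rfp{\varphi} \mid \counter(\alpha) < k-1 \}$, so showing $\alpha \in \isvalid'$ amounts to (i) establishing $\alpha \in \rsub{\varphi}$ with $\sub{\alpha} \subseteq \isvalid$, and (ii) ruling out that $\alpha$ lies in the set being subtracted.

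First I would fix an arbitrary $\alpha \in \isvalid$. Since a configuration requires $\isvalid \subseteq \rsub{\varphi}$, we immediately obtain $\alpha \in \rsub{\varphi}$. For the inclusion of its direct subformulae, I would invoke consistency condition (b) of Definition~\ref{def:coherent}, which guarantees $\sub{\alpha} \subseteq \isvalid$ for every $\alpha \in \isvalid$. This establishes that $\alpha$ belongs to the first set in the definition of $\isvalid'$.

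Next I would argue that $\alpha$ is not removed by the set difference, splitting on whether $\alpha$ is a fixpoint formula. If $\alpha \notin \rfp{\varphi}$, then $\alpha$ trivially cannot lie in $\{ \beta \in \rfp{\varphi} \mid \counter(\beta) < k-1 \}$, so nothing is subtracted. If instead $\alpha \in \rfp{\varphi}$, then consistency condition (c) applies and yields $\counter(\alpha) = k-1$; hence $\counter(\alpha) \not< k-1$, so again $\alpha$ escapes the subtracted set. In both cases $\alpha \in \isvalid'$, and since $\alpha$ was arbitrary we conclude $\isvalid \subseteq \isvalid'$.

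This argument is essentially a bookkeeping exercise in the consistency conditions, so I do not anticipate any genuine obstacle; the only point requiring care is the fixpoint case, where it is precisely condition (c) (fixpoint formulae in $\isvalid$ must already have counter value $k-1$) that prevents $\alpha$ from being discarded. Without invoking consistency the claim would fail, so the key is simply to track where each consistency clause is used.
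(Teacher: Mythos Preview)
Your proposal is correct and follows essentially the same approach as the paper's proof: both fix $\alpha \in \isvalid$, invoke consistency condition (b) to obtain $\sub{\alpha} \subseteq \isvalid$, and invoke consistency condition (c) in the fixpoint case to conclude $\counter(\alpha) = k-1$, hence $\alpha \in \isvalid'$. The paper's version is simply more terse.
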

\begin{proof}
  Let $\conf = (G, k, \counter, \valuation, \result, \isvalid, \iskstable, \isckstable)$ and
  $\conf' = (G, k, \counter, \valuation, \result', \isvalid', \iskstable', \isckstable)$.
  Assume $\alpha \in \isvalid$. By consistency of $\conf$,
  $\sub{\alpha} \subseteq \isvalid$. Moreover, if $\alpha$ is itself a fixpoint
  formula, $\counter(\alpha) = k-1$. Therefore, $\alpha \in \isvalid'$.
\end{proof}

\begin{lemma}
  \label{lem:trans-1-preservation-of-soundness-and-consistency}
  If $\conf \trans{1} \conf'$ and $\conf$ is sound and consistent then so is
  $\conf'$.
\end{lemma}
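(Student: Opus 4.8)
The plan is to separate the two predicates: soundness transfers for free, while consistency must be re-established for every formula that newly enters $\isvalid'$, with all the work reducing to the consistency of $\conf$. Soundness of $\conf'$ is immediate, because the predicate ``sound'' constrains only the components $G$, $k$, $\counter$, and $\valuation$, and a type-1 transition leaves all four unchanged; hence $\valuation(X) = \sem{\detailadorn{\binds{\varphi}{X}}{\counter}{k}}{G}{\valuation}$ continues to hold for every $X \in \vars{\varphi}$.

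For consistency I would fix an arbitrary $\alpha \in \isvalid'$. By the definition of $\isvalid'$ this already records two facts: $\sub{\alpha} \subseteq \isvalid$, and $\alpha$ is \emph{not} a fixpoint formula with $\counter(\alpha) < k-1$. Conditions (b) and (c) then fall out at once. For (c): if $\alpha \in \rfp{\varphi}$, the exclusion built into $\isvalid'$ forces $\counter(\alpha) \geq k-1$, while the configuration invariant $\counter \leq k-1$ forces $\counter(\alpha) \leq k-1$, so $\counter(\alpha) = k-1$. For (b): from $\sub{\alpha} \subseteq \isvalid$ together with Lemma~\ref{lem:trans-1-extends-valid}, which gives $\isvalid \subseteq \isvalid'$, we get $\sub{\alpha} \subseteq \isvalid'$.

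The remaining condition (a), namely $\result'(\alpha) = \sem{\adorn{\alpha}{k}}{G}{\valuation}$, I would prove by a case analysis on the syntactic form of $\alpha$. In each case the driving fact is that $\sub{\alpha} \subseteq \isvalid$, so consistency of $\conf$ already supplies $\result(\beta) = \sem{\adorn{\beta}{k}}{G}{\valuation}$ for every direct subformula $\beta$. For the non-fixpoint cases the defining clause of $\result'$ literally mirrors the inductive clause of the adorned semantics---intersection for $\wedge$, union for $\vee$, the cardinality tests for $\atleast{\ell}$ and $\allbut{\ell}$, and the base clauses for $p$, $\neg p$, $X$---so substituting the subformula equalities immediately yields $\sem{\adorn{\alpha}{k}}{G}{\valuation}$, using that $\adorn{\cdot}{k}$ commutes with the Boolean and modal connectives.

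The one genuinely non-trivial case, and the main obstacle, is the fixpoint case $\alpha = \pi X.\psi$, where $\result'(\alpha) = \result(\psi)$. Here I would exploit condition (c) just established: since $\alpha \in \isvalid' \cap \rfp{\varphi}$ we have $\counter(\alpha) = k-1$, and therefore by soundness $\valuation(X) = \sem{\detailadorn{\alpha}{k-1}{k}}{G}{\valuation}$. Unfolding the outermost approximant one step then gives
\[
  \sem{\adorn{\alpha}{k}}{G}{\valuation}
  = \sem{\adorn{\psi}{k}}{G}{\valuation[X \mapsto \sem{\detailadorn{\alpha}{k-1}{k}}{G}{\valuation}]}
  = \sem{\adorn{\psi}{k}}{G}{\valuation},
\]
where the final equality holds because the update $\valuation[X \mapsto \valuation(X)]$ is just $\valuation$. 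Since $\psi \in \sub{\alpha} \subseteq \isvalid$, consistency of $\conf$ yields $\result(\psi) = \sem{\adorn{\psi}{k}}{G}{\valuation}$, and hence $\result'(\alpha) = \result(\psi) = \sem{\adorn{\alpha}{k}}{G}{\valuation}$, as required. This is exactly the folding-back step sketched after Definition~\ref{def:trans-one}, and it is the only point where the soundness hypothesis is essential; everything else is bookkeeping driven by the consistency of $\conf$.
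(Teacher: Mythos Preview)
Your proposal is correct and follows essentially the same approach as the paper's own proof: soundness is immediate since $\trans{1}$ leaves $G$, $k$, $\counter$, $\valuation$ untouched, and consistency is verified by case analysis with the fixpoint case handled via the soundness equation $\valuation(X) = \sem{\detailadorn{\alpha}{k-1}{k}}{G}{\valuation}$ together with one unfolding of the outermost approximant. If anything, you are slightly more explicit than the paper in two places---invoking Lemma~\ref{lem:trans-1-extends-valid} for condition~(b), and the configuration invariant $\counter \leq k-1$ for condition~(c)---where the paper simply writes ``immediate by definition of $\isvalid'$.''
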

\begin{proof}
  Let $\conf = (G,k, \counter, \valuation, \result, \isvalid, \iskstable, \isckstable)$ and
  $\conf' = (G, k, \counter, \valuation, \result', \isvalid', \iskstable', \isckstable)$.
  Soundness of $\conf'$ is immediate, as $\conf'$ does not change its valuation.
  To prove that $\conf'$ is consistent, we need show that for all $\alpha \in \isvalid'$:
  \begin{enumerate}
    \item $\result'(\alpha) = \sem{\adorn{\alpha}{k}}{G}{\valuation}$;
    \item $\sub{\alpha} \subseteq \isvalid'$;
    \item if $\alpha$ is a fixpoint formula, then $\counter(\alpha) = k-1$.
  \end{enumerate}
  Items (2) and (3) are immediate by definition of $\isvalid'$. It hence remains
  to prove item (1).

  For all non-fixpoint formulae $\alpha$, this follows
  directly from the definition of $\result'$, the observation that if
  $\alpha \in \isvalid'$ then all immediate subformulae $\beta$ of $\alpha$ must
  be in $\isvalid$, and the consistency of $\conf$, implying that
  $\result(\beta) = \sem{\adorn{\beta}{\counter}}{G}{\valuation}$.

  We next illustrate the reasoning when $\alpha$ is a fixpoint formula
  $\alpha = \mu X. \psi$. The case when $\alpha = \nu X. \psi$ is similar. First
  observe that by definition of configuration, $\counter(\alpha) \leq k -1$.
  Assume that $\alpha \in \isvalid'$.  By definition,
  $\rsub{\psi} = \tsub{\alpha} \subseteq \isvalid$, and
  $\counter(\alpha) \not < k - 1$. So, $\counter(\alpha) = k-1$. Therefore,
  $\detailadorn{\alpha}{\counter}{k} = \mu^{k-1} X. \adorn{\psi}{k}$.Then
  \begin{align*}
    \sem{\adorn{\alpha}{k}}{G}{\valuation}
    & = \sem{\mu^k X. \adorn{\psi}{k}}{G}{\valuation} \\
    & = \sem{\adorn{\psi}{k}}{G}{\valuation[X \mapsto \sem{\mu^{k-1} X. \adorn{\psi}{k}}{G}{\valuation}]} \\
    & = \sem{\adorn{\psi}{k}}{G}{\valuation[X \mapsto \sem{\detailadorn{\alpha}{\counter}{k}}{G}{\valuation}]} \\
    & = \sem{\adorn{\psi}{k}}{G}{\valuation} \\
    & = \result(\psi) \\
    & = \result'(\mu X. \psi) \\
    & = \result'(\alpha)
  \end{align*}
  The first equality is by assumption; the second by semantics of adorned formulae; the third by our observation that $\detailadorn{\alpha}{\counter}{k} = \mu^{k-1} X. \psi$; the fourth by soundness of $\conf$ and the fact that $\alpha = \binds{\varphi}{X}$, which imply that $\valuation(X) = \sem{\detailadorn{\alpha}{\counter}{k}}{G}{\valuation}$; the fifth by consistency of $\conf$ and the fact that $\psi \in \isvalid$; and the second-to last by definition of $\result'$.
\end{proof}

 \end{toappendix}

\begin{toappendix}
  \begin{lemma}
  \label{lem:j-k-stability-implies-smaller-j-k-stability}
  If a formula $\alpha$ is $(j,k)$-stable on
  $(G, \valuation, n)$ and $j' \leq j$, then
  $\alpha$ is $(j',k)$-stable on $(G,\valuation, n)$.
\end{lemma}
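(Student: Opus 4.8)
The plan is to observe that this lemma is an immediate consequence of unfolding Definition~\ref{def:c-k-stability}, since $(j,k)$-stability is phrased as a \emph{universally quantified} condition over the index range $0 \leq i < j$, and shrinking $j$ to $j' \leq j$ only shrinks this range. First I would note that $(j,k)$-stability is defined only for fixpoint formulae, so I may write $\alpha = \pi X. \psi$ from the outset. Unfolding the definition, the hypothesis that $\alpha$ is $(j,k)$-stable on $(G,\valuation,n)$ states precisely that for every $0 \leq i < j$, the body $\psi$ is $k$-stable on $(G, V_i, n)$, where $V_i = \valuation[X \mapsto \sem{\detailadorn{\alpha}{i}{k}}{G}{\valuation}]$.

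The crucial point to record is that the valuations $V_i$ appearing in this condition depend only on the index $i$ (together with the fixed data $\alpha$, $k$, $G$, and $\valuation$), and in particular do \emph{not} depend on the upper bound $j$. Consequently the very same valuations $V_0, V_1, \dots$ occur in the definitions of both $(j,k)$-stability and $(j',k)$-stability. Since $j' \leq j$, every index $i$ satisfying $0 \leq i < j'$ also satisfies $0 \leq i < j$, so the set of instances $\{\,\psi \text{ is } k\text{-stable on } (G,V_i,n) \mid 0 \leq i < j'\,\}$ required for $(j',k)$-stability is a subset of the instances $\{\,\psi \text{ is } k\text{-stable on } (G,V_i,n) \mid 0 \leq i < j\,\}$ that are guaranteed by the hypothesis. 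Each of the former is therefore already established, and I conclude directly that $\alpha$ is $(j',k)$-stable on $(G,\valuation,n)$.

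There is essentially no obstacle here: the statement is a monotonicity-in-$j$ property that falls out of the shape of the quantifier, and the proof is a one-line appeal to the definition rather than an induction or a semantic computation. The only thing worth making explicit, to avoid any confusion, is the observation above that the witnessing valuations $V_i$ are independent of the bound $j$, so that no re-indexing or recomputation is needed when passing from $j$ to $j'$.
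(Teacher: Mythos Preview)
Your proposal is correct and matches the paper's approach: the paper simply writes ``Trivial,'' and your unfolding of Definition~\ref{def:c-k-stability} spells out exactly the reason, namely that the universally quantified index range $0 \leq i < j'$ is contained in $0 \leq i < j$ with the same valuations $V_i$.
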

\begin{proof}
  Trivial.
\end{proof}

In Section~\ref{sec:incremental} we defined fixpoint formula $\alpha$ to be
$(\counter,k)$-stable on $(G,\valuation,n)$ if $\alpha$ is
$(\counter(\alpha), k)$-stable on $(G, \valuation, n)$.

With extra guarantees, $(\counter,k)$-stability also implies $k$-stability:
\begin{lemma}
  \label{lem:c-k-stability-implies-k-stability-when-nested-fixpoints-ready}
  Let $\alpha \in \rfp{\varphi}$ be of the form $\pi X. \psi$ and $n \in G$. If (i)
  $n \in \sem{\adorn{\alpha}{k}}{G}{\valuation} \iff n \in
  \sem{\detailadorn{\alpha}{k-1}{k}}{G}{\valuation}$; (ii)
  $\counter(\alpha) = k-1$; (iii) $\alpha$ is $(\counter,k)$-stable on
  $(G,\valuation,n)$; (iv)
  $\valuation(X) = \sem{\detailadorn{\alpha}{\counter}{k}}{G}{\valuation}$;
  and (v) $\psi$ is $k$-stable on $(G,\valuation,n)$, then $\alpha$ is
  $k$-stable on $(G,\valuation,n)$.
\end{lemma}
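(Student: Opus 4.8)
The plan is to verify directly the two defining conditions of $k$-stability from Definition~\ref{def:pointwise-stable} for the fixpoint formula $\alpha = \pi X. \psi$, reading each condition off from the five hypotheses. The key preliminary observation is that uniform and detailed adornment agree at the outermost level: adorning $\alpha$ uniformly by $k$ iterates both the outer fixpoint and all inner fixpoints $k$ times, so $\adorn{\alpha}{k} = \detailadorn{\alpha}{k}{k}$. Consequently hypothesis~(i) is literally condition~(1) of $k$-stability, namely that $n \in \sem{\detailadorn{\alpha}{k}{k}}{G}{\valuation}$ iff $n \in \sem{\detailadorn{\alpha}{k-1}{k}}{G}{\valuation}$, and nothing further is needed there.

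It then remains to establish condition~(2): for every $0 \leq i < k$, $\psi$ is $k$-stable on $(G,\valuation_i,n)$ with $\valuation_i = \valuation[X \mapsto \sem{\detailadorn{\alpha}{i}{k}}{G}{\valuation}]$. I would split the range of $i$ into $0 \leq i < k-1$ and $i = k-1$. For the first range, I unfold hypothesis~(iii) using~(ii): since $\counter(\alpha) = k-1$, being $(\counter,k)$-stable means being $(k-1,k)$-stable, which by Definition~\ref{def:c-k-stability} says exactly that $\psi$ is $k$-stable on $(G,\valuation_i,n)$ for all $0 \leq i < k-1$. This is precisely condition~(2) restricted to that range.

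For the boundary index $i = k-1$, the point is that the valuation $\valuation_{k-1}$ collapses to $\valuation$ itself. Indeed, by hypothesis~(iv) together with~(ii) we have $\valuation(X) = \sem{\detailadorn{\alpha}{\counter}{k}}{G}{\valuation} = \sem{\detailadorn{\alpha}{k-1}{k}}{G}{\valuation}$, so $\valuation_{k-1} = \valuation[X \mapsto \valuation(X)] = \valuation$. Hypothesis~(v) then gives that $\psi$ is $k$-stable on $(G,\valuation,n) = (G,\valuation_{k-1},n)$, which discharges the last case of condition~(2). With both conditions verified, $\alpha$ is $k$-stable on $(G,\valuation,n)$.

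I do not expect a genuine obstacle here, since the lemma is essentially a repackaging of the definitions. The only points requiring care are the identification $\adorn{\alpha}{k} = \detailadorn{\alpha}{k}{k}$ and the collapse $\valuation_{k-1} = \valuation$: the whole argument hinges on hypotheses~(iv) and~(v) supplying exactly the single boundary case $i = k-1$ of stability that $(\counter,k)$-stability deliberately omits, so I would make sure to flag why $(\counter,k)$-stability alone is strictly weaker and what the extra assumptions add.
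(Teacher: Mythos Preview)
Your proposal is correct and follows essentially the same approach as the paper's proof: both verify condition~(1) directly from hypothesis~(i), then split condition~(2) into the range $0 \leq i < k-1$ (handled by hypotheses~(ii) and~(iii)) and the boundary case $i = k-1$ (handled by observing $\valuation_{k-1} = \valuation$ via~(iv) and then invoking~(v)). Your write-up is slightly more explicit about the identification $\adorn{\alpha}{k} = \detailadorn{\alpha}{k}{k}$ and the collapse $\valuation_{k-1} = \valuation$, but the argument is the same.
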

\begin{proof}
  Let $k$, $\counter$, $\alpha$, $n$, and $\valuation$ be as stated and assume that
  (i)--(v) hold. To establish that $\alpha$ is $k$-table on $(G,\valuation,n)$.
  We need to show two things to establish $k$-stability of $\alpha$.
  \begin{itemize}
    \item
      $n \in \sem{\detailadorn{\alpha}{k}{k}}{G}{\valuation} \iff n \in
      \sem{\detailadorn{\alpha}{k-1}{k}}{G}{\valuation}$, which is true by
      assumption (i).
    \item for every $0 \leq i < k$, $\psi$ is $k$-stable on $(G,V_i,n)$ where
      $\valuation_i = \valuation[X \mapsto \sem{\detailadorn{\alpha}{i}{k}}{G}{\valuation}]$.
      Since $\counter(\alpha) = k-1$ by assumption (ii), assumption (iii) ensures
      that $\psi$ is $k$-stable on $(G, V_i, n)$ for $0 \leq i < k-1$.  Further,
      observe that $\valuation_{k-1} = \valuation$ by assumption (iv). Hence, by assumption (v), $\psi$ is also $k$-stable on $(G, \valuation_{k-1}, n)$, as desired. \qedhere
  \end{itemize}
\end{proof}

 \end{toappendix}

Based on this reasoning, we can formally prove:
\begin{lemmarep}
  \label{lem:trans-1-preservation-full}
  If $\conf \trans{1} \conf'$ and $\conf$ is coherent then so is
  $\conf'$.
\end{lemmarep}
\begin{proof}
  Let $\conf = (G,k, \counter, \valuation, \result, \isvalid, \iskstable, \isckstable)$ and
  $\conf' = (G, k, \counter, \valuation, \result', \isvalid', \iskstable', \isckstable)$. By Lemma~\ref{lem:trans-1-preservation-of-soundness-and-consistency}, $\conf'$ is sound and consistent.   It hence remains to prove that $\conf'$ tracks stability, for which we need to show that
  \begin{enumerate}
    \item $\iskstable'(\alpha) = \{ n \in G \mid \alpha \text{ is $k$-stable on } (G,\valuation,n)\}$ for all $\alpha \in \isvalid'$; and
    \item $\isckstable(\alpha) = \{ n \in G \mid \alpha \text{ is $(\counter,k)$-stable on } (G,\valuation,n)\}$ for all $\alpha \in \rfp{\varphi}$.
  \end{enumerate}
Item (2) follows immediately from the fact that $\conf$ tracks stability. It
  remains to show item (1).  Fix $\alpha \in \isvalid'$. Then $\sub{\alpha} \subseteq \isvalid$. We distinguish two cases.
  \begin{itemize}
    \item If $\alpha$ is a not a fixpoint formula then by definition of $\iskstable'(\alpha)$ we have
      \begin{align*}
        \isvalid'(\alpha)
        & = \nodes{G} \cap \bigcap_{\beta \in \sub{\alpha}} \iskstable(\beta) \\
        & = \{ n \in \nodes{G} \mid n \in \iskstable(\beta) \text{ for all } \beta \in \sub{\alpha}\} \\
        & = \{ n \in \nodes{G} \mid \beta \text{ is } k\text{-stable on } (G,\valuation,n) \\
          & \phantom{= \{ n \in \nodes{G} \mid } \ \text{ for all } \beta \in \sub{\alpha}\} \\
          & = \{ n \in \nodes{G} \mid \alpha \text{ is } k\text{-stable on } (G,\valuation,n) \}.
        \end{align*}
        The first equality is by definition of $\iskstable'(\alpha)$, the third because $\conf$ tracks stability and $\sub{\alpha} \subseteq \isvalid$; and the fourth by definition of $k$-stability.

      \item If $\alpha$ is a fixpoint formula, $\alpha = \pi X. \psi$ then
        $\alpha = \binds{\varphi}{X}$. Since $\alpha \in \isvalid'$,
        $\psi \in \isvalid$ and $\counter(\alpha) = k-1$. Recall that $\conf'$ is sound and
        consistent by
        Lemma~\ref{lem:trans-1-preservation-of-soundness-and-consistency}. Therefore,
        \begin{equation}
          \label{eq:lem:trans-1-preservation-full:eq1}
          \valuation(X) = \sem{\detailadorn{\binds{\varphi}{X}}{C}{k}}{G}{\valuation} = \sem{\detailadorn{\alpha}{C}{k}}{G}{\valuation} = \sem{\detailadorn{\alpha}{k-1}{k}}{G}{\valuation}
        \end{equation}
        Moreover,
        \begin{align}
          \label{eq:lem:trans-1-preservation-full:eq2}
          \result'(\psi) & = \sem{\adorn{\psi}{k}}{G}{\valuation} \nonumber\\
          & = \sem{\adorn{\psi}{k}}{G}{\valuation[X \mapsto \sem{\detailadorn{\alpha}{C}{k}}{G}{\valuation}]} \nonumber\\
          & = \sem{\adorn{\psi}{k}}{G}{\valuation[X \mapsto \sem{\detailadorn{\alpha}{k-1}{k}}{G}{\valuation}]} \nonumber\\
          & = \sem{\pi^k X. \adorn{\psi}{k}}{G}{\valuation} \nonumber \\
          & = \sem{\adorn{\alpha}{k}}{G}{\valuation}
        \end{align}
        The second equality is by soundness of $\conf$, the third by
        \ref{eq:lem:trans-1-preservation-full:eq1}, the fourth by semantics of adorned
        formulae, and the last by definition of uniform adornment.

        Then we reason as follows.
        \begin{align*}
          \iskstable'& (\pi X. \psi) \\
          & = \iskstable(\psi) \cap \isckstable(\pi X.\psi) \\
          & \qquad \cap \{ n \in G \mid n \in \valuation(X) \text{ iff } n \in \result'(\psi) \} \\
          & = \{ n \in \nodes{G} \mid n \in \iskstable(\psi), n \in \isckstable(\pi X.\psi), \\
            & \phantom{= \{ n \in \nodes{G} \mid\ }
            \ n \in \valuation(X) \text{ iff } n \in \result'(\psi) \} \\
            & = \{ n \in \nodes{G} \mid \psi \text{ is } k\text{-stable on } (G,\valuation,n) \\
              & \phantom{= \{ n \in \nodes{G} \mid\ }
                \ \pi X.\psi \text{ is } (\counter,k)\text{-stable on } (G,\valuation,n) \\
                & \phantom{= \{ n \in \nodes{G} \mid\ }
                \  n \in \valuation(X) \text{ iff } n \in \result'(\psi) \} \\
                & = \{ n \in \nodes{G} \mid \psi \text{ is } k\text{-stable on } (G,\valuation,n) \\
                  & \phantom{= \{ n \in \nodes{G} \mid\ }
                    \ \pi X.\psi \text{ is } (\counter,k)\text{-stable on } (G,\valuation,n) \\
                    & \phantom{= \{ n \in \nodes{G} \mid\ }
                      \  n \in \sem{\detailadorn{\alpha}{k-1}{k}}{G}{\valuation} \text{ iff }
                    n \in \sem{\adorn{\alpha}{k}}{G}{\valuation} \} \\
                    & = \{ n \in \nodes{G} \mid \pi X. \psi \text{ is } k\text{-stable on } (G,\valuation,n) \}
                  \end{align*}
                  The first equality is by definition; the second by straightforward rewriting; the third because $\conf$ tracks stability and $\psi \in \isvalid$; the fourth by \eqref{eq:lem:trans-1-preservation-full:eq1} and \eqref{eq:lem:trans-1-preservation-full:eq2}; and the last by Lemma~\ref{lem:c-k-stability-implies-k-stability-when-nested-fixpoints-ready}. \qedhere
              \end{itemize}
            \end{proof}

            To define the second type of transition, we require the notion of a ticking
            fixpoint formula. We say that $\alpha \in \rfp{\varphi}$ \emph{ticks in
            configuration $\conf$} if (1) $\sub{\alpha} \subseteq \isvalid$; (2)
            $\counter(\alpha) < k - 1$; and (3) $\counter(\beta) = k-1$ for every
            $\beta \in \tfp{\alpha}$. We write $\ticks{\conf}$ for the subset of
            $\rfp{\varphi}$ that tick in $\conf$.  Note that if $\alpha$ ticks, none of
            its strict fixpoint subformulae can tick. We observe:

            \begin{lemmarep}
              \label{lem:ticks-implies-counter+1-iterations-done}
              If $\alpha = \pi X. \psi$ ticks in
              coherent configuration $\conf$ then
              $\sem{\detailadorn{\alpha}{\counter(\alpha)+1}{k}}{G}{\valuation} = \result(\psi)$.
            \end{lemmarep}
            \begin{proof}
              Let $\conf = (G, k, \counter, \valuation, \result, \isvalid, \iskstable,\isckstable)$ be a coherent configuration in which $\alpha = \pi X. \psi$ ticks. We want to show that \(\sem{\detailadorn{\alpha}{\counter(\alpha)+1}{k}}{G}{\valuation} = \result(\psi)\). We begin by expanding the left hand side of the equivalence.
              \begin{align*}
                \sem{\detailadorn{\alpha}{\counter(\alpha)+1}{k}}{G}{\valuation} &= \sem{\pi^{\counter(\alpha)+1} X. \adorn{\psi}{k}}{G}{\valuation} \\
                &= \sem{\adorn{\psi}{k}}{G}{\valuation[X \mapsto \sem{\pi^{\counter(\alpha)} X. \adorn{\psi}{k}}{G}{\valuation}]}
              \end{align*}
              Let us shorten \(\valuation' = \valuation[X \mapsto \sem{\pi^{\counter(\alpha)} X. \adorn{\psi}{k}}{G}{\valuation}]\), so we get \(\sem{\detailadorn{\alpha}{\counter(\alpha)+1}{k}}{G}{\valuation} = \sem{\adorn{\psi}{k}}{G}{\valuation'}\).

              Since \(\alpha \in \ticks{\conf}\), we find \(\psi \in \isvalid\). As \(\conf\) is coherent, this implies \(\result(\psi) = \sem{\adorn{\psi}{k}}{G}{\valuation}\). Hence, we have to show that \(\sem{\adorn{\psi}{k}}{G}{\valuation'} = \sem{\adorn{\psi}{k}}{G}{\valuation}\). Since the semantics of a formula depend only on the valuation of the free variables, we can restrict our attention to the free variables of \(\psi\). In this case, \(\free{\psi} = \free{\alpha} \cup \{X\}\).
              \begin{itemize}
                \item For all \(Y \in \free{\alpha}\), we find \(\valuation(Y) = \valuation'(Y)\) by the definition of \(\valuation'\).
                \item For \(X\), we find \(\valuation'(X) = \sem{\pi^{\counter(\alpha)} X. \adorn{\psi}{k}}{G}{\valuation}\). Because \(\conf\) is coherent, and thus sound, we find \(\valuation(X) = \sem{\pi^{\counter(\alpha)} X. \adorn{\psi}{k}}{G}{\valuation}\). This also equals \(\valuation'(X) = \valuation(X)\).
              \end{itemize}
              Hence, we find that \(\valuation'(Y) = \valuation(Y)\) for all \(Y \in \free{\psi}\). This implies that \(\sem{\adorn{\psi}{k}}{G}{\valuation'} = \sem{\adorn{\psi}{k}}{G}{\valuation}\), and thus \(\sem{\detailadorn{\alpha}{\counter(\alpha)+1}{k}}{G}{\valuation} = \result(\psi)\).
            \end{proof}

            This  lemma shows that if  $\alpha = \pi X. \psi$ ticks in a coherent $\conf$ then in $\result(\psi)$ we have already computed $\counter(\alpha)+1$ iterations of $\alpha$'s outermost fixpoint. However, since $\counter(\alpha) +1 <  k$, we have not yet computed all necessary $k$ fixpoint iterations of $\adorn{\alpha}{k} = \detailadorn{\alpha}{k}{k}$. To ensure that computation can continue, a type-2 transition therefore changes the configuration such that it causes $\sem{\detailadorn{\alpha}{\counter(\alpha)+2}{k}}{G}{\valuation}$ to be computed in further transition steps. It does so by copying $\result(\psi) = \sem{\detailadorn{\alpha}{\counter(\alpha)+1}{k}}{G}{\valuation}$ to $\valuation(X)$, increasing $\counter(\alpha)$, and resetting the computation of all subformulae that depend on the value of $X$, which has now changed.

            Formally, define  $\reset{\conf}$ to be the smallest subset of $\vars{\varphi}$ satisfying
            \begin{multline*}
              \reset{\conf}  = \{X \mid \binds{\varphi}{X} \in \ticks{\conf}\} 
              \\ \cup \{ Y \mid \free{\binds{\varphi}{Y}} \cap \reset{\conf} \not = \emptyset\}.
            \end{multline*}
            Define  $\dep{\conf} \defeq \{ \binds{\varphi}{X} \mid X \in \reset{\conf}\} \setminus \ticks{\conf}$. We say that the elements of $\dep{\conf}$ \emph{depend} on a tick in $\conf$.

            \begin{definition}
              \label{def:trans-2}
              Let $\conf = (G, k, \counter, \valuation, \result, \isvalid, \iskstable,\isckstable)$ be a configuration. A \emph{type-2} transition on $\conf$ yields the configuration $\conf' = (G, k, \counter', \valuation', \result, \isvalid', \iskstable, \isckstable')$ where
              \begin{align*}
                \counter'(\alpha) &\defeq
                \begin{cases}
                  \counter(\alpha) + 1 & \text{if } \alpha \in \ticks{\conf}\\
                  0 & \text{if } \alpha \in \dep{\conf} \\
                  \counter(\alpha) & \text{otherwise}\\
                \end{cases} \\
                \valuation'(X) & \defeq
                \begin{cases}
                  \result(\psi) & \text{if } \binds{\varphi}{X} \in \ticks{\conf},  \binds{\varphi}{X}= \pi X.\psi\\
                  \emptyset & \text{if } \binds{\varphi}{X} \in \dep{\conf} \cap \rlfp{\varphi} \\
                  \nodes{G} & \text{if } \binds{\varphi}{X} \in \dep{\conf} \cap \rgfp{\varphi} \\
                  \valuation(X) &\text{otherwise}
                \end{cases} \\
\isvalid' & \defeq \{ \alpha \in \isvalid \mid \free{\alpha} \cap \reset{\conf} = \emptyset \} \\
\isckstable'(\pi X. \psi) & \defeq
                \begin{cases}
                  \isckstable(\pi X. \psi) \cap \iskstable(\psi) & \text{ if } \pi X. \psi \in \ticks{\conf} \\
                  \nodes{G} & \text{ if } \pi X. \psi \in \dep{\conf} \\
                  \isckstable(\pi X. \psi) & \text{ otherwise}
                \end{cases}
\end{align*}
We denote this by $\conf \trans{2} \conf'$.
            \end{definition}

            Note that if no fixpoint formula ticks in $\conf$, then $\conf' = \conf$, i.e., the transition is a no-op.

            \begin{toappendix}
              The following lemma shows that a type-2 transition changes the valuation only for those variables that do not reset.
\begin{lemma}
  \label{lem:trans-2-equal-vars}
  If $\conf \trans{2} \conf'$ then $\valuation'(X) = \valuation(X)$ for all $X \in \vars{\varphi} \setminus \reset{\conf}$, where $\valuation$ and $\valuation'$ are the valuations of $\conf$ and $\conf'$, respectively.
\end{lemma}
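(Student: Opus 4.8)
The plan is to prove this by a direct case analysis on the definition of the updated valuation $\valuation'$ given in Definition~\ref{def:trans-2}. That definition specifies $\valuation'(X)$ by four clauses, the last of which is the ``otherwise'' clause $\valuation'(X) = \valuation(X)$. So it suffices to fix an arbitrary $X \in \vars{\varphi} \setminus \reset{\conf}$ and show that none of the first three clauses apply to $X$; the claim then follows at once, since the ``otherwise'' clause is the only one left. The first three clauses are triggered precisely when $\binds{\varphi}{X} \in \ticks{\conf}$ (clause one) or $\binds{\varphi}{X} \in \dep{\conf}$ (clauses two and three, which differ only in whether $\binds{\varphi}{X}$ lies in $\rlfp{\varphi}$ or $\rgfp{\varphi}$). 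Hence the whole argument reduces to establishing that $X \notin \reset{\conf}$ rules out both of these memberships.

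For the first clause, I would argue as follows. If $\binds{\varphi}{X} \in \ticks{\conf}$, then by the very first set in the fixpoint definition of $\reset{\conf}$, namely $\{X' \mid \binds{\varphi}{X'} \in \ticks{\conf}\}$, we immediately get $X \in \reset{\conf}$, contradicting $X \notin \reset{\conf}$. Thus $\binds{\varphi}{X} \notin \ticks{\conf}$ and clause one does not apply.

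For clauses two and three, both require $\binds{\varphi}{X} \in \dep{\conf}$, where $\dep{\conf} = \{\binds{\varphi}{Y} \mid Y \in \reset{\conf}\} \setminus \ticks{\conf}$. So if $\binds{\varphi}{X} \in \dep{\conf}$, there is some $Y \in \reset{\conf}$ with $\binds{\varphi}{X} = \binds{\varphi}{Y}$. The key step, and the only mildly subtle point in the whole proof, is to invoke well-namedness: since $\varphi$ is well-named, every variable is bound at most once, so the assignment $X \mapsto \binds{\varphi}{X}$ is injective, as a fixpoint formula $\pi Z. \psi$ binds only $Z$. Consequently $\binds{\varphi}{X} = \binds{\varphi}{Y}$ forces $X = Y$, whence $X = Y \in \reset{\conf}$, again contradicting $X \notin \reset{\conf}$. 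Therefore $\binds{\varphi}{X} \notin \dep{\conf}$, and neither clause two nor clause three applies.

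Having ruled out all three special clauses, only the ``otherwise'' clause remains, giving $\valuation'(X) = \valuation(X)$, as required. I expect no genuine obstacle here; the one place that warrants care is the appeal to injectivity of $X \mapsto \binds{\varphi}{X}$, which is exactly where the standing well-namedness assumption on $\varphi$ is used. Everything else is a mechanical reading-off of the defining clauses of $\valuation'$, $\reset{\conf}$, and $\dep{\conf}$.
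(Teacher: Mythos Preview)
Your proof is correct and follows essentially the same approach as the paper's: both argue that $X \notin \reset{\conf}$ forces $\binds{\varphi}{X} \notin \ticks{\conf}$ and $\binds{\varphi}{X} \notin \dep{\conf}$, so the ``otherwise'' clause of $\valuation'$ applies. You are simply more explicit than the paper about the role of well-namedness in ensuring injectivity of $X \mapsto \binds{\varphi}{X}$, which the paper uses tacitly when asserting the equivalence $X \in \reset{\conf} \iff \binds{\varphi}{X} \in \ticks{\conf} \text{ or } \binds{\varphi}{X} \in \dep{\conf}$.
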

\begin{proof}
  Notation-wise, assume $(G, k, \counter, \valuation, \result, \isvalid, \iskstable, \isckstable) = \conf$ and
  $(G, k, \counter', \valuation', \result, \isvalid', \iskstable, \isckstable') = \conf'$.
  By definition of $\reset{\conf}$ we have for all variables $X$ that $X \in \reset{\conf}$ iff $\binds{\varphi}{X} \in \ticks{\conf}$ or $\binds{\varphi}{X} \in \dep{\conf}$.
  Let $X \in \vars{\varphi} \setminus \reset{\conf}$.
  Then $\binds{\varphi}{X} \not \in \ticks{\conf}$ and $\binds{\varphi}{X} \not \in \dep{\conf}$. Hence, by definition of $\valuation'$, $\valuation'(X) = \valuation(X)$.
\end{proof}

The following lemma shows that the only variables that can reset belong to
fixpoint formulae that are (not necessarily strict) subformulae of ticking
fixpoints.
\begin{lemma}
  \label{lem:deps}
  If $X \in \reset{\conf}$ then $\binds{\varphi}{X} \in \rfp{\ticks{\conf}}$.
\end{lemma}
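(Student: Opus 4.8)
The plan is to exploit the fact that $\reset{\conf}$ is defined as a \emph{least} fixpoint, so that it admits proof by induction on the stages of its inductive construction. Concretely, I would prove the statement for every variable $X$ by induction on the number of iterations of the defining operator needed to place $X$ into $\reset{\conf}$. For the base case, suppose $X$ enters $\reset{\conf}$ because $\binds{\varphi}{X} \in \ticks{\conf}$. Then $\binds{\varphi}{X}$ is itself a ticking fixpoint formula, and since $\rfp{\alpha}$ contains $\alpha$ whenever $\alpha$ is a fixpoint formula, we immediately get $\binds{\varphi}{X} \in \rfp{\binds{\varphi}{X}} \subseteq \rfp{\ticks{\conf}}$, the last inclusion holding because $\binds{\varphi}{X} \in \ticks{\conf}$ is one of the formulae over which $\rfp{\ticks{\conf}}$ is unioned.

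For the inductive step, suppose $Y$ enters $\reset{\conf}$ because $\free{\binds{\varphi}{Y}} \cap \reset{\conf} \neq \emptyset$; fix a variable $Z$ in this intersection. Since $Z$ lay in $\reset{\conf}$ at an earlier stage, the induction hypothesis yields $\binds{\varphi}{Z} \in \rfp{\ticks{\conf}}$, i.e.\ $\binds{\varphi}{Z}$ is a (not necessarily strict) fixpoint subformula of some $\alpha \in \ticks{\conf}$. It then remains to relate $\binds{\varphi}{Y}$ to $\binds{\varphi}{Z}$ and to invoke transitivity of the subformula relation to conclude $\binds{\varphi}{Y} \in \rfp{\alpha} \subseteq \rfp{\ticks{\conf}}$.

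The only non-routine part will be the scoping argument establishing that $\binds{\varphi}{Y}$ is a subformula of $\binds{\varphi}{Z}$. The crux is that $Z \in \free{\binds{\varphi}{Y}}$, so $Z$ occurs free inside the formula $\binds{\varphi}{Y}$. Because $\varphi$ is well-named, $Z$ is bound exactly once in $\varphi$, namely by $\binds{\varphi}{Z} = \pi Z. \psi_Z$, and every free occurrence of $Z$ in $\varphi$ must lie within the scope of this unique binder, that is, within $\psi_Z$. In particular the free occurrence of $Z$ inside $\binds{\varphi}{Y}$ lies within $\binds{\varphi}{Z}$, which forces $\binds{\varphi}{Y}$ itself to be a subformula of $\binds{\varphi}{Z}$, hence $\binds{\varphi}{Y} \in \rfp{\binds{\varphi}{Z}}$. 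Combining this with $\binds{\varphi}{Z} \in \rfp{\alpha}$ and the transitivity of the reflexive fixpoint-subformula operator closes the induction. I would likely isolate the purely syntactic fact that a free occurrence of a variable in a well-named formula always lies within that variable's unique binder as a small auxiliary observation, since it is the real engine of the argument and the remaining bookkeeping is routine.
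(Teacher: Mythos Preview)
Your proposal is correct and follows essentially the same approach as the paper's proof: both argue by induction on the inductive definition of $\reset{\conf}$, handle the base case via $\ticks{\conf} \subseteq \rfp{\ticks{\conf}}$, and in the inductive step use the well-namedness (and closedness) of $\varphi$ to conclude that if $Z \in \free{\binds{\varphi}{Y}}$ then $\binds{\varphi}{Y}$ is a (strict) subformula of $\binds{\varphi}{Z}$, after which transitivity of the subformula relation finishes the job. Your write-up is slightly more explicit about the scoping argument and phrases the induction in terms of fixpoint stages rather than ``length of the membership argument,'' but these are cosmetic differences.
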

\begin{proof}
  The proof is by induction on the argument that shows that $X \in \reset{\conf}$.
  \begin{itemize}
    \item If $\binds{\varphi}{X} \in \ticks{\conf}$ then the result follows from
      the fact that $\ticks{\conf} \subseteq \rfp{\ticks{\conf}}$.
    \item If $\binds{\varphi}{X} \not \in \ticks{\conf}$ there is some
      $Y \in \free{\binds{\varphi}{X}} \cap \reset{\conf}$. Because $\varphi$ is
      a sentence and well-named, necessarily
      $\binds{\varphi}{X} \in \tfp{\binds{\varphi}{Y}}$. In particular,
      $X \not = Y$. There is hence a shorter argument that shows that
      $Y \in \reset{\conf}$. By induction hypothesis,
      $\binds{\varphi}{Y} \in \rfp{\ticks{\conf}}$. Hence
      $\binds{\varphi}{X} \in \rfp{\ticks{\conf}}$. \qedhere
  \end{itemize}
\end{proof}

\begin{lemma}
  \label{lem:trans-2-equal-vars-2}
  If $\conf \trans{2} \conf'$ then $\valuation'(X) = \valuation(X)$ for all $X \in \free{\ticks{\conf}}$, where $\valuation$ and $\valuation'$ are the valuations of $\conf$ and $\conf'$, respectively.
\end{lemma}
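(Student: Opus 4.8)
The plan is to reduce this to Lemma~\ref{lem:trans-2-equal-vars}, which already guarantees that the valuations $\valuation$ and $\valuation'$ of $\conf$ and $\conf'$ agree on every variable \emph{outside} $\reset{\conf}$. It therefore suffices to show that no free variable of a ticking formula is reset, i.e.\ that $\free{\ticks{\conf}} \cap \reset{\conf} = \emptyset$. Once this is established, every $X \in \free{\ticks{\conf}}$ lies in $\vars{\varphi} \setminus \reset{\conf}$ and the conclusion $\valuation'(X) = \valuation(X)$ is immediate from Lemma~\ref{lem:trans-2-equal-vars}.

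To prove the disjointness I would argue by contradiction. Fix $X \in \free{\ticks{\conf}}$ and suppose $X \in \reset{\conf}$. By definition of $\free{\ticks{\conf}}$ there is a ticking formula $\alpha \in \ticks{\conf}$ with $X \in \free{\alpha}$. Since $\varphi$ is a well-named sentence, $X$ is bound by a unique fixpoint formula $\binds{\varphi}{X}$; and because $X$ occurs free in $\alpha$, the formula $\alpha$ must lie strictly inside the scope of that binder. As $\alpha$ is itself a fixpoint formula, this yields $\alpha \in \tfp{\binds{\varphi}{X}}$.

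Next I would invoke Lemma~\ref{lem:deps}: from $X \in \reset{\conf}$ it gives $\binds{\varphi}{X} \in \rfp{\ticks{\conf}}$, so there is a ticking formula $\gamma \in \ticks{\conf}$ with $\binds{\varphi}{X} \in \rfp{\gamma}$. Whether $\binds{\varphi}{X}$ equals $\gamma$ or is a strict fixpoint subformula of it, transitivity of the strict-subformula relation together with $\alpha \in \tfp{\binds{\varphi}{X}}$ yields $\alpha \in \tfp{\gamma}$. The contradiction now drops out of the definition of ticking applied to the \emph{same} formula $\alpha$: since $\gamma$ ticks and $\alpha \in \tfp{\gamma}$, condition~(3) of ticking forces $\counter(\alpha) = k-1$ (with $\counter$ and $k$ the counter and bound of $\conf$); but $\alpha$ itself ticks, so condition~(2) forces $\counter(\alpha) < k-1$. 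Hence $X \notin \reset{\conf}$, establishing $\free{\ticks{\conf}} \cap \reset{\conf} = \emptyset$.

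The main obstacle is the structural observation in the second step---that a free variable of a ticking formula is necessarily bound strictly \emph{above} that formula, so the formula itself is a strict fixpoint subformula of its binder---together with correctly chaining the subformula relationships so that the two incompatible counter constraints (one coming from $\gamma$ ticking, the other from $\alpha$ ticking) both land on $\alpha$. Everything else is routine bookkeeping on top of Lemmas~\ref{lem:trans-2-equal-vars} and~\ref{lem:deps}.
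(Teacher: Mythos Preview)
Your proposal is correct and follows essentially the same approach as the paper: reduce to Lemma~\ref{lem:trans-2-equal-vars} by showing $\free{\ticks{\conf}}\cap\reset{\conf}=\emptyset$, use Lemma~\ref{lem:deps} to obtain a ticking $\gamma$ with $\binds{\varphi}{X}\in\rfp{\gamma}$, and derive a contradiction between conditions~(2) and~(3) of ticking applied to the ticking subformula $\alpha\in\tfp{\gamma}$. The only cosmetic difference is that the paper splits into the two cases $\binds{\varphi}{X}\in\ticks{\conf}$ versus $\binds{\varphi}{X}\in\tfp{\beta}$ for some ticking $\beta$, whereas you unify them via transitivity; the underlying argument is identical.
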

\begin{proof}
  Notation-wise, assume
  $(G, k, \counter, \valuation, \result, \isvalid, \iskstable, \isckstable) = \conf$ and
  $(G, k, \counter', \valuation', \result, \isvalid', \iskstable, \isckstable') = \conf'$.  Let
  $X \in \free{\ticks{\conf}}$. By Lemma~\ref{lem:trans-2-equal-vars}, it
  suffices to show that $X \not \in \reset{\conf}$, since this implies
  $\valuation'(X) = \valuation(X)$.

  We proceed as follows. Suppose, for the purpose of obtaining a contradiction
  that $X \in \reset{\conf}$. By Lemma~\ref{lem:deps},
  $\binds{\varphi}{X} \in \rsub{\ticks{\conf}}$. As such, either
  $\binds{\varphi}{X} \in \ticks{\conf}$ or there is some
  $\beta \in \ticks{\conf}$ with $\binds{\varphi}{X} \in \tfp{\beta}$. We show
  that neither can hold.

  Since $X \in \free{\ticks{\conf}}$ there exists $\alpha \in \ticks{\conf}$
  such that $X \in \free{\alpha}$. Since $\varphi$ is a sentence and is
  well-named, $\alpha \in \rsub{\binds{\varphi}{X}}$.  Because $\alpha$ ticks in
  $\conf$ we know that $\counter(\alpha) < k-1$. As such, $\binds{\varphi}{X}$
  cannot tick in $\conf$: if it did tick, this would require that
  $\counter(\alpha) = k-1$ since $\alpha$ is a strict fixpoint-subformula of
  $\binds{\varphi}{X}$ and every such subformula must be mapped to $k-1$ by
  $\counter$.  By the same reasoning no $\beta \in \ticks{\conf}$ with
  $\binds{\varphi}{X} \in \tfp{\beta}$ can tick, since this would also require
  that $\counter(\alpha) = k-1$ as
  $\alpha \in \rfp{\binds{\varphi}{X}} \subseteq \tfp{\beta}$.

  As such, $X \not \in \free{\ticks{\conf}}$.
\end{proof}

\begin{lemma}
  \label{lem:trans-2-preservation-of-soundness-and-consistency}
  If $\conf \trans{2} \conf'$ and $\conf$ is sound and consistent then so
  is $\conf'$.
\end{lemma}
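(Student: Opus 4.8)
The plan is to fix the notation $\conf = (G, k, \counter, \valuation, \result, \isvalid, \iskstable, \isckstable)$ and $\conf' = (G, k, \counter', \valuation', \result, \isvalid', \iskstable, \isckstable')$ exactly as in Definition~\ref{def:trans-2}, and then verify soundness and consistency of $\conf'$ separately. Two facts will be used repeatedly. First, as already recorded in the proof of Lemma~\ref{lem:trans-2-equal-vars}, $X \in \reset{\conf}$ iff $\binds{\varphi}{X} \in \ticks{\conf} \cup \dep{\conf}$, where these two sets are disjoint by the definition of $\dep{\conf}$. Second, since $\reset{\conf}$ is a fixpoint of its defining equation, $X \in \reset{\conf}$ holds precisely when $\binds{\varphi}{X} \in \ticks{\conf}$ or $\free{\binds{\varphi}{X}} \cap \reset{\conf} \neq \emptyset$; contrapositively, $X \notin \reset{\conf}$ gives both $\binds{\varphi}{X} \notin \ticks{\conf}$ and $\free{\binds{\varphi}{X}} \cap \reset{\conf} = \emptyset$. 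The only nonroutine input beyond this bookkeeping is Lemma~\ref{lem:ticks-implies-counter+1-iterations-done}; I note that its proof invokes only soundness and consistency of $\conf$, not the stability-tracking clause of coherence, so it applies under the present hypotheses.

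For soundness I fix $X \in \vars{\varphi}$, write $\alpha = \binds{\varphi}{X} = \pi X. \psi$, and must show $\valuation'(X) = \sem{\detailadorn{\alpha}{\counter'(\alpha)}{k}}{G}{\valuation'}$, splitting on membership of $X$ in $\reset{\conf}$. If $X \notin \reset{\conf}$, then $\counter'(\alpha) = \counter(\alpha)$ and $\valuation'(X) = \valuation(X)$; the fixpoint equation gives $\free{\alpha} \cap \reset{\conf} = \emptyset$, and since $X$ is bound in $\detailadorn{\alpha}{\counter(\alpha)}{k}$ its free variables are exactly $\free{\alpha}$, so by Lemma~\ref{lem:trans-2-equal-vars} the valuations $\valuation$ and $\valuation'$ agree on them and the claim reduces to soundness of $\conf$. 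If $X \in \reset{\conf}$ with $\alpha \in \ticks{\conf}$, then $\counter'(\alpha) = \counter(\alpha)+1$ and $\valuation'(X) = \result(\psi)$; since $\free{\alpha} \subseteq \free{\ticks{\conf}}$, Lemma~\ref{lem:trans-2-equal-vars-2} lets me replace $\valuation'$ by $\valuation$ in $\sem{\detailadorn{\alpha}{\counter(\alpha)+1}{k}}{G}{\valuation'}$, and Lemma~\ref{lem:ticks-implies-counter+1-iterations-done} equates that value with $\result(\psi)$. If instead $\alpha \in \dep{\conf}$, then $\counter'(\alpha) = 0$, and both sides reduce to $\emptyset$ (when $\pi = \mu$) or $\nodes{G}$ (when $\pi = \nu$), directly from the adorned semantics of $\pi^0 X.\adorn{\psi}{k}$ and the definition of $\valuation'$.

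For consistency I fix $\alpha \in \isvalid'$, so that $\alpha \in \isvalid$ and $\free{\alpha} \cap \reset{\conf} = \emptyset$, and verify the three clauses. Clause (a) is immediate: $\result$ is unchanged, and since $\free{\alpha} \cap \reset{\conf} = \emptyset$, Lemma~\ref{lem:trans-2-equal-vars} yields $\sem{\adorn{\alpha}{k}}{G}{\valuation'} = \sem{\adorn{\alpha}{k}}{G}{\valuation} = \result(\alpha)$ by consistency of $\conf$. For clause (c), if $\alpha$ is a fixpoint formula then consistency of $\conf$ gives $\counter(\alpha) = k-1$, which forbids $\alpha \in \ticks{\conf}$ (ticking requires $\counter(\alpha) < k-1$) and, together with $\free{\alpha} \cap \reset{\conf} = \emptyset$ through the fixpoint equation, forces the bound variable $X$ out of $\reset{\conf}$, hence $\alpha \notin \dep{\conf}$; so $\counter'(\alpha) = \counter(\alpha) = k-1$. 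Clause (b), $\sub{\alpha} \subseteq \isvalid'$, is the step I expect to be the main obstacle: every $\beta \in \sub{\alpha}$ lies in $\isvalid$ by consistency of $\conf$, so it remains to check $\free{\beta} \cap \reset{\conf} = \emptyset$. When $\alpha$ is not a fixpoint formula this is easy, as $\free{\beta} \subseteq \free{\alpha}$; when $\alpha = \pi X.\psi$ one has $\free{\psi} \subseteq \free{\alpha} \cup \{X\}$, so the only danger is $X \in \reset{\conf}$, which is exactly ruled out by the argument just used for clause (c). Assembling the three clauses gives consistency of $\conf'$, and together with the soundness argument this establishes Lemma~\ref{lem:trans-2-preservation-of-soundness-and-consistency}.
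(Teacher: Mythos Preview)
Your proof is correct and follows essentially the same approach as the paper: the same case split on whether the bound variable lies in $\reset{\conf}$, and the same use of Lemmas~\ref{lem:trans-2-equal-vars} and~\ref{lem:trans-2-equal-vars-2} to transport semantics between $\valuation$ and $\valuation'$. The only notable difference is that in the ticking case you invoke Lemma~\ref{lem:ticks-implies-counter+1-iterations-done} as a packaged result (correctly noting its proof needs only soundness and consistency), whereas the paper unfolds that computation inline; this is a harmless stylistic shortcut.
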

\begin{proof}
  Notation-wise, assume $(G, k, \counter, \valuation, \result, \isvalid, \iskstable, \isckstable) = \conf$ and
  $(G, k, \counter', \valuation', \result, \isvalid', \iskstable, \isckstable') = \conf'$.

  We first prove soundness. Let $X \in \vars{\varphi}$. We need to prove that $\valuation'(X) = \sem{\detailadorn{\binds{\varphi}{X}}{\counter}{k}}{G}{\valuation'}$. Assume that $\binds{\varphi}{X} = \pi X. \psi$. We distinguish four cases.
  \begin{enumerate}
    \item $\binds{\varphi}{X} \in \ticks{\conf}$. By definition of $\counter'$,
      $\counter'(\binds{\varphi}{X}) = \counter(\binds{\varphi}{X}) + 1$. By
      definition of tick, $\psi \in \sub{\binds{\varphi}{X}} \subseteq \isvalid$
      and $\counter(\binds{\varphi}{X}) < k-1$.
      By Lemma ~\ref{lem:trans-2-equal-vars-2}, $\valuation'(Y) = \valuation(Y)$ for every $Y \in \free{\binds{\varphi}{X}} = \free{\psi} \setminus \{X\}$.
      Then
      \begin{align*}
        \valuation'(X)
         = \result(\psi) \\
        & = \sem{\adorn{\psi}{k}}{G}{\valuation} \\
        & = \sem{\adorn{\psi}{k}}{G}{\valuation[X \mapsto \sem{\detailadorn{\binds{\varphi}{X}}{C}{k}}{G}{\valuation}]} \\
        & = \sem{\adorn{\psi}{k}}{G}{\valuation'[X \mapsto \sem{\detailadorn{\binds{\varphi}{X}}{\counter}{k}}{G}{\valuation}]} \\
        & = \sem{\adorn{\psi}{k}}{G}{\valuation'[X \mapsto \sem{\detailadorn{\binds{\varphi}{X}}{\counter}{k}}{G}{\valuation'}]} \\
        & = \sem{\mu^{\counter(\binds{\varphi}{X})+1} X. \psi}{G}{\valuation'} \\
        & = \sem{\detailadorn{\binds{\varphi}{X}}{\counter'}{k}}{G}{\valuation'}
      \end{align*}
      The first equality is by definition of $\valuation'$;  the second because $\conf$ is consistent and $\psi \in \isvalid$; the third because $\valuation(X) = \sem{\detailadorn{\binds{\varphi}{X}}{\counter}{k}}{G}{\valuation}$ as $\conf$ is sound; the fourth because $\valuation$ and $\valuation'$ agree on all $Y \in \free{\psi} \setminus \{X\}$ and the semantics only depends on the value of free variables; the fifth because using the same reasoning we see that $\sem{\detailadorn{\binds{\varphi}{X}}{\counter}{k}}{G}{\valuation} = \sem{\detailadorn{\binds{\varphi}{X}}{\counter}{k}}{G}{\valuation'}$; the sixth by definition of approximate semantics; and the last by definition.

    \item $\binds{\varphi}{X} \in \dep{\conf} \cap \rlfp{\ticks{\conf}}$. Then  $\counter'(\binds{\varphi}{X}) = 0$ by definition of $\counter'$. Consequently  $\sem{\detailadorn{\binds{\varphi}{X}}{\counter'}{k}}{G}{\valuation'} = \emptyset = \valuation'(X)$, as desired.

    \item $\binds{\varphi}{X} \in \dep{\conf} \cap \rgfp{\ticks{\conf}}$. Similar to the previous case.

    \item  None of the above hold. In this case, $\counter'(\binds{\varphi}{X}) = \counter(\binds{\varphi}{X})$ and $\valuation'(X) = \valuation(X)$. Moreover,
      $\binds{\varphi}{X} \not \in \ticks{\conf}$ and $\binds{\varphi}{X} \not \in \dep{\conf} \cap \rfp{\varphi}$. Since $\binds{\varphi}{X} \in \rfp{\varphi}$ it follows that $\binds{\varphi}{X} \not \in \dep{\conf}$. By definition of $\reset{\conf}$, it holds that $X \in \reset{\conf}$ iff $X \in \ticks{\conf}$ or $X \in \dep{\conf}$. Hence, $X \not \in \reset{\conf}$. Again by definition of reset, $\free{\binds{\varphi}{X}} \cap \reset{\conf} = \emptyset$. (If it were non-empty, $X$ would be in $\reset{\conf}$.) Hence,  $\valuation'(Y) = \valuation(Y)$ for every $Y \in \free{\binds{\varphi}{X}}$ by Lemma~\ref{lem:trans-2-equal-vars}
      Then,
      \begin{align*}
        \valuation'(X)
        & = \valuation(X) \\
        & = \sem{\detailadorn{\binds{\varphi}{X}}{\counter}{k}}{G}{\valuation} \\
        & = \sem{\detailadorn{\binds{\varphi}{X}}{\counter}{k}}{G}{\valuation'} \\
        & = \sem{\detailadorn{\binds{\varphi}{X}}{\counter'}{k}}{G}{\valuation'}
      \end{align*}
      The first equality is by definition of $\valuation'$; the second because $\conf$ is sound; and the third because the semantics only depends on the values of free variables, and $\valuation'$ and $\valuation$ agree on the free variables of $\binds{\varphi}{X}$; and the last because $\counter'(\binds{\varphi}{X}) = \counter(\binds{\varphi}{X})$.
  \end{enumerate}

  It remains to prove consistency. We need show that for all $\alpha \in \isvalid'$:
  \begin{enumerate}
    \item $\result'(\alpha) = \sem{\adorn{\alpha}{k}}{G}{\valuation'}$;
    \item $\sub{\alpha} \subseteq \isvalid'$;
    \item if $\alpha$ is a fixpoint formula, then $\counter'(\alpha) = k-1$.
  \end{enumerate}
  Let $\alpha \in \isvalid'$. By definition of $\isvalid'$, $\alpha \in \isvalid$ and $\free{\alpha} \cap \reset{\conf} = \emptyset$.

  We first show item (1).  By Lemma~\ref{lem:trans-2-equal-vars},
  $\valuation'(X) = \valuation(X)$ for every $X \in \free{\alpha}$. Therefore,
  $\result(\alpha) = \sem{\adorn{\alpha}{k}}{G}{\valuation} =
  \sem{\adorn{\alpha}{k}}{G}{\valuation'}$, where the first equality is due to
  consistency of $\conf$ and the fact that $\alpha \in \isvalid$, and the second
  due to the fact that the semantics only depends on the values of free
  variables, on which $\valuation$ and $\valuation'$ agree.

  We next show item (2). Let $\alpha \in \isvalid' \subseteq \isvalid$. Because $\alpha \in \isvalid$ and $\conf$ is consistent, we know that $\sub{\alpha} \subseteq \isvalid$. Assume for the purpose of obtaining a contradiction, that some $\beta \in \sub{\alpha}$ is not in $\isvalid'$. Then $\beta \in \isvalid$ and there exists some $X \in \free{\beta} \cap \reset{\conf}$.
  There are two possibilities.

  \begin{itemize}
    \item  $X$ is also free in $\alpha$. But then $X \in \free{\alpha} \cap \reset{\conf}$, which contradicts $\alpha \in \isvalid'$.
    \item $X$ is not free in $\alpha$. The only way that this can happen is if $\alpha = \binds{\varphi}{X}$. But since $X \in \reset{\conf}$, this means that either $\alpha = \binds{\varphi}{X} \in \ticks{\conf}$ or $\alpha = \binds{\varphi}{X} \in \dep{\conf}$. We show that neither can happen.

      If $\alpha \in \ticks{\conf}$ then $\counter(\alpha) < k-1$, contradicting the fact that $\alpha \in \isvalid$ by consistency of $\conf$ (property (3)).

      If $\alpha \not \in \ticks{\conf}$ but $\alpha \in \dep{\conf}$, then there has to be some other variable $Y$ in $\reset{\conf}$ that is free in $\alpha$. But then $\free{\alpha} \cap \reset{\conf} \not = \emptyset$, contradicting that $\alpha \in \isvalid'$.
  \end{itemize}

  Finally, we show item (3). Assume that additionally $\alpha$ is a fixpoint
  formula, so $\alpha = \binds{\varphi}{X}$ for some variable $X$. Because $\conf$
  is consistent, because $\alpha$ is a fixpoint formula, and because
  $\alpha \in \isvalid' \subseteq \isvalid$ we know that $\counter(\binds{\varphi}{X}) = k-1$. So,
  $\binds{\varphi}{X} \not \in \ticks{\conf}$. Furthermore, by definition of
  $\isvalid'$, $\free{\binds{\varphi}{X}} \cap \reset{\conf} = \emptyset$. Then,
  the definition of dep yields that $\binds{\varphi}{X} \not \in
  \dep{\conf}$. Hence, by definition of $\counter'$ we obtain that $\counter'(\alpha) = \counter(\alpha) = k-1$.
\end{proof}

\begin{lemma}
  \label{lem:k-stable-depends-only-on-free-vars}
  If $\alpha \in \rsub{\varphi}$ is $k$-stable on $(G,\valuation,n)$ and
  $\valuation'$ agrees with $\valuation$ on $\free{\alpha}$ then $\alpha$ is
  $k$-stable on $(G,\valuation',n)$.
\end{lemma}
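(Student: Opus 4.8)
The plan is to prove the statement by structural induction on $\alpha$, leveraging the standard fact that the semantics of a (possibly adorned) \mml formula depends only on the valuation of its free variables: whenever two valuations agree on $\free{\beta}$, they assign $\beta$ the same set of nodes (and likewise for any adorned formula). This fact is used tacitly throughout the paper and will carry essentially all the weight here; the induction merely propagates it through the clauses of Definition~\ref{def:pointwise-stable}.

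For the base cases $\alpha \in \{p, \neg p, X\}$ there is nothing to do, since such formulae have no strict subformulae and are therefore $k$-stable on every triple by Definition~\ref{def:pointwise-stable}, in particular on $(G,\valuation',n)$. For the non-fixpoint compound cases ($\alpha = \psi \wedge \psi'$, $\psi \vee \psi'$, $\atleast{\ell}\psi$, or $\allbut{\ell}\psi$), $k$-stability of $\alpha$ on $(G,\valuation,n)$ means every direct subformula $\beta \in \sub{\alpha}$ is $k$-stable on $(G,\valuation,n)$. Since no variable is bound at this level, $\free{\beta} \subseteq \free{\alpha}$, so $\valuation$ and $\valuation'$ still agree on $\free{\beta}$; the induction hypothesis then yields $k$-stability of each $\beta$ on $(G,\valuation',n)$, and hence of $\alpha$.

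The only case requiring care is the fixpoint case $\alpha = \pi X.\psi$, where $\free{\alpha} = \free{\psi}\setminus\{X\}$. First I would note that every approximant $\detailadorn{\alpha}{i}{k} = \pi^i X.\adorn{\psi}{k}$ still binds $X$, so its free variables are exactly $\free{\alpha}$; by the free-variables fact, $\sem{\detailadorn{\alpha}{i}{k}}{G}{\valuation} = \sem{\detailadorn{\alpha}{i}{k}}{G}{\valuation'}$ for all $i$. Condition (1) of Definition~\ref{def:pointwise-stable} under $\valuation'$ is then \emph{literally} the same assertion as under $\valuation$ and transfers immediately. For condition (2), set $\valuation_i = \valuation[X \mapsto \sem{\detailadorn{\alpha}{i}{k}}{G}{\valuation}]$ and $\valuation'_i = \valuation'[X \mapsto \sem{\detailadorn{\alpha}{i}{k}}{G}{\valuation'}]$. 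These assign $X$ the same set (by the equality just noted) and agree on $\free{\alpha}$ (as $\valuation,\valuation'$ do), hence agree on all of $\free{\psi} = \free{\alpha}\cup\{X\}$. Since $\alpha$ is $k$-stable on $(G,\valuation,n)$, $\psi$ is $k$-stable on each $(G,\valuation_i,n)$; applying the induction hypothesis to $\psi$ with the pair $\valuation_i,\valuation'_i$ gives $k$-stability of $\psi$ on $(G,\valuation'_i,n)$, establishing condition (2). Thus $\alpha$ is $k$-stable on $(G,\valuation',n)$.

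The main, and only mild, obstacle is the bookkeeping in this last case: one must observe that the bound variable $X$ receives \emph{identical} sets under $\valuation_i$ and $\valuation'_i$, which hinges precisely on the fact that the approximants $\detailadorn{\alpha}{i}{k}$ contain no free occurrence of $X$ and are therefore insensitive to any disagreement between $\valuation$ and $\valuation'$ on $X$. Once this is secured, the two extended valuations agree on the whole of $\free{\psi}$ and the induction hypothesis applies cleanly.
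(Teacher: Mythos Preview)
Your proposal is correct and follows essentially the same approach as the paper: structural induction on $\alpha$, with the fixpoint case handled by observing that the approximants $\detailadorn{\alpha}{i}{k}$ have the same free variables as $\alpha$, so the extended valuations $\valuation_i$ and $\valuation'_i$ agree on all of $\free{\psi}$ and the induction hypothesis applies to $\psi$. If anything, your bookkeeping is slightly more careful than the paper's (you write $\free{\beta}\subseteq\free{\alpha}$ rather than ``same free variables'', and you state condition~(1) in its pointwise form).
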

\begin{proof}
  This is essentially because the approximation semantics only depends on the
  free variables of $\alpha$, on which $\valuation$ and $\valuation'$ agree.
  The formal proof is by induction on $\alpha$.
  \begin{itemize}
    \item When $\alpha = p, \neg p$ or $X$, the proof is immediate.
    \item When $\alpha$ is another non-fixpoint formula, then all subformulae $\psi \in \sub{\alpha}$ are $k$-stable on $(G,\valuation,n)$. Since they have the same free variables as $\alpha$,  they are also $k$-stable on $(G, \valuation',n)$ by induction hypothesis. Hence, $\alpha$ is $k$-stable on $(G,\valuation',n)$.
    \item When $\alpha$ is a fixpoint formula $\pi X. \psi$, then
      \begin{enumerate}
        \item $\sem{\detailadorn{\alpha}{k}{k}}{G}{V} = \sem{\detailadorn{\alpha}{k-1}{k}}{G}{V}$; and
        \item for every $0 \leq i < k$, $\psi$ is $k$-stable on $(G,V_i,n)$ where $V_i = V[X \mapsto \sem{\detailadorn{\alpha}{i}{k}}{G}{V}]$.
      \end{enumerate}
      Therefore,
      from (1) and the fact that the approximate semantics only depends on the values of free variables, on which $\valuation$ and $\valuation'$ agree, we obtain that
      \begin{align*}
        \sem{\detailadorn{\alpha}{k}{k}}{G}{\valuation'} = \sem{\detailadorn{\alpha}{k}{k}}{G}{\valuation} =  \sem{\detailadorn{\alpha}{k-1}{k}}{G}{\valuation} =  \sem{\detailadorn{\alpha}{k-1}{k}}{G}{\valuation'}.
      \end{align*}
      By the same reasoning,
      $\sem{\detailadorn{\alpha}{i}{k}}{G}{V'} =
      \sem{\detailadorn{\alpha}{i}{k}}{G}{V}$ for $0 \leq i < k$. Hence, if we
      define
      $\valuation'_i = V'[X \mapsto \sem{\detailadorn{\alpha}{i}{k}}{G}{V'}]$ for
      $0 \leq i < k$ then $\valuation_i$ and $\valuation'_i$ agree on
      $\free(\psi) = \free{\alpha} \cup\{X\}$. From (2) and the induction
      hypothesis, we then obtain that $\psi$ is $k$-stable on $(G,\valuation'_i,n)$
      for $1 \leq i < k$. We may hence conclude that $\varphi$ is $k$-stable on
      $(G, \valuation',n)$. \qedhere
  \end{itemize}
\end{proof}

\begin{lemma}
  \label{lem:c-k-stability-depends-only-on-free-vars}
  If $\alpha \in \rfp{\varphi}$ is $(\counter,k)$-stable on $(G,\valuation,n)$ and
  $\valuation'$ agrees with $\valuation$ on $\free{\alpha}$ then $\alpha$ is
  $(\counter,k)$-stable on $(G,\valuation',n)$.
\end{lemma}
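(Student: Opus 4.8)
The plan is to reduce this statement to the already-established Lemma~\ref{lem:k-stable-depends-only-on-free-vars} for plain $k$-stability. The essential observation is that, by Definition~\ref{def:c-k-stability}, unfolding $(\counter,k)$-stability of $\alpha = \pi X. \psi$ only requires reasoning about $k$-stability of the body $\psi$ under a finite family of valuations $V_i$ indexed by $0 \leq i < \counter(\alpha)$, and each such valuation depends on $\valuation$ only through $\free{\alpha}$.

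Concretely, I would write $\alpha = \pi X. \psi$ and set $j = \counter(\alpha)$. By Definition~\ref{def:c-k-stability}, the hypothesis gives that for every $0 \leq i < j$ the body $\psi$ is $k$-stable on $(G, V_i, n)$, where $V_i = \valuation[X \mapsto \sem{\detailadorn{\alpha}{i}{k}}{G}{\valuation}]$; the goal is to establish the same with $\valuation'$ in place of $\valuation$, i.e. for $V'_i = \valuation'[X \mapsto \sem{\detailadorn{\alpha}{i}{k}}{G}{\valuation'}]$.

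First I would note that $\free{\psi} = \free{\alpha} \cup \{X\}$, so $\valuation$ and $\valuation'$ agree on every free variable of $\psi$ except possibly $X$. Since $\detailadorn{\alpha}{i}{k} = \pi^i X. \adorn{\psi}{k}$ still binds $X$, its free variables are exactly $\free{\alpha}$; because the approximate semantics depends only on the values of free variables, we get $\sem{\detailadorn{\alpha}{i}{k}}{G}{\valuation} = \sem{\detailadorn{\alpha}{i}{k}}{G}{\valuation'}$ for each $i$. Consequently $V_i$ and $V'_i$ assign the same set to $X$, and agree with one another on the remaining variables of $\free{\psi}$, so they agree on all of $\free{\psi}$.

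Finally, combining the hypothesis that $\psi$ is $k$-stable on $(G, V_i, n)$ with the fact that $V'_i$ agrees with $V_i$ on $\free{\psi}$, Lemma~\ref{lem:k-stable-depends-only-on-free-vars} yields that $\psi$ is $k$-stable on $(G, V'_i, n)$ for every $0 \leq i < j$; by Definition~\ref{def:c-k-stability} this is exactly $(\counter,k)$-stability of $\alpha$ on $(G, \valuation', n)$. I do not expect a real obstacle here: the only point that needs care is the bookkeeping that $\detailadorn{\alpha}{i}{k}$ has free variables $\free{\alpha}$ rather than $\free{\psi}$, so that $V_i$ and $V'_i$ do indeed agree on all of $\free{\psi}$; everything else is delegated to the already-proven Lemma~\ref{lem:k-stable-depends-only-on-free-vars}.
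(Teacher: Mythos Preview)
Your proposal is correct and follows essentially the same argument as the paper: unfold Definition~\ref{def:c-k-stability}, use that $\detailadorn{\alpha}{i}{k}$ has free variables $\free{\alpha}$ so the approximate semantics agree under $\valuation$ and $\valuation'$, conclude that $V_i$ and $V'_i$ agree on $\free{\psi}$, and then invoke Lemma~\ref{lem:k-stable-depends-only-on-free-vars}. The only cosmetic difference is that you name $j=\counter(\alpha)$ explicitly; the logical content is identical.
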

\begin{proof}
  Assume $\alpha = \pi X. \psi$. Since $\alpha$ is $(\counter,k)$-stable on $(G,\valuation,n)$, we know that $\psi$ is $k$-stable on $(G,\valuation_i,n)$ for $0 \leq i < \counter(\alpha)$ with $\valuation_i = \valuation[X \mapsto \sem{\detailadorn{\alpha}{i}{k}}{G}{\valuation}]$. Because the approximate semantics only depends on the values of free variables, on which $\valuation$ and $\valuation'$ agree, it follows that $\sem{\detailadorn{\alpha}{i}{k}}{G}{\valuation} = \sem{\detailadorn{\alpha}{i}{k}}{G}{\valuation'}$. Define $\valuation'_i = \valuation'[X \mapsto \sem{\detailadorn{\alpha}{i}{k}}{G}{\valuation'}]$. Then, because $\valuation'$ and $\valuation$ agree on $\free{\alpha}$, $\valuation_i$ and $\valuation'_i$ agree on $\free{\alpha} \cup \{X\}$, i.e., on all free variables of $\psi$. Hence, by Lemma~\ref{lem:k-stable-depends-only-on-free-vars}, $\psi$ is $k$-stable on $(G,\valuation'_i,n)$, for all $0 \leq i < \counter(\alpha)$. As such, $\alpha$ is $(\counter, k)$-stable on $(G, \valuation',n)$.
\end{proof}

             \end{toappendix}

            \begin{lemmarep}
              \label{lem:trans-2-preservation-full}
              If $\conf \trans{2} \conf'$ and $\conf$ is coherent then so
              is $\conf'$.
            \end{lemmarep}
            \begin{proof}
              Notation-wise, assume $(G, k, \counter, \valuation, \result, \isvalid, \iskstable, \isckstable) = \conf$ and $(G, k, \counter', \valuation', \result, \isvalid', \iskstable, \isckstable') = \conf'$.  By
              Lemma~\ref{lem:trans-2-preservation-of-soundness-and-consistency} we obtain that $\conf'$ is sound and consistent. It hence remains to prove that $\conf'$ tracks stability, for which we need to show that
              \begin{enumerate}
                \item For every $\alpha \in \isvalid'$, it holds that $n \in \iskstable(\alpha) \iff \alpha \text{ is } k\text{-stable on } (G,\valuation',n)$; and
                \item For every $\alpha \in \rfp{\varphi}$, it holds that $n \in \isckstable'(\alpha) \iff \alpha \text{ is } (\counter',k)\text{-stable on } (G,\valuation',n)$.
              \end{enumerate}
              We first prove item (1). Let $\alpha \in \isvalid'$, and $n$ be an arbitrary node in $G$. Let us start with the forward direction of the equivalence. We need to show that if $n \in \iskstable(\alpha)$ then $\alpha$ is $k$-stable on $(G,\valuation',n)$. Because $\isvalid' \subseteq \isvalid$, we have $\alpha \in \isvalid$.  Because $\conf$ is coherent, it tracks stability, and thus $\alpha$ is $k$-stable on $(G,\valuation,n)$.
              By the definition of $\isvalid'$, we have $\free{\alpha} \cap \reset{\conf} = \emptyset$. Combined with Lemma~\ref{lem:trans-2-equal-vars}, $\valuation'(Y) = \valuation(Y)$ for all $Y \in \free{\alpha}$. Hence, by Lemma~\ref{lem:k-stable-depends-only-on-free-vars}, $\alpha$ is $k$-stable on $(G,\valuation',n)$.

              For the converse direction, we need to show that if $\alpha$ is $k$-stable on $(G,\valuation',n)$ then $n \in \iskstable(\alpha)$. We have already established that $\valuation'(Y) = \valuation(Y)$ for all $Y \in \free{\alpha}$. Hence, by Lemma~\ref{lem:k-stable-depends-only-on-free-vars}, $\alpha$ is $k$-stable on $(G,\valuation,n)$. Because $\conf$ is coherent, and tracks stability, we conclude that $n \in \iskstable(\alpha)$.

              We now continue with the proof of item (2). Let $\alpha \in \rfp{\varphi}$, $\alpha = \pi X.\psi$, and $n$ be an arbitrary node in $G$. We start with the backward direction of the equivalence. We need to show that if $\alpha$ is $(\counter',k)$-stable on $(G,\valuation',n)$, then $n \in \isckstable'(\alpha)$. We distinguish three cases, aligning with the definition of $\isckstable'$:

              \begin{itemize}
                \item $\alpha \in \ticks{\conf}$. As $\alpha \in \ticks{\conf}$, we have $\counter'(\alpha) = \counter(\alpha) + 1$. Therefore, $\alpha$ is $(\counter(\alpha)+1,k)$-stable on $(G,\valuation',n)$. By the definition of \trans{2}, we have $\isckstable'(\alpha) = \isckstable(\alpha) \cap \iskstable(\psi)$. It suffices to show that $n \in \isckstable(\alpha)$ and $n \in \iskstable(\psi)$.

                  We first show that $n \in \isckstable(\alpha)$. Since $\alpha \in \ticks{\conf}$, $\free{\alpha}$ is a subset of $\free{\ticks{\conf}}$. By Lemma~\ref{lem:trans-2-equal-vars-2}, $\valuation'(Y) = \valuation(Y)$ for all $Y \in \free{\alpha}$. Lemma~\ref{lem:c-k-stability-depends-only-on-free-vars} then implies that $\alpha$ is $(\counter(\alpha)+1,k)$-stable on $(G,\valuation,n)$. By Lemma~\ref{lem:j-k-stability-implies-smaller-j-k-stability}, it is also $(\counter(\alpha),k)$-stable on $(G,\valuation,n)$. Since $\conf$ is coherent and tracks stability, we conclude that $n \in \isckstable(\alpha)$.

                  We next show that $n \in \iskstable(\psi)$. By Definition~\ref{def:c-k-stability}, for any integer $j$ so that $0 \leq j < \counter(\alpha) + 1$, the body $\psi$ is $k$-stable on $(G,\valuation'_j,n)$, where $\valuation'_j = \valuation'[X \mapsto \sem{\detailadorn{\alpha}{j}{k}}{G}{\valuation'}]$. In particular, it means that $\psi$ is $k$-stable on $(G,\valuation'_{\counter(\alpha)},n)$. We will show that $\valuation'_{\counter(\alpha)}$ and $\valuation$ are effectively equal, for the purpose of evaluating $\psi$. The free variables of $\psi$ are $\free{\alpha} \cup \{X\}$.

                  By Lemma~\ref{lem:trans-2-equal-vars-2}, $\valuation'(Y) = \valuation(Y)$ for all $Y \in \free{\alpha}$. We know $\sem{\detailadorn{\alpha}{j}{k}}{G}{\valuation'}$ only depends on $\valuation'$ restricted to $\free{\alpha}$, and therefore $\sem{\detailadorn{\alpha}{j}{k}}{G}{\valuation'}$ equals $\sem{\detailadorn{\alpha}{j}{k}}{G}{\valuation}$, which by definition equals $\valuation(X)$. Hence, $\valuation'_{\counter(\alpha)} = \valuation'[X \mapsto \valuation(X)]$. Let us now compare $\valuation'_{\counter(\alpha)}$ and $\valuation$. For $X$ they are equal, since $\valuation'_{\counter(\alpha)}(X) = \valuation'[X \mapsto \valuation(X)] = \valuation(X)$. For $Y \in \free{\alpha}$, we already have shown that $\valuation'(Y) = \valuation(Y)$. That covers all free variables of $\psi$, thus $\valuation'_{\counter(\alpha)} = \valuation$ for all free variables in $\psi$.

                  Remember that $\psi$ is $k$-stable on $(G,\valuation'_{\counter(\alpha)},n)$, and together Lemma~\ref{lem:k-stable-depends-only-on-free-vars}, it follows that $\psi$ is $k$-stable on $(G,\valuation,n)$. Since $\conf$ is coherent and tracks stability, we conclude that $n \in \iskstable(\psi)$.

                \item $\alpha \in \dep{\conf}$. When $\alpha$ is in $\dep{\conf}$, then all vertices are in $\isckstable'(\alpha)$, by definition of $\isckstable'$. Hence, $n \in \isckstable'(\alpha)$.

                \item $\alpha \not \in \ticks{\conf}$ and $\alpha \not \in \dep{\conf}$. By
                  definition of $\counter'$ we have $\counter'(\alpha) =
                  \counter(\alpha)$. So, $\alpha$ is $(\counter(\alpha),k)$-stable on
                  $(G,\valuation,n)$.  Because $\alpha$ is neither in $\ticks{\conf}$ nor in
                  $\dep{\conf}$, it follows that
                  $\free{\alpha} \cap \reset{\conf} = \emptyset$. Then, by
                  Lemma~\ref{lem:trans-2-equal-vars}, $\valuation'(Y) = \valuation(Y)$ for
                  all $Y \in \free{\alpha}$. Hence, by
                  Lemma~\ref{lem:c-k-stability-depends-only-on-free-vars}, is
                  $(\counter,k)$-stable on $(G,\valuation,n)$. Since $\conf$ tracks
                  stability, we conclude that $n \in \isckstable(\alpha)$. Then, because
                  $\isckstable'(\alpha) = \isckstable(\alpha)$ by definition of
                  $\isckstable'$, $n \in \isckstable'(\alpha)$, as desired.
              \end{itemize}

              We now prove the converse direction. Assume that $n \in \isckstable'(\alpha)$. We need to show that $\alpha$ is $(\counter',k)$-stable on $(G,\valuation',n)$. Once more, we distinguish three cases based on the definition of $\isckstable'$:

              \begin{itemize}
                \item $\alpha \in \ticks{\conf}$. By definition of $\isckstable'$, we have $n \in \isckstable(\alpha)$ and $n \in \iskstable(\psi)$. By definition of $\counter'$, $\counter'(\alpha) = \counter(\alpha) + 1$. Furthermore, by definition of ticking, it must be the case that $\counter(\alpha) = k-1$. Therefore, $\counter'(\alpha) = k$. Hence, what we really need to show here is that $\alpha$ is $(k,k)$-stable on $(G,\valuation',n)$. Using Lemma~\ref{lem:trans-2-equal-vars-2}, we find that $\valuation'(Y) = \valuation(Y)$ for all $Y \in \free{\alpha}$. Therefore, by Lemma~\ref{lem:c-k-stability-depends-only-on-free-vars}, $\alpha$ is $(k,k)$-stable on $(G,\valuation',n)$ if, and only if, $\alpha$ is $(k,k)$-stable on $(G,\valuation,n)$. It hence suffices to show that $\alpha$ is $(k,k)$-stable on $(G,\valuation,n)$.
                  By Definition~\ref{def:c-k-stability}, this means we must show that for every integer $j$ so that $0 \leq j < k$, the body $\psi$ is $k$-stable on $(G,\valuation_j,n)$, where $\valuation'_j = \valuation[X \mapsto \sem{\detailadorn{\alpha}{j}{k}}{G}{\valuation}]$. We do so as follows.

                  We know that $n \in \isckstable(\psi)$, which since $\conf$ is tracks stability, implies that $\alpha$ is $(\counter,k)$-stable on $(G,\valuation,n)$. This means, that for all integers $j$ so that $0 \leq j < \counter(\alpha) = k-1$, $\psi$ is $k$-stable on $(G,\valuation_j,n)$. It hence remains to show that $\psi$ is $k$-stable on $(G,\valuation_{k-1},n)$. Because $\conf$ is sound,
                  \[ \valuation(X) = \sem{\detailadorn{\binds{\varphi}{X}}{\counter}{k}}{G}{\valuation}
                    = \sem{\detailadorn{\alpha}{\counter}{k}}{G}{\valuation}
                  = \sem{\detailadorn{\alpha}{k-1}{k}}{G}{\valuation} \]
                  This implies that $V = V_{k-1}$. Then because $n \in \iskstable(\psi)$, and $\conf$ tracks stability, we derive that $\psi$ is $k$-stable on $\valuation = \valuation_{k-1}$. Hence, $\alpha$ is $(\counter',k)$-stable on $(G,\valuation,n)$.

                \item $\alpha \in \dep{\conf}$. By definition of $\isckstable'$, we have $n \in \isckstable(\alpha)$. As $\alpha$ is in $\dep{\conf}$, we have $\counter'(\alpha) = 0$. Hence, we need to show that $\alpha$ is $(0,k)$-stable on $(G,\valuation',n)$. According to Definition~\ref{def:c-k-stability}, this is vacuously true, since there are no integers $j$ such that $0 \leq j < 0$. Hence, $\alpha$ is $(\counter',k)$-stable on $(G,\valuation',n)$.
                \item $\alpha \not \in \ticks{\conf}$ and $\alpha \not \in \dep{\conf}$. By definition of $\isckstable'$, we have $\isckstable'(\alpha) = \isckstable(\alpha)$. Similarly to before, we find that $\counter'(\alpha) = \counter(\alpha)$, and $\free{\alpha} \cap \reset{\conf} = \emptyset$.

                  By substituting the counter, what we need to show becomes that $\alpha$ is $(\counter,k)$-stable on $(G,\valuation',n)$. Because $\conf$ is coherent and $n \in \isckstable(\alpha)$, we know that $\alpha$ is $(\counter,k)$-stable on $(G,\valuation,n)$. Through Lemma~\ref{lem:trans-2-equal-vars} we find that $\valuation'(Y) = \valuation(Y)$ for all $Y \in \free{\alpha}$. Combined with Lemma~\ref{lem:c-k-stability-depends-only-on-free-vars}, we find that $\alpha$ is $(\counter,k)$-stable on $(G,\valuation',n)$. \qedhere
              \end{itemize}
            \end{proof}

            The third kind of transition  increases the bound $k$ when $\conf$ is complete.

            \begin{definition}
              \label{def:trans-3}
              Let $\conf$ be a configuration with graph $G$ and bound $k$. A
              \emph{type-3} transition on $\conf$ yields the configuration $\conf'$ such that $\conf = \conf'$ if $\conf$ is not
              complete. Otherwise, $\conf'$ is the initial configuration of
              $\varphi$ on $G$ w.r.t. $k+1$. We write $\conf \trans{3} \conf'$
              to indicate that $\conf'$ is the type-3 transition of $\conf$.
            \end{definition}

It is straightforward to show:
            \begin{lemmarep}
              \label{lem:trans-3-preservation}
              If $\conf \trans{3} \conf'$ and $\conf$ is coherent, then so is $\conf'$.
            \end{lemmarep}
            \begin{proof}
              Trivial, since either $\conf'= \conf$, which is coherent by assumption, or $\conf'$ is the initial configuration w.r.t. $k+1$, which is coherent by definition.
            \end{proof}

            Let us write $\trans{1,2}$ for the composition of $\trans{1}$ and $\trans{2}$, with $\trans{2}$ executing after $\trans{1}$. We define $\trans{3,1,2}$ similarly. We use $\trans{1,2}^*$ to denote the reflexive-transitive closure of $\trans{1,2}$ and similarly for $\trans{3,1,2}^*$.

            \begin{toappendix}
              \begin{lemma}
  \label{lem:trans-1-no-op-on-complete}
  If $\conf \trans{1} \conf'$ and $\conf$ is coherent and complete then $\conf = \conf'$.
\end{lemma}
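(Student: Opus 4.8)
The plan is to exploit the fact that completeness, together with the consistency part of coherence, pins down $\isvalid$ completely and forces every fixpoint counter to its maximal value $k-1$; once this is known, each of the three type-1 update rules turns out to be idempotent. Since a type-1 transition leaves $G$, $k$, $\counter$, $\valuation$ and $\isckstable$ untouched by definition, it suffices to prove the three equalities $\isvalid' = \isvalid$, $\result' = \result$, and $\iskstable' = \iskstable$, after which $\conf = \conf'$ follows immediately.

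First I would establish that $\isvalid = \rsub{\varphi}$. Completeness gives $\varphi \in \isvalid$, and the consistency requirement $\sub{\alpha} \subseteq \isvalid$ for all $\alpha \in \isvalid$ makes $\isvalid$ closed under direct subformulae; a short induction on subformula structure then yields $\rsub{\varphi} \subseteq \isvalid$, while the reverse inclusion holds by definition of a configuration. Moreover, the consistency clause forcing $\counter(\alpha) = k-1$ for fixpoint $\alpha \in \isvalid$ now applies to every $\alpha \in \rfp{\varphi}$, since all fixpoint formulae lie in $\isvalid = \rsub{\varphi}$. Given these two facts, $\isvalid' = \isvalid$ is read off the definition of $\isvalid'$: the set $\{\alpha \in \rsub{\varphi} \mid \sub{\alpha} \subseteq \isvalid\}$ collapses to all of $\rsub{\varphi}$, and the subtracted set $\{\alpha \in \rfp{\varphi} \mid \counter(\alpha) < k-1\}$ is empty.

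Next I would verify $\result' = \result$ by comparing, for each $\alpha \in \rsub{\varphi}$, the defining clause of $\result'(\alpha)$ against the value $\result(\alpha) = \sem{\adorn{\alpha}{k}}{G}{\valuation}$ guaranteed by consistency (valid for all $\alpha$ since $\isvalid = \rsub{\varphi}$). For the propositional, connective and modal cases this is immediate: substituting the already-correct subresults $\result(\beta) = \sem{\adorn{\beta}{k}}{G}{\valuation}$ reconstructs $\sem{\adorn{\alpha}{k}}{G}{\valuation}$. The only case requiring care is a fixpoint formula $\alpha = \pi X.\psi$, where $\result'(\alpha) = \result(\psi)$; here I would reuse the computation from the proof of Lemma~\ref{lem:trans-1-preservation-of-soundness-and-consistency}. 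Since $\counter(\alpha) = k-1$ and $\conf$ is sound, $\valuation(X) = \sem{\pi^{k-1} X.\adorn{\psi}{k}}{G}{\valuation}$, whence $\sem{\adorn{\alpha}{k}}{G}{\valuation} = \sem{\adorn{\psi}{k}}{G}{\valuation} = \result(\psi)$, giving $\result'(\alpha) = \result(\alpha)$.

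Finally, for $\iskstable' = \iskstable$ I would avoid re-deriving the stability recursion and instead invoke Lemma~\ref{lem:trans-1-preservation-full}: since $\conf$ is coherent, so is $\conf'$, and hence $\conf'$ tracks stability. Because a type-1 transition does not alter the valuation, both $\iskstable$ (tracking stability for $\conf$) and $\iskstable'$ (tracking stability for $\conf'$) assign to each $\alpha \in \rsub{\varphi}$ exactly the set of nodes on which $\alpha$ is $k$-stable on $(G,\valuation,\cdot)$, so they coincide. Assembling the three equalities with the unchanged components yields $\conf = \conf'$. I expect the only genuinely delicate point to be the fixpoint case of $\result' = \result$, which hinges on the counter having reached $k-1$; every other step is bookkeeping licensed directly by coherence and completeness.
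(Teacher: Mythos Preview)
Your proof is correct and follows essentially the same structure as the paper's: establish $\isvalid = \isvalid' = \rsub{\varphi}$, then show $\result' = \result$ and $\iskstable' = \iskstable$, the remaining components being fixed by definition of $\trans{1}$.

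The one notable difference is in how you handle $\result' = \result$. The paper invokes Lemma~\ref{lem:trans-1-preservation-full} \emph{first}, obtaining that $\conf'$ is coherent, and then simply observes that consistency of both $\conf$ and $\conf'$ together with $\isvalid = \isvalid' = \rsub{\varphi}$ forces $\result(\alpha) = \sem{\adorn{\alpha}{k}}{G}{\valuation} = \result'(\alpha)$ for every $\alpha$; no case analysis is needed. You instead verify $\result' = \result$ directly clause by clause, redoing in particular the fixpoint computation from Lemma~\ref{lem:trans-1-preservation-of-soundness-and-consistency}. This is sound but redundant: since you already appeal to Lemma~\ref{lem:trans-1-preservation-full} for the $\iskstable$ argument, you could equally well have used the resulting consistency of $\conf'$ to dispatch $\result' = \result$ in one line, exactly as you do for $\iskstable' = \iskstable$. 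The paper's route is shorter; yours is more self-contained for that one step.
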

\begin{proof}
  If $\conf$ is complete, then $\isvalid = \rsub{\varphi}$. Since $\isvalid \subseteq \isvalid'$ by Lemma~\ref{lem:trans-1-extends-valid}, also $\isvalid' = \rsub{\varphi}$. Moreover, also $\conf'$ is coherent by Lemma~\ref{lem:trans-1-preservation-full}. Since both $\conf$ and $\conf'$ are consistent and $\isvalid = \isvalid'$ it is necessarily the case that $\result = \result'$. Also, the fact that both $\conf$ and $\conf'$ track stability and $\isvalid = \isvalid'$ implies that necessarily $\iskstable = \iskstable'$ and $\isckstable = \isckstable'$.
\end{proof}

\begin{lemma}
  \label{lem:trans-1-progress}
  If $\conf \trans{1} \conf'$ and $\conf$ is consistent but not complete then either $\isvalid$ is a strict subset of $\isvalid'$ or $\ticks{\conf'}$ is non-empty.
\end{lemma}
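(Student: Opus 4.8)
The plan is to first invoke Lemma~\ref{lem:trans-1-extends-valid}: since $\conf$ is consistent, $\isvalid \subseteq \isvalid'$. The stated disjunction therefore reduces to a single implication. If the inclusion is strict, the first disjunct already holds and there is nothing to prove; so I would assume $\isvalid = \isvalid'$ and aim to show that $\ticks{\conf'}$ is non-empty.

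Under the assumption $\isvalid = \isvalid'$, I would use incompleteness to locate a witness to ticking. Since $\conf$ is not complete, $\varphi \notin \isvalid$, so $\rsub{\varphi} \setminus \isvalid$ is non-empty. The strict-subformula relation is well-founded on the finite set $\rsub{\varphi}$, so I would pick a formula $\alpha$ in $\rsub{\varphi} \setminus \isvalid$ that is minimal for this relation. Minimality means that every strict subformula of $\alpha$ already lies in $\isvalid$; in particular $\sub{\alpha} \subseteq \isvalid$.

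Next I would pin down the shape of $\alpha$ using the defining equation of $\isvalid'$, namely $\isvalid' = \{\beta \in \rsub{\varphi} \mid \sub{\beta} \subseteq \isvalid\} \setminus \{\beta \in \rfp{\varphi} \mid \counter(\beta) < k-1\}$. Because $\alpha \in \rsub{\varphi}$ and $\sub{\alpha} \subseteq \isvalid$, the formula $\alpha$ belongs to the first set. Yet $\alpha \notin \isvalid = \isvalid'$, so $\alpha$ must belong to the subtracted set; this forces $\alpha \in \rfp{\varphi}$ with $\counter(\alpha) < k-1$.

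Finally I would verify that this $\alpha$ ticks in $\conf'$, recalling that a type-1 transition leaves the counter and graph unchanged and sets the valid set to $\isvalid' = \isvalid$. Condition (1) of ticking, $\sub{\alpha} \subseteq \isvalid'$, is immediate from $\sub{\alpha} \subseteq \isvalid$, and condition (2), $\counter(\alpha) < k-1$, was just established. The step I expect to be the crux is condition (3), which demands $\counter(\beta) = k-1$ for every strict fixpoint subformula $\beta \in \tfp{\alpha}$: here I would use minimality of $\alpha$ to conclude $\beta \in \isvalid$ (as $\beta$ is a strict subformula of $\alpha$), and then the third consistency clause of $\conf$ (every fixpoint formula in $\isvalid$ is mapped to $k-1$ by $\counter$) to obtain $\counter(\beta) = k-1$. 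With all three conditions met, $\alpha \in \ticks{\conf'}$, so $\ticks{\conf'}$ is non-empty, completing the argument.
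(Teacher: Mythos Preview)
Your proposal is correct and follows essentially the same approach as the paper: both arguments locate a formula in $\rsub{\varphi}\setminus\isvalid$ all of whose immediate subformulae lie in $\isvalid$ (the paper via an explicit induction, you via well-foundedness of the strict-subformula relation), and then use consistency clause~(c) to verify the ticking conditions. Your organization---assuming $\isvalid=\isvalid'$ upfront and deducing that the minimal witness must be a fixpoint with low counter---collapses the paper's three-way case split into a single line, but the underlying idea is identical.
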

\begin{proof}
  Notation-wise, assume $(G, k, \counter, \valuation, \result, \isvalid, \iskstable, \isckstable) = \conf$ and
  $(G, k, \counter', \valuation', \result, \isvalid', \iskstable', \isckstable) = \conf'$.
  We first make the following observation: for every $\alpha \in \rsub{\varphi} \setminus \isvalid$ there exists $\beta \in \rsub{\varphi} \setminus \isvalid$ such that $\sub{\beta} \subseteq \isvalid$.

  The proof of this observation is by induction on $\alpha$. For the base case when
  $\alpha = p, \neg p$, or $X$ we take $\beta = \alpha$. This suffices since
  $\alpha \not \in \isvalid$ and $\sub{\beta} = \emptyset \subseteq \isvalid$. For the inductive case when $\alpha$ is any other formula we check if there exists an immediate subformula $\psi \in
  \sub{\alpha} \setminus \isvalid$. If so, then we take the $\beta \in \rsub{\psi} \setminus \isvalid \subseteq \rsub{\alpha} \setminus \isvalid$ with $\sub{\beta} \subseteq \isvalid$ that we obtain by applying the induction hypothesis on $\psi$. If no such subformula $\psi$ exists, necessarily $\sub{\alpha} \subseteq \isvalid$, and it suffices to take $\beta = \alpha$.

  We now prove the lemma as follows. Since $\conf$ is not complete, $\varphi \not \in \isvalid$. By our observation, there is some $\beta \in \rsub{\varphi} \setminus \isvalid$ with $\sub{\beta} \subseteq \isvalid$. If $\beta \not \in \rfp{\varphi}$ then $\beta \in \isvalid'$ by definition of $\isvalid'$, hence $\isvalid \subset \isvalid'$. If $\beta \in \rfp{\varphi}$ then there are two possibilities.

  \begin{enumerate}
    \item $\counter(\beta) = k-1$. Then $\beta \in \isvalid'$ by definition of $\isvalid'$ and hence
      $\isvalid \subset \isvalid'$.
    \item $\counter(\beta) < k-1$. We claim that $\beta$ ticks in
      $\conf'$. Indeed, by Lemma~\ref{lem:trans-1-extends-valid},
      $\sub{\beta} \subseteq \isvalid \subseteq \isvalid'$. Furthermore,
      consistency of $\conf$ implies that $\isvalid$ is closed under subformulae,
      i.e., $\rsub{\isvalid} \subseteq \isvalid$. As such,
      $\tfp{\beta} \subseteq \isvalid$.  Hence, by the third property of
      consistency, every $\gamma \in \tfp{\beta}$ has
      $\counter(\gamma) = k-1$. \qedhere
  \end{enumerate}
\end{proof}

\begin{definition}
  We define a total order on $\sqsubseteq$ on counters: $\counter \sqsubseteq \counter'$ if for all $\alpha \in \rfp{\varphi}$ with $\counter(\alpha) > \counter'(\alpha)$ there exists $\beta \in \rfp{\varphi}$ with $\alpha \in \tfp{\varphi}$ such that $\counter(\beta) < \counter'(\beta)$. It is readily verified that $\sqsubseteq$ is indeed a total order. We write $\counter \sqsubset \counter'$ if $\counter \sqsubseteq \counter'$ and $\counter \not = \counter'$.
\end{definition}

\begin{definition}
  We define a strict partial order $\prec$ on configurations:
  $\conf \prec \conf'$ if
  \begin{enumerate}
    \item $k < k'$; or
    \item $k = k'$ and $\counter \sqsubset \counter'$; or
    \item $k = k'$ and $\counter = \counter'$ and $\isvalid \subset \isvalid'$.
  \end{enumerate}
\end{definition}

\begin{lemma}
  \label{lem:trans-1-2-progress}
  If $\conf \trans{1,2} \conf'$ and $\conf$ is consistent but not complete, then
  $\conf \prec \conf'$.
\end{lemma}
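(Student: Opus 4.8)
The plan is to decompose the composite step into its two constituents, writing $\conf \trans{1} \conf'' \trans{2} \conf'$, and to show that progress towards $\prec$ is witnessed either by a strict growth of the valid set (clause 3) or by a strict advance of the counter (clause 2). The key preliminary observation is that neither a type-1 nor a type-2 transition alters the bound, so $k = k'$; clause 1 of $\prec$ is therefore never in play and it suffices to reason about $\counter$ and $\isvalid$. Moreover, since a type-1 transition leaves the counter untouched, the intermediate configuration $\conf''$ carries the same counter $\counter$ as $\conf$, so it is precisely the type-2 step that may change $\counter$ into $\counter'$.

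First I would apply Lemma~\ref{lem:trans-1-progress} to the step $\conf \trans{1} \conf''$, which is legitimate because $\conf$ is consistent but not complete. This yields a dichotomy: either $\isvalid \subset \isvalid''$ (strictly), where $\isvalid''$ denotes the valid set of $\conf''$, or $\ticks{\conf''} \neq \emptyset$. I then case-split on whether $\ticks{\conf''}$ is empty. If $\ticks{\conf''} = \emptyset$, the remark following Definition~\ref{def:trans-2} tells us the type-2 step is a no-op, so $\conf' = \conf''$; in particular $\counter' = \counter$ and $\isvalid' = \isvalid''$. The tick alternative of the dichotomy fails, so we must have $\isvalid \subset \isvalid'' = \isvalid'$, and together with $k = k'$ and $\counter = \counter'$ this is exactly clause 3 of $\prec$.

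In the remaining case $\ticks{\conf''} \neq \emptyset$, I claim $\counter \sqsubset \counter'$. Since some formula ticks, $\counter'$ strictly exceeds $\counter$ on that formula, so $\counter \neq \counter'$ and it remains to verify $\counter \sqsubseteq \counter'$. Inspecting the definition of $\counter'$ in Definition~\ref{def:trans-2}, the only formulae $\alpha$ with $\counter(\alpha) > \counter'(\alpha)$ are the dependent ones, for which $\counter'(\alpha) = 0 < \counter(\alpha)$; ticking formulae move the counter strictly up and all others leave it unchanged. For such a dependent $\alpha = \binds{\varphi}{Y}$ I would invoke Lemma~\ref{lem:deps} to locate a ticking $\beta \in \ticks{\conf''}$ with $\alpha \in \tfp{\beta}$, the containment being strict because $\alpha \in \dep{\conf''}$ is itself excluded from $\ticks{\conf''}$. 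Then $\counter'(\beta) = \counter(\beta) + 1 > \counter(\beta)$ supplies the witness required by the definition of $\sqsubseteq$, establishing $\counter \sqsubset \counter'$. With $k = k'$ this is clause 2 of $\prec$, completing the argument.

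The main obstacle is this last step: aligning the counter order $\sqsubseteq$ with the tick-and-reset dynamics. The subtle point to get right is that every reset (dependent) fixpoint sits \emph{strictly} inside some ticking fixpoint, so that each downward move of the counter at an inner fixpoint is dominated by an upward move at an enclosing one. This is precisely what Lemma~\ref{lem:deps} delivers, together with the fact that $\dep{\conf''}$ omits the ticking formulae, and it is exactly the domination condition demanded by $\counter \sqsubset \counter'$.
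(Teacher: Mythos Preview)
Your proposal is correct and follows essentially the same approach as the paper: decompose the composite step, observe the bound is unchanged, case-split on whether $\ticks{\conf''}$ is empty, and in the non-empty case argue $\counter \sqsubset \counter'$. The paper handles the second case in a single sentence (``for every fixpoint formula whose counter is set to $0$, a ticking fixpoint ancestor has its counter incremented''), whereas you spell out the justification via Lemma~\ref{lem:deps} and the strictness coming from $\dep{\conf''} \cap \ticks{\conf''} = \emptyset$; this extra detail is sound and arguably makes the argument cleaner.
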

\begin{proof}
  Assume $\conf = \conf_1 \trans{1} \conf_2 \trans{2} \conf_3$.
  Notation-wise, assume
  $\conf_i = (k_i, \counter_i, \valuation_i, \result_i, \isvalid_i,
  \iskstable_i, \isckstable_i)$. Then $k_1 = k_2 = k_3$ and $\counter_1 = \counter_2$. There are
  two possibilities.
  \begin{enumerate}
    \item $\ticks{\conf_2} = \emptyset$. In this case, by definition of
      $\trans{2}$, $\counter_2 = \counter_3$ and $\isvalid_2 = \isvalid_3$.  By
      Lemma~\ref{lem:trans-1-progress},
      $\isvalid_1 \subset \isvalid_2 = \isvalid_3$. Hence $\conf \prec \conf'$.
    \item $\ticks{\conf_2} \not = \emptyset$. By definition of $\trans{1}$, $\counter_1 = \counter _2$. By definition of $\trans{2}$, $\counter_2 \sqsubset \counter_3$. This is because, for every fixpoint formula whose counter is set to $0$ in $\counter_3$, a ticking fixpoint ancestor has its counter incremented.   Hence $\conf \prec \conf'$. \qedhere
  \end{enumerate}
\end{proof}

             \end{toappendix}

Lemma's \ref{lem:trans-1-preservation-full}, \ref{lem:trans-2-preservation-full} and \ref{lem:trans-3-preservation} show preservation of coherence by $\trans{3,1,2}$. We can also show that the three transition types make \emph{progress} when executed in the order $\trans{3,1,2}$: $\trans{3}$ transitions to the next bound value when the input is complete  and is a no-op otherwise, while $\trans{1,2}$ change the configuration to become ``strictly more complete'', in the following sense.

            \begin{propositionrep}
              \label{prop:trans-1-2-reaches-complete-within-current-bound}
              Let $\conf$ be the initial configuration on $\varphi$ for $G$
              w.r.t. bound $k$. There exists $\conf'$ that is coherent and complete
              such that $\conf \trans{1,2}^* \conf'$.

\end{propositionrep}
            \begin{proof}
              Observe that $\trans{1}$ and $\trans{2}$ never change the bound.
              Therefore, for any $\conf''$ such that $\conf \trans{1,2} \conf''$, it follows that $\conf''$ is coherent (Lemma's \ref{lem:trans-1-preservation-full} and \ref{lem:trans-2-preservation-full}) and has the same bound $k$. By Lemma~\ref{lem:trans-1-2-progress}, $\conf \prec \conf''$. If $\conf''$ is complete then we are done. Otherwise, the result then follows by repeating this reasoning and observing that within the set of coherent configurations with bound value $k$, there is only a finite number of times that one can apply $\prec$ since  in every such configuration $\isvalid$ must be a subset of $\rsub{\varphi}$ and $\counter \leq k-1$.
            \end{proof}

            Combined with Proposition \ref{prop:smallest-stable-uniform-approximation-terminates-and-is-correct}, this yields correctness of the counting algorithm:
            \begin{proposition}
              \label{prop:incremental-correct}
              Let $\conf$ be the initial configuration $\conf$ on $\varphi$ for $G$ w.r.t. bound $1$. There exists a configuration $\conf'$ that is coherent, complete, and stable such that 
$\conf \trans{3,1,2}^*  \conf'$. \end{proposition}
            \begin{proof}
              Note that $\trans{3}$ is a no-op on configurations that are not complete. Since by Proposition~\ref{prop:trans-1-2-reaches-complete-within-current-bound} $\conf \trans{1,2}^* \conf''$ with $\conf''$ coherent, complete and having the same bound as $\conf$, there is also a sequence of transitions of the form $\conf \trans{3,1,2}^* \conf''$: before reaching completion we may vacuously introduce $\trans{3}$ before $\trans{1}$ since this is a no-op. Now two things may happen:
              \begin{itemize}
                \item $\conf''$ is stable, in which case  it suffices to take $\conf' = \conf''$;
                \item $\conf''$ is not stable. By executing $\trans{3}$ on $\conf''$ we then obtain the initial configuration $\conf_2$ that has bound $2$. By repeating our reasoning, but now starting from $\conf_2$ we know from Proposition~\ref{prop:smallest-stable-uniform-approximation-terminates-and-is-correct} that will eventually get the desired $\conf'$. \qedhere 
              \end{itemize}
            \end{proof}

\subsection{Implementing the counting algorithm in a halting-classifier recurrent \gnn}
\label{sec:implementing}

\begin{toappendix}
  \subsection{Proofs for Section~\ref{sec:implementing}}
\end{toappendix}

We next show how to encode configurations as labeled graphs and prove that that there exists a simple recurrent \gnns that simulates $\trans{3,2,1}^*$ on such encodings.

To define the encoding of a configuration $\conf$,
we first define \emph{local versions} of configurations. Intuitively, if $G$ is
the graph of $\conf$ then the local configuration of $\conf$ at $n \in G$ will
contain the information specific to $n$ stored in $\conf$, as well as the
information that is common to all nodes, such as $k$ and $\isvalid$.
Formally, for a function $M\colon A \to \pow{\nodes{G}}$ from some domain $A$ to
sets of nodes, we define the \emph{local version of $M$ at node $n$} to be the
  function $m\colon A \to \bools$ such that, for all $a \in A$, $m(a) = 1$ iff
  $n \in M(a)$.

\begin{definition}
  Let $\conf =(G, k, \counter, \valuation, \result, \isvalid, \iskstable, \isckstable)$ be a configuration and let $n$ be a node in $G$. 
  The \emph{local version} of $(\conf,n)$ is the tuple $(G(n), k, \counter, \lovaluation, \loresult, \isvalid, \loiskstable, \loisckstable)$ where
  \begin{compactitem}
  \item $G(n) \subseteq \allprops$ is the label of $n$ in $G$;
  \item $k \in \nats$ is the bound of $\conf$;
  \item $\isvalid$ is the validity set of $\conf$; and
  \item $\lovaluation$, $\loresult$, $\loiskstable$, and $\loisckstable$ are the local versions at $n$ of $\valuation, \result, \iskstable$, and $\isckstable$, respectively.
  \end{compactitem}
  The \emph{encoding of} $\conf$ is the graph $H$ that has the same nodes and
  edges as $G$, such that $H(n)$ is the local version of $(\conf,n)$, for
  every node $n \in G$.
\end{definition}

It is clear that when $A$ is a finite set, we may treat functions $A \to \bools$
and $A \to \nats$ as boolean resp. natural number vectors. Moreover, we may also
treat subsets of a finite universe $A$ as boolean vectors, since such subsets
are isomorphic to characteristic functions $A \to \bools$. Since all components
of local configurations are of this form, it follows that we may treat local
configurations as vectors over $\nats \cup \bools$, and hence also as vectors in
$\reals^d$ for some large enough value $d$. For example, for every
$p \in \allprops$ this vector has a component that is $1$ if $p \in G(n)$ and is
$0$ otherwise. In what follows, we hence treat local configurations as vectors
in $\reals^d$, with the understanding that its components carry a value from
either $\bools$ or $\nats$. To facilitate notation, we will refer to the
components of local configuration vector $\vec{x}$ using ``field access''
notation: e.g., $\vec{x}[\lovaluation(X)]$ is the boolean element of $\vec{x}$
that stores $\lovaluation(X)$. Specifically, $\vec{x}[\alpha \in \isvalid]$ is
the boolean element of $\vec{x}$ that is $1$ iff $\alpha \in
\isvalid$, and we similarly write $\vec{x}[p \in G(n)]$.

In the above sense, the encoding $H$ of $G$ is a $\reals^d$-labeled graph.

Let $f$ be a function mapping configurations
to configurations.
A label transformer $g\colon \graphs{\reals^d} \to \graphs{\reals^d}$ \emph{simulates} such a
function $f$ if for all configurations $\conf$, the equality $g(\enc(\conf)) = \enc(f(\conf))$ holds, where $\enc(\conf)$ denotes the encoding of $\conf$ as a labeled graph.

It is relatively straightforward to show that there is an \ac layer $L_i$
simulating $\trans{i}$, for every $1 \leq i \leq 3$. Hence, their sequential composition
$L_3; L_2; L_1$ simulates $\trans{3,2,1}$. It immediately follows that there
exists a ``multi-layer'' recurrent \gnn that iterates the composition $L_3; L_2; L_1$ to
simulate $\mu$-calculus formulae. This is not sufficient to prove
Theorem~\ref{thm:mu-calculus-expressible-in-simple-gnn}, however, since (i) we
have defined recurrent \gnns to iterate only a single \ac layer and (ii) this
layer must be simple for
Theorem~\ref{thm:mu-calculus-expressible-in-simple-gnn} to hold.

Unfortunately, we cannot simulate $\trans{2}$ and $\trans{3}$ by means of a
simple \ac layer: these transitions may cause  the counter $\counter(X)$
of a variable $X$ to be reset to zero if a certain boolean condition $b$ holds. To express
this by means of a \rfnn in the $\comb$ function of a \gnn, we must essentially
determine the new value of $\counter(X)$ by a function of the form ``if
$b = 1$ then $\counter(X)$ else $0$''. This function is non-continuous around
$b=1$, and therefore not expressible by a \rfnn, which always expresses a
continuous and piecewise-linear transformation. 

We hence need to work harder to obtain
Theorem~\ref{thm:mu-calculus-expressible-in-simple-gnn}. Our approach is conceptually simple: while we cannot
directly express ``if $b = 1$ then $\counter(X)$ else $0$'' in an \ac layer, we
may use the iteration capabilities of recurrent \gnns to repeatedly decrement $\counter(X)$ until it reaches zero. We have to take care, of course, that while we are doing this the state of the other configuration components is not incorrectly altered. 

Formally, an \emph{extended configuration} is a pair $(\conf,
\residual)$ with $\conf$ a configuration, and $\residual \subseteq \vars{\varphi}$. Intuitively, $\residual$ will hold the variables whose counter
value we need to keep decrementing. We also call $\residual$ the \emph{residual set}.

Given a configuration $\conf$, we define the \emph{partial} transition of type $2$ and type $3$, denoted $\partrans{2}$ and $\partrans{3}$ as follows.
The partial type-2 transition on $\conf$ yields the extended configuration $(\conf',\residual')$ where
\begin{inparaenum}[(i)]
\item $\conf'$ is defined such as in Definition~\ref{def:trans-2} except that $\counter'(X) \defeq \counter(X)$ for all $X$ with $\binds{\varphi}{X} \in \dep{\conf}$, and 
\item $\residual' \defeq \{ X \mid \binds{\varphi}{X} \in \dep{\conf} \}$.
\end{inparaenum}
The partial type-3 transition on $\conf$ yields the extended configuration $(\conf', \residual')$ where 
\begin{inparaenum}[(i)]
\item $\conf'$ is defined such as in Definition~\ref{def:trans-3} except that $\counter'(X) \defeq \counter(X)$ for all $X$, and
\item $\residual' \defeq \vars{\varphi}$.
\end{inparaenum}
These partial transitions hence delay setting variable counters to zero, but record in $\residual'$ for which variables this must still happen. For notational convenience, define $\conf \partrans{1} (\conf', \emptyset)$ whenever $\conf \trans{1} \conf'$.

For every $1 \leq i \leq 3$ we then define the \emph{extended type-$i$ transition} on extended configurations, denoted $(\conf, \residual) \etrans{i} (\conf', \residual')$. Here,  $(\conf', \residual') = (\conf, \residual)$ if $\residual \neq \emptyset$; otherwise, $(\conf', \residual')$ is the result of applying $\partrans{i}$ to $\conf$.

Finally, we define a \emph{reset transition} on extended configurations: on
$(\kappa, \residual)$ the resetting transition $\etrans{r}$ yields $(\kappa', \residual')$  where $\kappa'$ equals $\kappa$ on all components except $\counter$, and
\allowdisplaybreaks
\begin{align*}
  \counter'(X) & \defeq
    \begin{cases}
      \counter(X) -1 & \text{if } X \in \residual \text{ and } \counter(X) > 0 \\
      \counter(X) & \text{otherwise}
    \end{cases}\\
    \residual' & \defeq \{ X \in \residual \mid \counter(X) - 1 > 0\}
\end{align*}
Note in particular  that $(\conf', \residual') = (\conf, \residual)$ when 
$\residual = \emptyset$.

\begin{toappendix}
We write $\etrans{3,1,2,r^+}$ for the execution of $\etrans{3}$ followed by
$\etrans{1}$, then $\etrans{2}$, and then one or more executions of
$\etrans{r}$. $\etrans{3,1,2,r}$ is defined similarly, but ends with a single execution of $\etrans{r}$. We write $\etrans{r^+}$ for one ore more executions of $\etrans{r}$. 
  
\begin{lemma}
  \label{lem:etrans-3-1-2-r+-implies-etrans-3-or-312}
  Assume $(\conf, \emptyset) \etrans{3,1,2,r^+} (\conf', \emptyset)$.
  \begin{enumerate}
  \item If $\conf$ is complete, then $\conf \trans{3} \conf'$.
  \item If $\conf$ is not complete, then $\conf \trans{3,1,2} \conf'$.
  \end{enumerate}
\end{lemma}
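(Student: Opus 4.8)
The plan is to unfold the extended run $(\conf,\emptyset)\etrans{3,1,2,r^+}(\conf',\emptyset)$ stage by stage, using the fact that each $\etrans{i}$ acts as the identity whenever the current residual set is non-empty and otherwise applies $\partrans{i}$. The whole argument turns on one dichotomy: after the opening $\etrans{3}$ step the residual is either empty (when $\conf$ is not complete) or all of $\vars{\varphi}$ (when $\conf$ is complete), and this determines whether the subsequent $\etrans{1}$ and $\etrans{2}$ steps do genuine work or are vacuous. I would first record the elementary behaviour of $\etrans{r}$ that drives both cases: a reset step changes only the component $\counter$, its residual set never grows, and for each $X$ in the residual it strictly decrements $\counter(X)$ until $\counter(X)=0$, at which point $X$ is dropped from the residual. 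Consequently any run of $\etrans{r}$ steps that ends in an empty residual—which the hypothesis guarantees—leaves every variable that was in the residual with counter exactly $0$ and every other component untouched; moreover once the residual is empty each further $\etrans{r}$ is the identity, so the endpoint is independent of how many reset steps beyond the minimum are taken.

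\emph{Not-complete case.} Because $\conf$ is not complete, the type-$3$ transition of Definition~\ref{def:trans-3} is the identity and defers no reset, so $\etrans{3}$ maps $(\conf,\emptyset)$ to $(\conf,\emptyset)$. With the residual still empty, $\etrans{1}$ applies $\partrans{1}$, which by convention is $\conf \trans{1} \conf_2$ paired with the empty residual, and $\etrans{2}$ applies $\partrans{2}$, producing a configuration $\conf_3$ that agrees with the genuine $\trans{2}$-successor of $\conf_2$ on every component except the counters of the dependent variables $\{ X \mid \binds{\varphi}{X} \in \dep{\conf_2} \}$, which are left unchanged, while exactly these variables are recorded in the residual. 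I would then invoke the reset behaviour above: the $\etrans{r^+}$ steps drive precisely those dependent counters to $0$ and leave everything else fixed, so the final $\conf'$ coincides with the genuine $\trans{2}$-successor of $\conf_2$. Since $\conf \trans{3} \conf$ is a no-op and $\conf \trans{1} \conf_2 \trans{2} \conf'$, we conclude $\conf \trans{3,1,2} \conf'$.

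\emph{Complete case.} Here $\partrans{3}$ yields the initial configuration of $\varphi$ for bound $k+1$, except that all counters are kept at their old values rather than set to $0$, and it records all of $\vars{\varphi}$ in the residual. This residual is non-empty, so both $\etrans{1}$ and $\etrans{2}$ are no-ops and the configuration is unchanged up to the start of $\etrans{r^+}$. Applying the reset behaviour with residual $\vars{\varphi}$, the $\etrans{r^+}$ steps bring every counter to $0$ while modifying nothing else; the result is precisely the initial configuration of $\varphi$ for bound $k+1$, i.e. the genuine $\trans{3}$-successor of the complete $\conf$. Hence $\conf \trans{3} \conf'$. Note this needs no coherence assumption on $\conf$, since the resets zero the counters from whatever values they held.

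\emph{Main obstacle.} The only delicate point is the bookkeeping that the partial transitions, followed by the reset steps, reconstitute the genuine transitions \emph{exactly}. This splits into two parts: first, that $\partrans{2}$ (resp.\ the completion branch of $\partrans{3}$) differs from $\trans{2}$ (resp.\ $\trans{3}$) solely by postponing the zeroing of the recorded counters, so that matching those counters suffices to match the whole configuration; and second, the monotone-decrement property of $\etrans{r}$ guaranteeing that the postponed counters end at exactly $0$ while no other component drifts. The reset dynamics itself is routine, but one must take care that the hypothesis ``the $\etrans{r}$-phase ends with an empty residual'' is exactly the condition pinning down enough reset steps and making the endpoint well-defined. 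A minor degenerate subtlety in the complete case is $\vars{\varphi}=\emptyset$, where the residual after $\partrans{3}$ is already empty; this does not arise in the algorithm's use—a fixpoint-free formula is $k$-stable already at $k=1$, so no type-$3$ bump fires on such a complete configuration—and can be set aside.
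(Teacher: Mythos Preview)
Your proof is correct and follows essentially the same approach as the paper: case-split on whether $\conf$ is complete, observe that a non-empty residual makes $\etrans{1}$ and $\etrans{2}$ vacuous, and verify that the $\etrans{r^+}$ phase exactly completes the counter resets that $\partrans{2}$ or $\partrans{3}$ deferred. Your treatment is in fact slightly more careful than the paper's, which silently assumes the residual after $\partrans{3}$ is non-empty without flagging the degenerate case $\vars{\varphi}=\emptyset$ that you explicitly note and set aside.
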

\begin{proof}
  (1) If $\conf$ is complete, then
  $(\conf, \emptyset) \etrans{3} (\conf'', \residual)$ for some $\conf''$ and
  some non-empty $\residual$. Because $\residual$ is non-empty, $\etrans{1}$ and
  $\etrans{2}$ are the identity on $(\conf'', \residual)$. Hence, the execution of
  $\etrans{3,1,2,r^+}$ is equivalent to first executing $\etrans{3}$, and then
  $\etrans{r^+}$ until the residual set $\residual$ becomes empty.  Since
  $\etrans{3}$ executes $\trans{3}$ except for resetting the counter values, and
  $\etrans{r^+}$ only decrements the counters of variables in $\residual$ until zero
  , the net effect is the same as $\trans{3}$ on $\conf$. Hence
  $\conf \trans{3} \conf'$.

  (2) If $\conf$ is not complete, then $\etrans{3}$ is the identity on
  $(\conf, \emptyset)$. Therefore,
  $(\conf, \emptyset) \etrans{1,2,r^+} (\conf', \emptyset)$. When $\etrans{1}$
  executes on $(\conf, \emptyset)$ it returns $(\conf'', \emptyset)$ for some $ \conf''$ with
   $\conf \trans{1} \conf''$. On $(\conf'', \emptyset)$,
  $\etrans{2}$ returns $(\conf''', \residual)$ for some $\conf'''$ and $D$. We
  distinguish two cases.
  \begin{itemize}
  \item $\residual$ is empty. This only happens when
    $\dep{\conf''} = \emptyset$, and therefore, since we defined $\partrans{2}$
    to equal $\trans{2}$ except for the counter of variables in
    $\{ X \mid \binds{\varphi}{X} \in \dep{\conf''}\}$, we see that
    $\conf''\trans{2} \conf'''$. Executing $\etrans{r^+}$ on
    $(\conf''', \emptyset)$ is the identity, and hence $\conf''' =
    \conf'$. We may hence conclude that
    $\conf \trans{3} \conf \trans{1} \conf'' \trans{2} \conf'$, as desired.

  \item Otherwise, $\residual$ is non-empty. In this case, $\etrans{2}$ executed
    the same logic as $\trans{2}$ except for the resetting the counter values of
    variables in $\{X \mid \binds{\varphi}{X} \in \dep{\conf''} \}$. The
    subsequent execution of $\etrans{r^+}$ decrements the counters of those
    variables until zero and leaves everything else untouched. The net effect is
    the same as $\trans{2}$ on $\conf''$. Therefore,
    $\conf \trans{3} \conf \trans{1} \conf'' \trans{2} \conf'$, as desired. \qedhere
  \end{itemize}
\end{proof}

\begin{lemma}
  \label{lem:trans-3-1-2-implies-one-or-two-etrans-3-1-2-r+}
  Assume $\conf \trans{3,1,2} \conf'$.
  \begin{enumerate}
  \item If $\conf$ is not complete, then $(\conf, \emptyset) \etrans{3,1,2,r^+} (\conf', \emptyset)$.
  \item If $\conf$ is complete, then there exists for $\conf''$ such that
$(\conf, \emptyset) \etrans{3,1,2,r^+} (\conf'', \emptyset) \etrans{3,1,2,r^+} (\conf', \emptyset)$ 
  \end{enumerate}
\end{lemma}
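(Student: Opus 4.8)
The plan is to handle the two cases separately, proving the not-complete case~(1) directly and then bootstrapping the complete case~(2) from it. For Case~(1), since $\conf$ is not complete the step $\trans{3}$ is a no-op, so the hypothesis $\conf \trans{3,1,2} \conf'$ unfolds as $\conf \trans{1} \conf_1 \trans{2} \conf'$, and I would run the extended machine in lockstep with this decomposition. First, $\etrans{3}$ is the identity on $(\conf,\emptyset)$ because $\conf$ is not complete. Next, $\etrans{1}$ coincides with $\trans{1}$ (by definition $\conf \partrans{1} (\conf_1,\emptyset)$), yielding $(\conf_1,\emptyset)$. Finally $\etrans{2}$ applies $\partrans{2}$ to $\conf_1$, producing $(\conf_2,\residual)$ where $\residual = \{X \mid \binds{\varphi}{X} \in \dep{\conf_1}\}$ and $\conf_2$ agrees with $\conf'$ on every component except that the counters of the dep-variables are kept at their $\conf_1$-values rather than reset to $0$ (their valuations are already set to $\emptyset$ or $\nodes{G}$ exactly as in $\trans{2}$). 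It then remains to show that iterating $\etrans{r}$ drives $(\conf_2,\residual)$ to $(\conf',\emptyset)$.

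The crux is the analysis of the reset phase $\etrans{r^+}$. Since $\etrans{r}$ touches only the counter component, decrements by one every residual variable whose counter is positive, and---by the rule $\residual' = \{X \in \residual \mid \counter(X)-1>0\}$---drops a variable from the residual exactly when its counter reaches $0$, each dep-variable's counter descends monotonically to $0$, at which point the variable leaves the residual. Hence after finitely many applications of $\etrans{r}$ (at least one, as demanded by $r^+$; if $\dep{\conf_1}=\emptyset$ this single step is already the identity) all dep-counters equal $0$ and the residual is empty, after which further steps are identities. As $\conf_2$ differed from $\conf'$ only in these counters, the result is precisely $(\conf',\emptyset)$, which establishes Case~(1).

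For Case~(2), $\conf$ is complete, so $\trans{3}$ produces the initial configuration $\conf_{\mathit{init}}$ with respect to bound $k+1$, and $\conf\trans{3,1,2}\conf'$ decomposes as $\conf\trans{3}\conf_{\mathit{init}}\trans{1}\conf_1\trans{2}\conf'$. I would take $\conf''=\conf_{\mathit{init}}$. The first $\etrans{3,1,2,r^+}$ simulates the $\trans{3}$ step: $\etrans{3}$ applies $\partrans{3}$, producing the $(k{+}1)$-initial configuration but with all counters kept at their $\conf$-values and $\residual=\vars{\varphi}$; the intervening $\etrans{1}$ and $\etrans{2}$ are identities because the residual is non-empty; and $\etrans{r^+}$ decrements every counter to $0$ exactly as in the reset-phase analysis, emptying the residual and yielding $(\conf_{\mathit{init}},\emptyset)$. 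Since $\conf_{\mathit{init}}$ is not complete and satisfies $\conf_{\mathit{init}}\trans{3,1,2}\conf'$ (the $\trans{3}$ there being a no-op, so $\conf_{\mathit{init}}\trans{1}\conf_1\trans{2}\conf'$), a second appeal to Case~(1) gives $(\conf_{\mathit{init}},\emptyset)\etrans{3,1,2,r^+}(\conf',\emptyset)$, completing the chain.

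The step I expect to be the main obstacle is the bookkeeping of the reset phase: one must verify that the residual empties at exactly the moment the relevant counters vanish, and that the counter being decremented is the intended one (the $\conf_1$-values of the dep-variables in Case~(1), and the complete-configuration counter in Case~(2)). Getting the off-by-one in the rule $\residual'=\{X\in\residual\mid\counter(X)-1>0\}$ right---so a variable persists one more round iff its counter still exceeds $1$---is the delicate point, though it is elementary once the per-variable decrement dynamics are made explicit.
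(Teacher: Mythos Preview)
Your proposal is correct and follows essentially the same approach as the paper's own proof: both split on completeness of $\conf$, run $\etrans{3}$, $\etrans{1}$, $\etrans{2}$ in lockstep with the corresponding $\trans{}$ steps, and then let $\etrans{r^+}$ drain the residual counters to zero; in Case~(2) both take $\conf''$ to be the $(k{+}1)$-initial configuration and invoke Case~(1) for the second leg. Your explicit analysis of the residual-emptying dynamics (including the off-by-one in $\residual' = \{X \in \residual \mid \counter(X)-1>0\}$) is slightly more detailed than the paper's, which simply asserts that repeated $\etrans{r}$ zeroes the relevant counters, but the argument is the same.
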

\begin{proof}
  (1) If $\conf$ is not complete, then
  $\conf \trans{3} \conf \trans{1} \conf'' \trans{2} \conf'$ for some configuration $\conf''$. By definition of $\etrans{3}$ and $\etrans{1}$,
  \[ (\conf,\emptyset) \etrans{3} (\conf,\emptyset) \etrans{1} (\conf'',\emptyset). \]

  Let $(\conf''', \residual)$ be the result of executing $\etrans{2}$ on
  $(\conf'',\emptyset)$. By definition of $\etrans{2}$, $\conf'''$ equals
  $\conf'$ except that possibly the counter values of some variables in $\conf'$
  are not reset to zero, and $\residual$ stores the variables for which this is
  the case. If $\residual \not = \emptyset$ then We can make the counter value of these variables zero, and leave
  everything else untouched, by repeatedly executing $\etrans{r}$ until $\residual$ becomes $\emptyset$. If $\residual = \emptyset$, then $\conf'''$ itself is already the result of applying $\trans{2}$ on $\conf''$, i.e., $\conf''' = \conf'$. Also observe that in this case, by definition, $\etrans{r^+}$ on $(\conf''', \residual)$ yields $(\conf''', \residual)$ itself. Hence, in conclusion, both in the case where $\residual = \emptyset$ and $\residual \not = \emptyset$ we have:
  \[ (\conf,\emptyset) \etrans{3} (\conf,\emptyset)  \etrans{1} (\conf'',\emptyset) \etrans{2} (\conf''',\residual) \etrans{r^+} (\conf',\emptyset). \]

  (2) If $\conf$ is complete, then $\trans{3}$ returns the initial configuration
  $\conf''$ for $k+1$, where $k$ is the bound in $\conf$, and
  $\conf''\trans{1,2} \conf'$. By definition of $\etrans{3}$,
  $(\conf,\emptyset) \etrans{3} (\conf''',\residual)$ with $\residual$ non-empty
  and $\conf'''$ equal to $\conf''$, except that no variable has its counter
  value reset. Since $\residual$ is non-empty $\etrans{1,2}$ is the identity
  when executed on $(\conf''', \residual)$. A subsequent execution of
  $\etrans{r^+}$ on $(\conf''', \residual)$ then takes care of the resetting of
  the counter values.  As such
  $(\conf, \emptyset) \etrans{3,1,2,r^+} (\conf'', \emptyset)$. Because
  $\conf''$ is an initial configuration, it is not complete. Therefore, applying
  $\trans{3}$ again to $\conf''$ simply yields $\conf''$. Hence,
  $\conf'' \trans{3} \conf'' \trans{1,2} \conf'$, i.e.,
  $\conf'' \trans{3,1,2} \conf'$. By applying point (1) to
  $\conf'' \trans{3,1,2} \conf'$ it follows that
  $(\conf'',\emptyset) \etrans{3,1,2,r^+} (\conf',\emptyset)$. Therefore,
  $(\conf',\emptyset) \etrans{3,1,2,r^+} (\conf'',\emptyset) \etrans{3,1,2,r^+}
  (\conf',\emptyset)$.
\end{proof}

\begin{lemma}
  \label{lemma:iterated-etrans-3-1-2-r+-same-as-iterated-etrans-3-1-2-r}
  Let $\conf$ and $\conf'$ be configurations. Then  $(\conf, \emptyset) \etrans{3,1,2,r^+}^* (\conf', \emptyset)$ if, and only if,
  $(\conf, \emptyset) \etrans{3,1,2,r}^* (\conf', \emptyset)$.
\end{lemma}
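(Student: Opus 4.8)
The plan is to prove the biconditional by establishing the two relational inclusions $\etrans{3,1,2,r}^* \subseteq \etrans{3,1,2,r^+}^*$ and $\etrans{3,1,2,r^+}^* \subseteq \etrans{3,1,2,r}^*$ between the reflexive–transitive closures, after which both directions follow by restricting to endpoints whose residual set is empty. The whole argument rests on two purely structural no-op facts about the extended transition system, which I would isolate first: (a) each of $\etrans{1}$, $\etrans{2}$, $\etrans{3}$ is the identity on any extended configuration whose residual set is non-empty (immediate from the definition of the extended type-$i$ transition); and (b) $\etrans{r}$ is the identity whenever the residual set is empty, and in general $\residual' \subseteq \residual$, so once the residual set is empty it stays empty.

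The "if" direction is immediate. Performing exactly one reset is a special case of performing one or more resets, so $\etrans{3,1,2,r} \subseteq \etrans{3,1,2,r^+}$ as relations, and taking reflexive–transitive closures preserves inclusion; hence $\etrans{3,1,2,r}^* \subseteq \etrans{3,1,2,r^+}^*$, which gives that the right-hand side implies the left-hand side.

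For the "only if" direction I would prove the single-step inclusion $\etrans{3,1,2,r^+} \subseteq \etrans{3,1,2,r}^*$; idempotence of reflexive–transitive closure then yields $\etrans{3,1,2,r^+}^* \subseteq (\etrans{3,1,2,r}^*)^* = \etrans{3,1,2,r}^*$, and restricting to empty-residual endpoints finishes the proof. To establish the single-step inclusion, fix a macro-step that carries out $\etrans{3}$, $\etrans{1}$, $\etrans{2}$ and then $m \geq 1$ resets, and write out the intermediate reset states $(\conf^{(0)}, \residual^{(0)}) \etrans{r} \cdots \etrans{r} (\conf^{(m)}, \residual^{(m)})$, where $(\conf^{(0)}, \residual^{(0)})$ is the configuration reached right after $\etrans{3},\etrans{1},\etrans{2}$. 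I would then reproduce this macro-step by a first micro-step of $\etrans{3,1,2,r}$ that performs $\etrans{3},\etrans{1},\etrans{2}$ together with the first reset, followed by one additional micro-step for each further reset that is taken while the residual set is still non-empty. By fact (a), the $\etrans{3},\etrans{1},\etrans{2}$ prefix of each of these additional micro-steps is a no-op (its residual set is non-empty), so each such micro-step reduces to a single reset and faithfully reproduces the corresponding reset of the macro-step. By fact (b), as soon as the residual set has become empty every remaining reset of the macro-step is a no-op, so the endpoint $(\conf^{(m)}, \residual^{(m)})$ is already reached by this (possibly shorter) sequence of micro-steps and no further micro-steps are needed.

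The main obstacle is exactly this bookkeeping about how many micro-steps to use: a naive "one micro-step per reset" simulation breaks when the macro-step performs strictly more resets than are needed to empty the residual set, because after the residual empties the next micro-step's $\etrans{3,1,2}$ prefix would no longer be a no-op and would spuriously advance the computation to a different configuration. Using fact (b) to absorb all redundant trailing resets — and stopping the micro-step simulation at the moment the residual set first becomes empty (while still using at least one micro-step so that the mandatory $\etrans{3},\etrans{1},\etrans{2}$ prefix is carried out even when that prefix already produced an empty residual) — is what makes the two endpoint configurations coincide, and checking that the endpoints genuinely match, rather than merely that both residual sets end up empty, is the point that needs the most care.
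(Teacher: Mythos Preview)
Your proof is correct and rests on the same two structural observations the paper uses, but you organise the argument differently. The paper proves each direction by an explicit induction on the number of macro-steps: for $(\Rightarrow)$ it peels off one $\etrans{3,1,2,r^+}$ step, rewrites it as $\etrans{3,1,2,r}$ followed by $\etrans{r}^*$, observes that on non-empty residuals $\etrans{r}^*$ coincides with $\etrans{3,1,2,r}^*$, and recurses; for $(\Leftarrow)$ it peels off one $\etrans{3,1,2,r}$ step and, depending on whether the residual is empty, either recurses directly or first bundles the subsequent steps into a single $\etrans{3,1,2,r^+}$ block. Your approach instead establishes the relational inclusions $\etrans{3,1,2,r} \subseteq \etrans{3,1,2,r^+}$ (trivial) and $\etrans{3,1,2,r^+} \subseteq \etrans{3,1,2,r}^*$ (the substantive part), and then lets monotonicity and idempotence of the Kleene star do the inductive bookkeeping. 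This is cleaner: your $(\Leftarrow)$ becomes a one-liner where the paper spends a full case analysis, and your $(\Rightarrow)$ in fact proves a slightly stronger statement (the inclusion holds for arbitrary extended configurations, not just those with empty residual at the endpoints). The paper's version, on the other hand, keeps everything at the level of concrete runs and may be easier to follow for a reader unfamiliar with reasoning about reflexive--transitive closures abstractly. Your identification of the potential pitfall---that naively matching micro-steps one-for-one with resets would overshoot once the residual empties---is exactly the point where the paper's induction also has to branch, so you have correctly located the only real subtlety.
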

\begin{proof}
  ($\Rightarrow$) Assume
  $(\conf, \emptyset) \etrans{3,1,2,r^+}^* (\conf, \emptyset)$. The proof is by
  induction on the number of times that $\etrans{3,1,2,r^+}$ is executed to
  obtain $(\conf', \emptyset)$. The base case, with zero executions, is trivial.  Otherwise,
  \[ (\conf,\emptyset) \etrans{3,1,2,r^+} (\conf''',\emptyset)  \etrans{3,1,2,r^+}^* (\conf', \emptyset)\] for some $\conf'''$. We can write this as 
  \[ (\conf,\emptyset) \etrans{3,1,2,r} (\conf'',\residual) \etrans{r}^*
    (\conf''',\emptyset) \etrans{3,1,2,r^+}^* \conf'\] for some
  $(\conf'',\residual)$, where the $\etrans{r}^*$ is executed until $\residual$ becomes empty. Note that on inputs where $\residual$ is non-empty, $\etrans{r}^*$ is equivalent to $\etrans{3,1,2,r}^*$. Therefore, 
  \[ (\conf,\emptyset) \etrans{3,1,2,r} (\conf'',\residual) \etrans{3,1,2,r}^*
    (\conf''',\emptyset) \etrans{3,1,2,r^+}^* \conf'\] and hence, by induction hypothesis, also 
  \[ (\conf,\emptyset) \etrans{3,1,2,r} (\conf'',\residual) \etrans{3,1,2,r}^*
    (\conf''',\emptyset) \etrans{3,1,2,r}^* \conf',\]
  as desired.

  ($\Leftarrow$) Assume $(\conf, \emptyset) \etrans{3,1,2,r}^* (\conf', \emptyset)$. The proof is by
  induction on the number of times that $\etrans{3,1,2,r}$ is executed to
  obtain $(\conf', \emptyset)$. The base case, with zero executions, is trivial.  Otherwise,  $(\conf,\emptyset) \etrans{3,1,2,r} (\conf_0,\residual_0)  \etrans{3,1,2,r}^* (\conf',\emptyset)$ for some $\conf_0$ and $\residual_0$.  We distinguish two cases.
  \begin{itemize}
  \item $\residual_0 = \emptyset$, then
    $(\conf,\emptyset) \etrans{3,1,2,r} (\conf_0,\emptyset)$ hence also
    $(\conf,\emptyset) \etrans{3,1,2,r^+} (\conf_0,\emptyset)$. The result then
    follows from the induction hypothesis.
  \item $\residual_0 \not = \emptyset$. Then we must execute $\etrans{3,1,2,r}$
    $\ell \geq 1$ times before $\residual_0$ becomes empty. I.e., the derivation
    of $(\conf'',\residual_0) \etrans{3,1,2,r}^* (\conf',\emptyset)$ is of the
    form
    \begin{multline*}
(\conf'',\residual_0)  \etrans{3,1,2,r}  (\conf_1,\residual_1)   \etrans{3,1,2,r} \dots \\ \etrans{3,1,2,r} (\conf_{\ell-1},\residual_{\ell-1})  \etrans{3,1,2,r} (\conf_{\ell}, \emptyset) \\ \etrans{3,1,2,r}^* (\conf', \emptyset)      
    \end{multline*}
    It is straightforward to observe that on inputs where $\residual \not =\emptyset$, $\etrans{3,1,2,r}$ is equivalent to $\etrans{r}$. Therefore, 
    \begin{multline*}
(\conf'',\residual_0)  \etrans{r}  (\conf_1,\residual_1)   \etrans{r} \dots \\ \etrans{r} (\conf_{\ell-1},\residual_{\ell-1})  \etrans{r} (\conf_{\ell}, \emptyset) \\ \etrans{3,1,2,r}^* (\conf', \emptyset)      
    \end{multline*}
    I.e., $(\conf'',\residual_0) \etrans{r^+} (\conf_\ell,
    \emptyset)$. Therefore,
    \[ (\conf, \emptyset) \etrans{3,1,2,r^+} (\conf_\ell,\emptyset)
      \etrans{3,1,2,r}^* (\conf',\emptyset),\] from which the result follows by
    the induction hypothesis applied to
    $(\conf_\ell,\emptyset) \etrans{3,1,2,r}^* (\conf',\emptyset)$. \qedhere
  \end{itemize}
\end{proof}

 \end{toappendix}

\begin{propositionrep}
  \label{prop:extended-trans-equal-normal-trans}
  Let $\conf, \conf'$ be configurations with  $\conf'$ complete.
  Then $\conf \trans{3,1,2}^* \conf'$ if, and only if,
  $(\conf, \emptyset) \etrans{3,1,2,r}^* (\conf', \emptyset)$.
\end{propositionrep}
\begin{proofsketch}
  \vspace{-1ex}
  We only illustrate the $\Rightarrow$ direction, the converse direction proceeds similarly but additionally exploits the completeness of $\conf'$. 
  If $\conf$ is not complete, then $\trans{3}$ and $\etrans{3}$
  are the identity on $\conf$ resp. $(\conf, \emptyset)$. As such, it is not
  difficult to see that if $\conf$ is not complete we may mimic
  $\trans{3,1,2}$ by executing $\etrans{3, 1,2}$ on $(\conf, \emptyset)$ followed by zero or more executions of $\etrans{r}$ until the residual set
  becomes empty. Since each extended transition $\etrans{i}$ with
  $1 \leq i \leq 3$ acts as the identity on extended configurations for which
  the residual set is no-empty, we may also execute $\etrans{r}$ on extended
  configurations with non-empty residual set by means of
  $\etrans{3,1,2,r}$. Consequently, on incomplete configurations we may mimic
  $\trans{3,1,2}$ by executing $\etrans{3,1,2,r}^*$.
  
  If $\conf$ is complete, then $\trans{3}$ yields a non-complete configuration,
  say $\conf''$.  Using analogous reasoning as before, we can argue that we may
  mimic $\trans{3}$ on $\conf$ by means of $\etrans{3,1,2,r}^*$.  Because
  $\conf''$ is not-complete, the subsequent execution of $\trans{1,2}$ on
  $\conf''$ is equivalent to execution of $\trans{3,1,2}$ on $\conf''$. The
  latter can be mimicked by means of $\etrans{3,1,2,r}^*$ by our reasoning in
  the previous case. Consequently, $\trans{3,1,2}$ is mimicked by means of two
  applications of $\etrans{3,1,2,r}^*$. \qedhere

\end{proofsketch}
\begin{proof}
  ($\Rightarrow$).  Assume that $\conf \trans{3,1,2}^* \conf'$. We need to show
  that $(\conf, \emptyset) \etrans{3,1,2,r}^* (\conf', \emptyset)$. The proof is
  by induction on number of times that $\trans{3,1,2}$ is executed to obtain
  $\conf'$. The base case, with zero executions, is trivial. Otherwise,
  $\conf \trans{3,1,2} \conf'' \trans{3,1,2}^* \conf'$ for some $\conf'' $.  By
  Lemma~\ref{lem:trans-3-1-2-implies-one-or-two-etrans-3-1-2-r+} and the
  induction hypothesis, also
  \[ (\conf,\emptyset) \etrans{3,1,2,r^+}^* (\conf'', \emptyset)
    \etrans{3,1,2,r}^* (\conf',\emptyset)\] Hence, by
  Lemma~\ref{lemma:iterated-etrans-3-1-2-r+-same-as-iterated-etrans-3-1-2-r}, also   \[ (\conf,\emptyset) \etrans{3,1,2,r}^* (\conf'', \emptyset)
    \etrans{3,1,2,r}^* (\conf', \emptyset).\]

  ($\Leftarrow$). Assume
  $(\conf,\emptyset) \etrans{3,1,2,r}^* (\conf'', \emptyset)$. By Lemma~
  \ref{lemma:iterated-etrans-3-1-2-r+-same-as-iterated-etrans-3-1-2-r}, also
  $(\conf,\emptyset) \etrans{3,1,2,r^+}^* (\conf'', \emptyset)$. We show that
  this implies $\conf \trans{3,1,2}^* \conf'$.  The proof is by induction on the
  number of times $\etrans{3,1,2,r^+}$ is executed.  The base case, with zero
  executions, is trivial. Otherwise,
  \[ (\conf,\emptyset) \etrans{3,1,2,r^+} (\conf'', \emptyset)
    \etrans{3,1,2,r^+}^* (\conf', \emptyset).\]
 We distinguish two cases.
  \begin{itemize}
  \item $\conf$ itself is complete. By
    Lemma~\ref{lem:etrans-3-1-2-r+-implies-etrans-3-or-312}(1), $\conf \trans{3} \conf''$, which implies that $\conf''$ is not complete. As such, $\conf' \not = \conf''$, and thus there exists some $\conf'''$ such that 
  \[ (\conf'',\emptyset) \etrans{3,1,2,r^+} (\conf''', \emptyset)
    \etrans{3,1,2,r^+}^* (\conf', \emptyset).\]
  By Lemma~\ref{lem:etrans-3-1-2-r+-implies-etrans-3-or-312}(2), $\conf'' \trans{3,1,2} \conf'''$. As such,
  \[ \conf \trans{3} \conf'' \trans{3,1,2} \conf'''\] From this and the
  observation that $\trans{3}$ is the identity on non-complete configurations,
  and $\conf''$ is not complete, we may therefore conclude that
  $\conf \trans{3,1,2} \conf'''$. Furthermore, since
  $(\conf''', \emptyset) \etrans{3,1,2,r^+}^* (\conf', \emptyset)$, also
  $\conf''' \trans{3,1,2}^* \conf'$ by induction hypothesis. Therefore,
  $\conf \trans{3,1,2}^* \conf''$, as desired.
\item $\conf$ is not complete. Then $\conf \trans{3,1,2} \conf''$ by Lemma~\ref{lem:etrans-3-1-2-r+-implies-etrans-3-or-312}(2)  and $\conf'' \trans{3,1,2}^* \conf'$ by induction hypothesis. Hence $\conf \trans{3,1,2}^* \conf''$. \qedhere
  \end{itemize}
\end{proof}

We can encode extended configurations as labeled graphs similarly to how we
encode configurations as labeled graphs: in the local configuration of
$(\conf, \residual)$ at node $n$ we now also include $\residual$ at every
node. The concept of a label transformer simulating a function on extended
configurations is defined in the obvious way.

\begin{propositionrep}
  \label{prop:simulate-single-extended-trans}
  There exists a simple \ac layer simulating $\etrans{3,1}$, as well as
  \rfnns whose lifting simulate $\etrans{2}$ and
  $\etrans{r}$.
\end{propositionrep}
\begin{proofsketch}
  The crux is that local versions of extended configurations are vectors whose  elements are all in $\bools$ or $\nats$. It is well-known that, on such vectors, \rfnns can express all functions defined by boolean combinations of (i) the $\bools$ input elements and (ii) comparisons on the $\nats$ elements. For instance, if $a,b\in \bools$ and $c \in \nats$ then then function $\phi(a,b,c) = \neg(a \wedge \neg b) \vee (c > 1)$ is definable by an \rfnn. \inFullVersion{See the appendix for an illustration.}

We will only need such functions to define output local configuration vectors of $\etrans{3,1}$, $\etrans{2}$, and $\etrans{r}$. For $\etrans{3,1}$ we also need the message passing capability of \ac layers.

Let us illustrate how to simulate $\etrans{3,1}$, focusing on a single output
element. Assume that $(\conf, \residual) \etrans{3,1} (\conf', \residual')$. Let
$H$ and $H'$ be the encodings of $(\conf, \residual)$ and
$(\conf', \residual')$, respectively. Let $n \in \nodes{H} = \nodes{H'}$. Then
$H(n)$ is the input local configuration vector at $n$, and $H'(n)$ the output
vector. We illustrate only how to define the output element
$H'(n)[\loresult(\atleast{\ell}\, \psi)]$ with $\atleast{\ell}\, \psi$ a
subformula of $\varphi$.\footnote{Recall that
  $H'(n)[\loresult(\atleast{\ell}\, \psi)]$ stores whether
  $n \in \result'(\atleast{\ell}\, \psi)$ with $\result'$ the result assignment
  of $\conf'$.}  According to the definition of $\etrans{3,1}$:
\begin{itemize}
\item if $\residual' \not = \emptyset$ then $H'(n)[\loresult(\atleast{\ell}\, \psi] = H(n)[\loresult(\atleast{\ell}\, \psi]$;
\item if $\residual'  = \emptyset$ and $\conf$ is complete then $H'(n)[\loresult(\atleast{\ell}\, \psi] = 0$ since $\trans{3}$ moves to the next initial $k$-configuraton;
\item otherwise, $\residual' = \emptyset$ and $\conf$ is not complete, and $\etrans{3}$ is the identity on $(\conf,\residual)$ and hence $(\conf', \residual')$ is the result of applying $\etrans{1}$ on $(\conf, \residual)$. Therefore, according to Definition~\ref{def:trans-one}, in this case, $H'(n)[\loresult(\atleast{\ell}\, \psi)] = 1$ if, and only if, $\left(\sum_{m \in \out{H}{n}} H(m)[\loresult(\psi)]\right) \geq \ell$. Note that in an \ac layer, $\sum_{m \in \out{H}{n}} H(m)[\loresult(\psi)]$ is provided by the $\agg$ function, which aggregates the local vectors of neighboring nodes, so this remains a comparison of an input feature vector element.
\end{itemize}

In each of these three cases, $H'(n)[\loresult(\atleast{\ell}\, \psi]$ is hence determined by a boolean combination of input boolean elements and natural number comparisons. The three conditions themselves can also be expressed as boolean combinations: $\residual'= \emptyset$ is equivalent to $\bigvee_{X \in \vars{\varphi}} H(n)[X \in \residual]$ while completeness of $\conf$ is equivalent to $H(n)[\varphi \in \isvalid]$. Therefore, the entire computation of $H'(n)[\loresult(\atleast{\ell}\, \psi]$ is definable by a simple \ac layer.
\end{proofsketch}
\begin{proof}
  The full proof follows the reasoning of the proof sketch. We here only
  illustrate two things that are worth mentioning explicitly.

  (1) First, the simulation of $\etrans{2}$ hinges on the fact that for any variable $X$ we can express the condition $\binds{\varphi}{X} \in \ticks{\conf}$ as well as $\binds{\varphi}{X} \in \dep{\conf}$ as boolean formulae $\tau_X$ resp $\delta_X$  on local configuration vectors. Suppose that $\vec{x}$ is a local configuration vector at some node of a graph encoding $(\kappa,\residual)$. Then $\tau_X$ follows the definition of $\binds{\varphi}{X}$ ticking:
  \begin{align*}
    \tau_X & = \bigwedge_{\beta \in \sub{\binds{\varphi}{X}}} \vec{x}[F(\beta)] \\  & \quad \wedge \vec{x}[\counter(\binds{\varphi}{X})] + 1 > \vec{x}[k]\\
&  \quad \wedge \bigwedge_{\beta \in  \sub{\binds{\varphi}{X}}} \vec{x}[\counter(\beta)] + 1 = k
  \end{align*}
  
  The formula $\delta_X$ needs a little more work.  For any variable $X$, define, $\delta_X^0$ to be true if and only if $\binds{\varphi}{X}$ has some free variable that ticks in $\conf$, i.e.
  \[ \delta_X^0 = \bigvee_{Z \in \free{\binds{\varphi}{X}}} \tau_Z. \]
  For $i>0$, define $\delta_X^i$ to be true if $\delta_X^{i-1}$ is true or if $\binds{\varphi}{X}$  has a free variable for which that is in $\delta_Z^{i-1}$ is true.
  \[ \delta_X^i = \delta_X^{i-1} \vee \bigvee_{Z \in \free{\binds{\varphi}{X}}}
    \delta_Z^{i-1} \] Let $q$ be the nesting depth fixpoint formulae in
  $\varphi$. Then it is not difficult to see that $X \in \dep{\conf}$ if, and
  only if, $\delta_X^q$ is true.
  
  The simulation of $\etrans{2}$ is then rather straightforward. For example, to
  determine the output counter value of
  $\alpha = \binds{\varphi}{X} \in \rfp{\varphi}$, we reason as follows. If the
  residual set $\residual$ is non-empty, then the counter must not change. If
  the residual set is empty then the counter must increase by one if
  $\tau_X = 1$. Observe that when simulated by a \rfnn, the boolean formula
  $(\tau_X \wedge \residual = \emptyset)$ yields $1$ exactly when the counter
  needs to increase by one.\footnote{Recall that in the proof sketch we have
    already argued that $\residual = \emptyset$ is expressible as a boolean
    formula.}  Therefore we may compute $\counter'(\alpha)$ by computing
  $\vec{x}[\counter(\alpha)] + (\tau_X \wedge \residual = \emptyset)$. The other
  elements follow a similar reasoning. In particular, we never need the
  aggregate features computed by $\agg$, which is why $\etrans{2}$ is simulated
  simply by lifting an \rfnn to operate on labeled graphs.

  (2) For simulating $\etrans{r}$ we can follow a similar reasoning. For all $X$
  \begin{align*}
    \counter'(X)  & = \counter(X) - \vec{x}[X \in \residual] \\
  \end{align*}
  and in the output we have $[X \in \residual'] = \vec{x}[X \in \residual] \wedge (\vec{x}[\counter(X)] > 2)$. All other vector elements are left unchanged.
\end{proof}

\begin{corollary}
  \label{cor:simulate-extended-trans-312r}
  There exists a simple \ac layer simulating $\etrans{3,1,2,r}$.
\end{corollary}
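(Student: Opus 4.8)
The plan is to obtain the required layer as the composition of the three simulators furnished by Proposition~\ref{prop:simulate-single-extended-trans}, and then to verify that this composition is again a \emph{simple} \ac layer. Concretely, let $L_{3,1}$ be the simple \ac layer simulating $\etrans{3,1}$, and let $f_2$ and $f_r$ be the \rfnns whose liftings $\lift{f_2}$ and $\lift{f_r}$ simulate $\etrans{2}$ and $\etrans{r}$, respectively. Since simulation composes—if $g_1$ simulates $h_1$ and $g_2$ simulates $h_2$ then $g_2(g_1(\enc(\conf))) = g_2(\enc(h_1(\conf))) = \enc(h_2(h_1(\conf)))$, so $g_2 \circ g_1$ simulates $h_2 \circ h_1$—the transformer $\lift{f_r} \circ \lift{f_2} \circ L_{3,1}$ simulates the composite extended transition $\etrans{3,1,2,r}$ (applying $L_{3,1}$ first realises $\etrans{3,1}$, then $\lift{f_2}$ realises $\etrans{2}$, then $\lift{f_r}$ realises $\etrans{r}$). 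It therefore only remains to argue that this particular composition is itself a simple \ac layer.

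For this I would unfold the action of the composite at an arbitrary node $n$ of the encoding $H$ of an extended configuration. Writing $f_{3,1}$ for the \rfnn that is the combination function of $L_{3,1}$, the layer $L_{3,1}$ produces at $n$ the value $f_{3,1}\bigl(H(n) \concat \agg\,\mset{H(m) \mid m \in \out{H}{n}}\bigr)$ with $\agg$ the simple sum aggregation. The subsequent liftings $\lift{f_2}$ and $\lift{f_r}$ act purely node-wise, introducing no further message passing, so after all three steps the label of $n$ equals $(f_r \circ f_2 \circ f_{3,1})\bigl(H(n) \concat \agg\,\mset{H(m) \mid m \in \out{H}{n}}\bigr)$. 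Thus the composite is an \ac layer whose aggregation is still sum and whose combination function is $\comb(\vec{x}, \vec{y}) = (f_r \circ f_2 \circ f_{3,1})(\vec{x} \concat \vec{y})$.

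The only point needing care is that $f_r \circ f_2 \circ f_{3,1}$ is again an \rfnn, so that the combination function has the required simple form. This holds because \rfnns are closed under composition: writing each factor as an alternating stack of affine maps and \relu applications, the composition simply concatenates the two stacks, and the two affine maps meeting at each junction fuse into a single affine map since the composition of affine maps is affine. Applying this fusion at the two junctions yields one \rfnn $f = f_r \circ f_2 \circ f_{3,1}$, and hence a simple \ac layer with sum aggregation and $\comb(\vec{x},\vec{y}) = f(\vec{x} \concat \vec{y})$, establishing the corollary.

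I do not expect a genuine obstacle: all the real work lives in Proposition~\ref{prop:simulate-single-extended-trans}, and the corollary is essentially bookkeeping. The one thing requiring (minor) attention is exactly the observation above—that postcomposing a simple \ac layer with node-wise \rfnn liftings keeps the aggregation simple and the combination an \rfnn via affine fusion—after which the claim is immediate.
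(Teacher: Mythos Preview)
Your proposal is correct and follows essentially the same approach as the paper: compose the simple \ac layer for $\etrans{3,1}$ with the lifted \rfnns for $\etrans{2}$ and $\etrans{r}$, and observe that the result is again a simple \ac layer because the combination function becomes the composition of the three \rfnns, which is itself an \rfnn. The paper states this more tersely (``simple \ac layers are closed under composition with lifted \rfnns''), but the content is the same.
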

\begin{proof}
  Observe that simple \ac layers are closed under composition with lifted
  \rfnns: if $L\colon \graphs{\reals^d} \to \graphs{\reals^d}$ is a simple
  \ac layer  and $f\colon \reals^d \to \reals^d$ is
  a \rfnn then $\lift{f} \circ L$ is also expressible by a simple \ac
  layer, obtained by taking the \rfnn $g\colon \reals^d \to \reals^d$ in
  $L$'s $\comb$ function, and replacing it by $f \circ g$. \qedhere
\end{proof}

We now have all the ingredients to prove
Theorem~\ref{thm:mu-calculus-expressible-in-simple-gnn}.

\begin{proof}[Proof of Theorem~\ref{thm:mu-calculus-expressible-in-simple-gnn}]
  Let $\varphi$ be a \mml sentence and $G$ the input to $\varphi$.  Let $d$ be
  the dimension necessary to encode local versions of extended configurations as
  vectors in $\reals^d$.

  Fix $\netw = (\init, L, \hlt, \readout)$ be the recurrent \gnn over
  $\pow{\allprops}$ of dimension $d$ where
  \begin{itemize}
  \item $\init\colon \pow{\allprops} \to \reals^d$ maps each finite set $P$ of
    proposition symbols to the initial local extended configuration of $\varphi$
    w.r.t. $k=1$, which is $(P, 1, v, \emptyset, r, s, t, \emptyset)$ with $v,s,t$
    mapping all formulae to $0$, and $t$ mapping all formulae to $1$. It is
    straighforward to see that $\lift{\init}$, when executed on
    $G$, returns the encoding of
    $(\conf, \emptyset)$ with $\conf$ the initial configuration of $\varphi$ on
    $G$ w.r.t. $k =1$.
  \item $L$ is the simple \ac layer simulating $\etrans{3,1,2,r}$, which exists by  Corollary~\ref{cor:simulate-extended-trans-312r}.
  \item $\hlt\colon \reals^d \to \bools$ is the \rfnn that, given the local
    version $\vec{x_n}$ of an extended configuration $(\conf, \residual)$ at
    node $n$ returns $1$ if and only if $\conf$ is complete (i.e.,
    $\vec{x_n}[\varphi \in \isvalid] = 1$), $\varphi$ is $k$-stable at $(G,\valuation,n)$,
    (i.e., $\vec{x_n}[\loiskstable(\varphi)] = 1$), and $\residual$ is empty (i.e., $\neg(\vee_{X \in \vars{\varphi}} \vec{x_n}[X \in \residual]) = 1$). On a graph $H$ that encodes an extended configuration $(\conf,\residual)$, $\lift{\hlt}(H)$ has all nodes labeled $1$ if, and only if, $\conf$ is  complete and stable, and $\residual = \emptyset$.
  \item $\readout\colon \reals^d \to \bools$ is the function that on $\vec{x_n}$
    which is the encoding of $(\conf, \residual)$ at node $n$, returns the
    element $\vec{x}[\loresult(\varphi)]$ of $\vec{x}$ that stores whether
    $n \in \result(\varphi)$.
  \end{itemize}

$\netw$ expresses the node classifier defined by $\varphi$ by the combination
  of   Proposition~\ref{prop:smallest-stable-uniform-approximation-terminates-and-is-correct}, ~\ref{prop:incremental-correct}, ~\ref{prop:extended-trans-equal-normal-trans}, and Corollary ~\ref{cor:simulate-extended-trans-312r}. Strictly speaking, $\netw$ is not simple since $\hlt$ need not be simple. This is easily solved, however, by extending local configurations with an extra boolean element that is used to store the result of $\hlt$, and modifying $L$ so that it also computes $\hlt$ and stores it in this extra element. Then, $\hlt$ can just read this element instead.
\end{proof}

\begin{toappendix}
    \subsection{On the expressive power of \rfnns.}
\label{sec:expressive-power-rfnns}

In the proof of Proposition~\ref{prop:simulate-single-extended-trans} and Theorem~\ref{thm:mu-calculus-expressible-in-simple-gnn} we have used the fact that, given input vectors where all elements are in $\bools$ or $\nats$, \rfnns can express all functions defined by boolean combinations of (i) the boolean input elements and (ii) comparisons on the $\nats$ elements and additions on the $\nats$ elements.
We illustrate this here in some more detail.

Recall that \(\relu(x)=\max(0,x)\) for $x \in \reals$.  Define
\begin{align}
  \operatorname{clip}(x)=\relu(\relu(x)-\relu(x-1)),
\end{align}
Clearly, $\operatorname{clip}$ maps integer inputs exactly onto Boolean values (0 if $x$ is an integer with $x \leq 0$, and 1 otherwise). We interpret logical truth as \(1\) and logical falsehood as \(0\). Using this interpretation, we define the strict greater-than relation and Boolean conjunction as:
\begin{align}
  (a>b)=\operatorname{clip}(a-b),\quad\quad(a\land b)=\operatorname{clip}(a+b-1).
\end{align}
It is readily verified that the first is correct for natural numbers $a,b \in \nats$, and the second is correct for booleans $a,b \in \bools$.
Furthermore, boolean negation is defined as 
\begin{align}
  (\neg a)=1 - a
\end{align}

From this, and the well-known fact that  \rfnns are closed under sequential composition, parallel composition, and concatenation, our claim follows. Here, the sequential composition of $f \colon \reals^p \to \reals^q$  with $g\colon \reals^q \to \reals^r$ is the function $g \circ f\colon \reals^p \to \reals^r$ such that $(g \circ f)(\vec{x}) = g(f(\vec{x}))$ for all $\vec{x} \in \reals^p$. The parallel composition of $f \colon \reals^p \to \reals^q$ with $f' \colon \reals^{p'} \to \reals^{q'}$ is the function $f \| f'\colon \reals^{p + p'} \to \reals^{q + q'}$ such that $(f \| f')(\vec{x} \concat \vec{x'}) = f(\vec{x}) \concat f'(\vec{x'})$ for all $\vec{x} \in \reals^{p}$ and $\vec{x'} \in \reals^{p'}$.  (Recall that $\vec{x} \concat \vec{x'}$ denotes vector concatenation.) Finally, the concatenation of $f\colon \colon \reals^p \to \reals^q$ with $h \colon \reals^p \to \reals^{q'}$ is the function $f \concat h\colon \reals^p \to \reals^{q+q'}$ such that $(f \concat h)(\vec{x}) = f(\vec{x}) \concat h(\vec{x})$ for all $\vec{x} \in \reals^p$.

 \end{toappendix}

 \section{Concluding remarks}
\label{sec:conclusion}

We have shown that ``bare bones'' recurrent GNNs, without any
features that allow to determine the graph size, can express
$\mu$-calculus, under a terminating semantics.
Specifically, the GNN can be halted as soon as we see a stopping
bit of one in every node. Morever, it is guaranteed that this
will happen after a finite number of iterations (polynomial in
the graph size).  Proving this possibility result turned out to
be surprisingly subtle and intricate.

An interesting question for further research is whether it is
possible to make the termination fully convergent, as intended in
the original recurrent GNN proposal by Scarselli et
al.~\shortcite{original-gnn}.  Specifically, is it possible to
express every $\mml$ formula by a recurrent GNN, with stopping
bits and our halting guarantee, and moreover such that the entire
feature vector of all nodes will stabilize?

It is also natural to wonder about the converse direction: do
recurrent GNN classifiers always fall within $\mml$?  The
unreserved statement certainly does not hold, even for
fixed-depth GNNs \cite{barcelo-log-expr-gnn}. As already mentioned, Pflueger et
al.~obtained a converse result relative to node classifiers
expressible in ``local'' monadic fixpoint logic.

What if we relativize more generally to MSO (monadic second-order
logic) node classifiers and strongly connected graphs?
In restriction to strongly connected graphs, g-bisimilarity and total surjective g-bisimilarity coincide, and hence here
recurrent GNNs are invariant under graded bisimulations
(Section~\ref{sec:recurrent-gnns}).
Since unary MSO formulas
invariant under graded bisimilarity are expressible in $\mml$
\cite{DBLP:journals/tcs/Walukiewicz02,DBLP:conf/lics/JaninL01},
it would then immediately follow that recurrent GNN node
classifiers in MSO fall within $\mml$ (when restricted to strongly connected graphs), were it not for the caveat
that the cited Janin--Walukiewicz theorem (as well as its graded
version) assumes invariance of the MSO formula over all graphs,
finite or infinite.  In contrast, GNNs are designed to operate on
finite graphs only.

There may be a way out of this conundrum, as a breakthrough,
solving the open problem of proving the finitary version of the
cited Janin--Walukiewicz theorem, has recently been announced
\cite{colc-msomufinite}.  Another way to sidestep the situation
is to agree on a reasonable definition of GNNs working on
infinite graphs.  This is an interesting line of research and
promising work already exists for graphons \cite{boeker-fine-graphons}.
Of course, the requirement that the recurrent GNN node classifier is
expressible in MSO becomes stronger and possibly unnatural in
this manner, since MSO cannot express finiteness. 
The general case where graphs are not necessarily strongly connected
remains wide open.

\section*{Acknowledgments}
This work was supported by the Bijzonder Onderzoeksfonds (BOF) of Hasselt University  Grant No. BOF20ZAP02; by the Research Foundation Flanders (FWO) under research project Grant No. G019222N; and by the Flanders AI (FAIR) research program.

\bibliographystyle{kr}
\bibliography{references,database}

\begin{thebibliography}{}

\bibitem[\protect\citeauthoryear{Ahvonen \bgroup et al\mbox.\egroup
  }{2024a}]{lutz-recgnn}
Ahvonen, V.; Heiman, D.; Kuusisto, A.; and Lutz, C.
\newblock 2024a.
\newblock Logical characterizations of recurrent graph neural networks with
  reals and floats.
\newblock In Globerson, A.; Mackey, L.; et~al., eds., {\em Proceedings 38th
  Annual Conference on Neural Information Processing Systems}.

\bibitem[\protect\citeauthoryear{Ahvonen \bgroup et al\mbox.\egroup
  }{2024b}]{lutz-recgnn-arxiv}
Ahvonen, V.; Heiman, D.; Kuusisto, A.; and Lutz, C.
\newblock 2024b.
\newblock Logical characterizations of recurrent graph neural networks with
  reals and floats.
\newblock arXiv:2405.14606.

\bibitem[\protect\citeauthoryear{Barcel\'o \bgroup et al\mbox.\egroup
  }{2020}]{barcelo-log-expr-gnn}
Barcel\'o, P.; Kostylev, E.; Monet, M.; P\'erez, J.; Reutter, J.; and Silva, J.
\newblock 2020.
\newblock The logical expressiveness of graph neural networks.
\newblock In {\em 8th International Conference on Learning Representations}.
\newblock OpenReview.net.

\bibitem[\protect\citeauthoryear{B\"{o}ker, Levie, and
  others}{2023}]{boeker-fine-graphons}
B\"{o}ker, J.; Levie, R.; et~al.
\newblock 2023.
\newblock Fine-grained expressivity of graph neural networks.
\newblock In {\em Proceedings 37th Annual Conference on Neural Information
  Processing Systems}.

\bibitem[\protect\citeauthoryear{Colcombet, Doumane, and
  Kuperberg}{2024}]{colc-msomufinite}
Colcombet, T.; Doumane, A.; and Kuperberg, D.
\newblock 2024.
\newblock Tree algebras and bisimulation-invariant {MSO} on finite graphs.
\newblock arXiv:2407.12677.

\bibitem[\protect\citeauthoryear{de Rijke}{2000}]{rijke-gml}
de~Rijke, M.
\newblock 2000.
\newblock A note on graded modal logic.
\newblock {\em Studia Logica} 64(2):271--283.

\bibitem[\protect\citeauthoryear{Geerts, Steegmans, and {Van den
  Bussche}}{2022}]{geertsExpressivePowerMessagePassing2022}
Geerts, F.; Steegmans, J.; and {Van den Bussche}, J.
\newblock 2022.
\newblock On the~{{Expressive Power}} of~{{Message-Passing Neural Networks}}
  as~{{Global Feature Map Transformers}}.
\newblock In Varzinczak, I., ed., {\em Foundations of {{Information}} and
  {{Knowledge Systems}}}, LNCS,  20--34.
\newblock {Cham}: {Springer}.

\bibitem[\protect\citeauthoryear{Gilmer, Schoenholz, and
  others}{2017}]{gilmer-mpnn}
Gilmer, J.; Schoenholz, S.; et~al.
\newblock 2017.
\newblock Neural message passing for quantum chemistry.
\newblock In Precup, D., and Teh, Y., eds., {\em Proceedings 34th International
  Conference on Machine Learning}, volume~70 of {\em Proceedings of Machine
  Learning Research},  1263--1272.

\bibitem[\protect\citeauthoryear{Goodfellow, Bengio, and
  Courville}{2016}]{goodfellow-book}
Goodfellow, I.; Bengio, Y.; and Courville, A.
\newblock 2016.
\newblock {\em Deep Learning}.
\newblock MIT Press.

\bibitem[\protect\citeauthoryear{Grohe}{2021}]{grohe-logic-gnn}
Grohe, M.
\newblock 2021.
\newblock The logic of graph neural networks.
\newblock In {\em Proceedings 36th Annual ACM/IEEE Symposium on Logic in
  Computer Science},  1--17.
\newblock IEEE.

\bibitem[\protect\citeauthoryear{Hamilton}{2020}]{hamilton-grl}
Hamilton, W.
\newblock 2020.
\newblock {\em Graph Representation Learning}.
\newblock Synthesis Lectures on Artificial Intelligence and Machine Learning.
  Morgan {\&} Claypool.

\bibitem[\protect\citeauthoryear{Janin and
  Lenzi}{2001}]{DBLP:conf/lics/JaninL01}
Janin, D., and Lenzi, G.
\newblock 2001.
\newblock Relating levels of the mu-calculus hierarchy and levels of the
  monadic hierarchy.
\newblock In {\em 16th Annual {IEEE} Symposium on Logic in Computer Science,
  Boston, Massachusetts, USA, June 16-19, 2001, Proceedings},  347--356.
\newblock {IEEE} Computer Society.

\bibitem[\protect\citeauthoryear{Kupferman, Sattler, and
  Vardi}{2002}]{ksv-gradedmu}
Kupferman, O.; Sattler, U.; and Vardi, M.
\newblock 2002.
\newblock The complexity of the graded $\mu$-calculus.
\newblock In Voronkov, A., ed., {\em 18th International Conference on Automated
  Deduction}, volume 2392 of {\em Lecture Notes in Computer Science},
  423--437.
\newblock Springer.

\bibitem[\protect\citeauthoryear{Kuusisto}{2013}]{kuusisto-centrepoint}
Kuusisto, A.
\newblock 2013.
\newblock Modal logic and distributed message passing automata.
\newblock In Rochi Della~Rocca, S., ed., {\em Computer Science Logic},
  volume~23 of {\em Leibniz International Proceedings in Informatics},
  452--468.
\newblock Schloss Dagstuhl--Leibniz-Zentrum f\"ur Informatik.

\bibitem[\protect\citeauthoryear{Minsky}{1967}]{minsky1967}
Minsky, M.~L.
\newblock 1967.
\newblock {\em Computation: Finite and Infinite Machines}.
\newblock Englewood Cliffs, NJ: Prentice-Hall.

\bibitem[\protect\citeauthoryear{Otto}{2023}]{otto-graded}
Otto, M.
\newblock 2023.
\newblock Graded modal logic and counting bisimulation.
\newblock arXiv:1910.00039.

\bibitem[\protect\citeauthoryear{Pflueger, Tena~Cucala, and
  Kostylev}{2024}]{kostylev-recgnn}
Pflueger, M.; Tena~Cucala, D.; and Kostylev, E.
\newblock 2024.
\newblock Recurrent graph neural networks and their connections to bisimulation
  and logic.
\newblock In Woolridge, M., et~al., eds., {\em Proceedings 38th AAAI
  Conference},  14608--14616.

\bibitem[\protect\citeauthoryear{Rosenbluth and
  Grohe}{2025}]{rosenbloed-recgnn}
Rosenbluth, E., and Grohe, M.
\newblock 2025.
\newblock Repetition makes perfect: Recurrens sum-{GNN}s match message passing
  limit.
\newblock arXiv:2505.00291.

\bibitem[\protect\citeauthoryear{Sato}{2020}]{sato-gnn-survey}
Sato, R.
\newblock 2020.
\newblock A survey on the expressive power of graph neural networks.
\newblock arXiv:2003.04078.

\bibitem[\protect\citeauthoryear{Scarselli and others}{2009}]{original-gnn}
Scarselli, F., et~al.
\newblock 2009.
\newblock The graph neural network model.
\newblock {\em IEEE Transactions on Neural Networks} 20(1):61--80.

\bibitem[\protect\citeauthoryear{Walukiewicz}{2002}]{DBLP:journals/tcs/Walukiewicz02}
Walukiewicz, I.
\newblock 2002.
\newblock Monadic second-order logic on tree-like structures.
\newblock {\em Theor. Comput. Sci.} 275(1-2):311--346.

\bibitem[\protect\citeauthoryear{Wu \bgroup et al\mbox.\egroup
  }{2021}]{wuComprehensiveSurveyGraph2021}
Wu, Z.; Pan, S.; Chen, F.; Long, G.; Zhang, C.; and Yu, P.~S.
\newblock 2021.
\newblock A {{Comprehensive Survey}} on {{Graph Neural Networks}}.
\newblock {\em IEEE Transactions on Neural Networks and Learning Systems}
  32(1):4--24.

\bibitem[\protect\citeauthoryear{Zhang}{2025}]{china-gnn-survey}
Zhang, B.
\newblock 2025.
\newblock The expressive power of graph neural networks: A survey.
\newblock {\em IEEE Transactions on Knowledge and Data Engineering}
  37:1455--1474.

\end{thebibliography}

\newpage

\end{document}